\documentclass{article}

\usepackage{microtype}
\usepackage{graphicx}
\usepackage{subcaption}
\usepackage{booktabs} 

\usepackage{hyperref}

\usepackage[accepted]{icml2026}

\usepackage{amsmath}
\usepackage{amssymb}
\usepackage{mathtools}
\usepackage{amsthm}

\usepackage{enumitem}

\newtheorem{theorem}{Theorem}
\newtheorem{lemma}{Lemma}
\newtheorem{proposition}{Proposition}
\newtheorem{assumption}{Assumption}

\usepackage[capitalize,noabbrev]{cleveref}

\theoremstyle{plain}

\usepackage{bm}

\renewcommand{\hat}{\widehat}

\usepackage{tikz}
\usetikzlibrary{positioning, calc}
\usetikzlibrary{positioning, arrows.meta, shapes.geometric}
\usepackage{amsthm}
\theoremstyle{plain}

\newcounter{theoremlistno}

\newcommand{\R}{{\mathbb R}}

\newcommand{\bX}{{\mathbf X}}

\usepackage{derivative}

\newcommand{\e}{{\mathbf e}}

\newcommand{\bs}{{\mathbf s}}

\newcommand{\bW}{{\boldsymbol W}}

\newcommand{\bx}{{\boldsymbol x}}
\newcommand{\BS}{{\mathbb S}}
\newcommand{\f}{{\boldsymbol f}}

\def\va{{\bm{a}}}

\def\ve{{\bm{e}}}

\def\vp{{\bm{p}}}

\def\vs{{\bm{s}}}

\def\vu{{\bm{u}}}

\def\vx{{\bm{x}}}

\def\vz{{\bm{z}}}

\def\mC{{\bm{C}}}

\def\mE{{\bm{E}}}

\def\mU{{\bm{U}}}
\def\mV{{\bm{V}}}
\def\mW{{\bm{W}}}
\def\mX{{\bm{X}}}

\DeclareMathAlphabet{\mathsfit}{\encodingdefault}{\sfdefault}{m}{sl}
\SetMathAlphabet{\mathsfit}{bold}{\encodingdefault}{\sfdefault}{bx}{n}

\theoremstyle{definition}

\theoremstyle{remark}

\usepackage[textsize=tiny]{todonotes}

\icmltitlerunning{Faster Query-Key Learning Sharpens Attention in Self-Attention Models }

\begin{document}

\twocolumn[
  \icmltitle{ Faster Query-Key Learning Sharpens Attention in Self-Attention Models }



  \icmlsetsymbol{equal}{*}

  \begin{icmlauthorlist}
    \icmlauthor{Rahul Vashisht}{yyy}
    \icmlauthor{Harish G. Ramaswamy}{yyy,zzz}
  \end{icmlauthorlist}

  \icmlaffiliation{yyy}{Department of CSE, IIT Madras, India}
  \icmlaffiliation{zzz}{Department of DSAI, IIT Madras, India}

  \icmlcorrespondingauthor{Rahul Vashisht}{rahul@cse.iitm.ac.in}
  \icmlcorrespondingauthor{Harish G. Ramaswamy }{hariguru@cse.iitm.ac.in}

  \icmlkeywords{Machine Learning, ICML}

  \vskip 0.3in
]



\printAffiliationsAndNotice{}  

\begin{abstract}
A standard self-attention layer consists of two interacting circuits: the query-key circuit that governs attention allocation, and the output-value circuit that maps attended representations to predictions. Collapsed and factorized parameterizations of the query-key and output-value circuits lead to qualitatively different attention patterns. In particular, some parameterizations give sharper attention to task-relevant tokens, at a similar training loss. We analyze how the parameterizations of these circuits shape the parameter trajectories in single-layer self-attention models trained on next-token prediction. Through gradient-flow analysis, we show that factorization induces implicit rescaling of the two circuits' learning rates. We derive closed-form dynamics showing that output-value and query-key parameters move along a line, with relative speeds determined by their learning rates. Faster query-key learning relative to output-value learning thus produces sharper attention, as the model compensates for slower output-value learning by increasing attention mass on relevant tokens. Experiments show that differences in the relative learning rates of the two circuits govern attention concentration. This improves attention interpretability proxies while maintaining comparable predictive performance.
\end{abstract}

\section{Introduction}

Transformer models \citep{NIPS2017_3f5ee243} are now standard across language \citep{brown_language_2020}, vision \citep{Dosovitskiy2020AnII}, and speech \citep{Latif2023TransformersIS}. Their success is driven by self-attention, which produces contextual representations and is typically trained using next-token prediction. Empirical studies have shown that trained models with similar predictive performance can exhibit a wide range of attention patterns \citep{Jain2019AttentionIN, wiegreffe-pinter-2019-attention, serrano-smith-2019-attention}. Prior work has also studied formal settings in which attention weights may
or may not align with attention-based explanations~\citep{pmlr-v189-pandey23a}. However, the role of architectural and parameterization choices in the emergence of such attention patterns remains unclear.

Existing theoretical works have established that transformers are highly expressive, including universality and Turing completeness under suitable parameter settings \citep{Yun2019AreTU, bhattamishra-etal-2020-ability, bhattamishra-etal-2020-computational, Dehghani2018UniversalT, 10.5555/3546258.3546333}. These results, however, rely on idealized parameter constructions and therefore do not explain how attention structure arises under standard gradient-based optimization. As a result, expressivity alone provides limited insight into how attention mechanisms and prediction layers co-evolve during training.

In this work, we analyze the training dynamics of a single-layer transformer trained for next-token prediction \citep{Radford2018ImprovingLU}. The model decomposes naturally into two interacting linear circuits \citep{elhage2021mathematical,olsson2022context,10.5555/3666122.3666199}:
\begin{itemize}
    \item the \textbf{query-key circuit} ($\bW_K \bW_Q^\top$), which determines the attention pattern, and
    \item the \textbf{output-value circuit} ($\bW_O \bW_V$), which maps attended representations to predictions.
\end{itemize}

\begin{figure*}[!ht]
   \begin{subfigure}[b]{0.48\textwidth}
    \includegraphics[width=\textwidth]{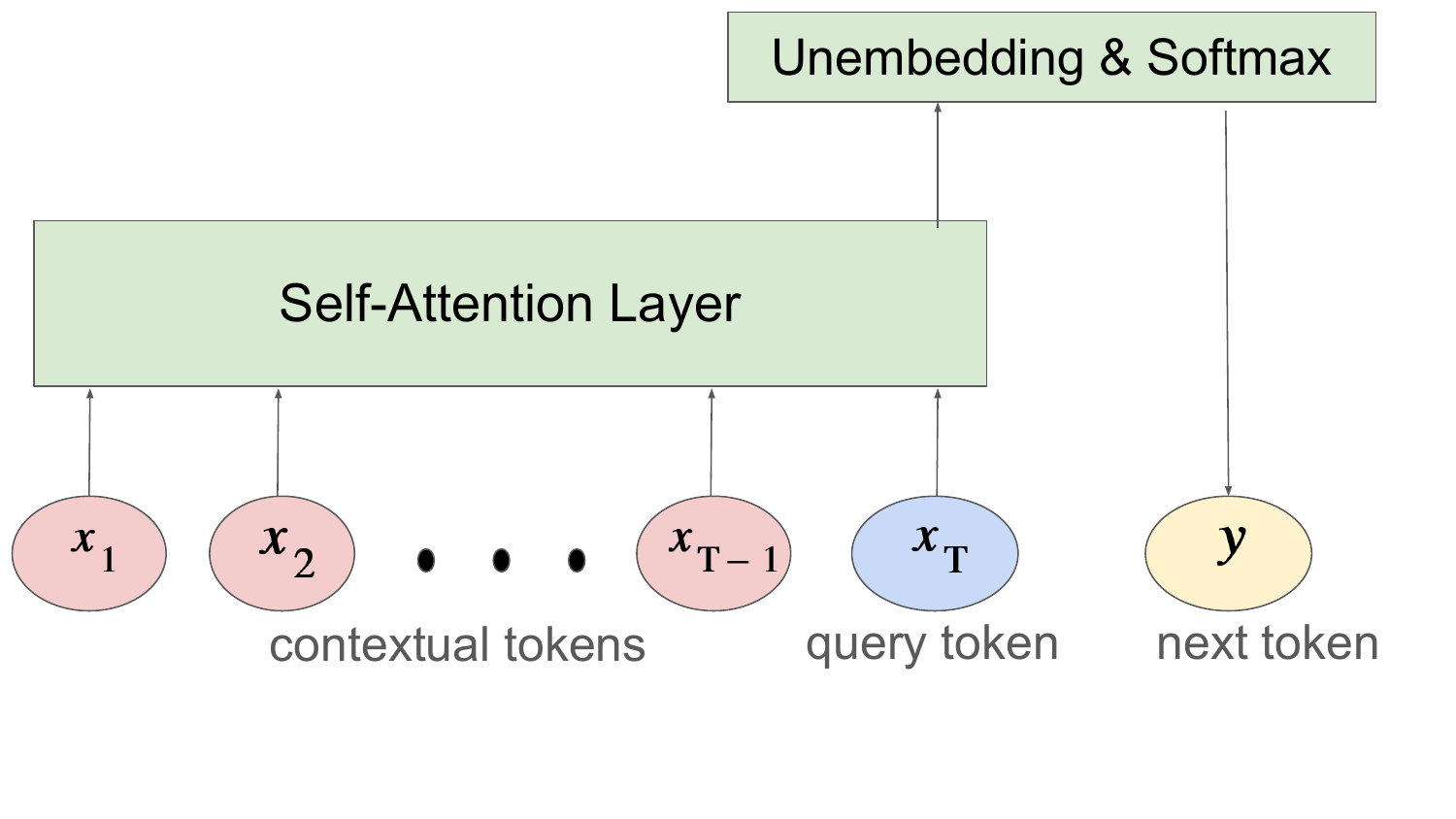}
       \caption{Single Layer Self-Attention Model Illustration}
        \label{fig1:subfig1}
    \end{subfigure}
    \begin{subfigure}[b]{0.5\textwidth}
    \includegraphics[scale=0.38]{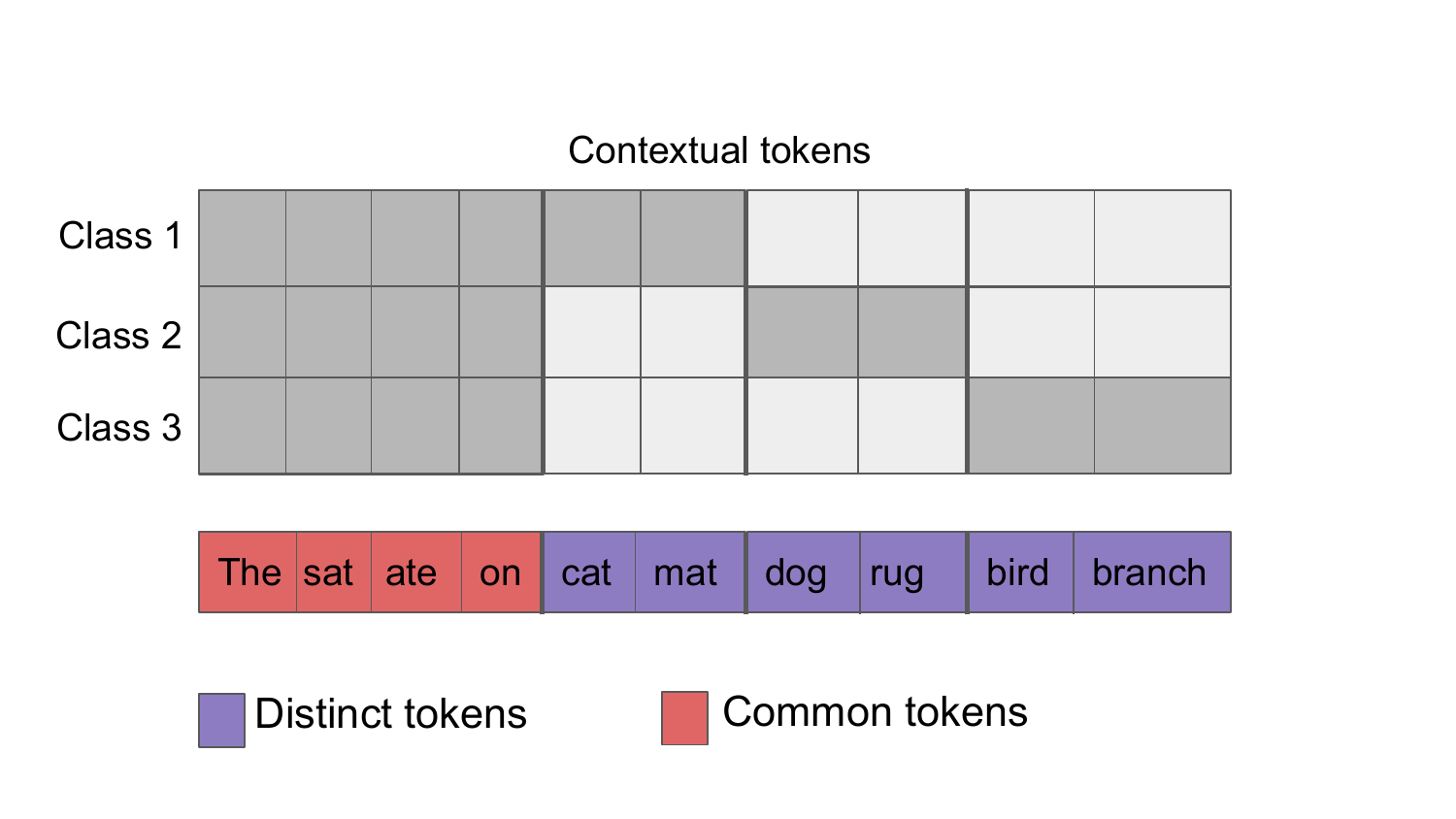}
       \caption{Synthetic Data Illustration}
        \label{fig1:subfig2}
    \end{subfigure}
    \caption{  Overall Setting: (a) A sequence $\bX$ with contextual tokens and query token$\{\bx_1, \bx_2, \ldots, \bx_{T-1}, \bx_{T} \}$    is fed into $1-$ layer transformer to predict the next token $y=\bx_{T+1}$ (b) Synthetic data illustration, For each sequence class there exists a set of tokens specific to that class, called distinct tokens and a set of tokens common to all sequence classes called common tokens. For example `the' `sat' `ate' and `on' are the common tokens, and `cat' `mat' are tokens distinct to class 1}
    \label{fig1:my_label}

\end{figure*}

We show that the relative optimization speeds of these two circuits play a central role in shaping attention structure. Varying the learning rates of the query-key and output-value parameters leads to qualitatively different attention patterns, even when predictive performance remains similar. In particular, faster learning of the query-key circuit leads to sharper attention on task-relevant tokens.

A key focus of this paper is the role of parameterization. We compare factorized and collapsed parameterizations of both the query-key and output-value circuits and show that factorization induces a state-dependent rescaling of gradient updates in the corresponding collapsed parameter space. As a result, factorized and collapsed models can follow different optimization trajectories, even when they achieve comparable losses.

We then characterize the training dynamics under collapsed parameterizations in a controlled synthetic setting. Under orthogonality and symmetry assumptions, we derive closed-form population gradient-flow dynamics for both circuits and show that their effective parameters evolve at different rates. In particular, the output-value scalar grows on the order of $\log(t)$, while the query-key scalar grows on the order of $\log^2(t)$, with constants controlled by the learning-rate ratio $\eta_{QK}/\eta_{OV}$. Because the query-key circuit affects attention through a saturating nonlinearity, this difference in growth provides a mechanistic explanation for how relative learning speeds shape attention sharpening during training.

We support the theory with experiments on synthetic data and real-world datasets with token-level relevance annotations, including HateXplain \citep{Mathew2020HateXplainAB}, subject-verb agreement \citep{linzen-etal-2016-assessing}, and SQuAD \citep{rajpurkar-etal-2016-squad}. Across settings, we observe that changes in relative optimization speed and parameterization alter attention structure in ways predicted by the theory, while predictive performance remains largely unchanged.

\section{Related Works}
\label{related_works}
\begin{figure*}[!ht]
  \centering
     \begin{subfigure}[b]{0.22\textwidth}
    \includegraphics[width=1.1\textwidth]{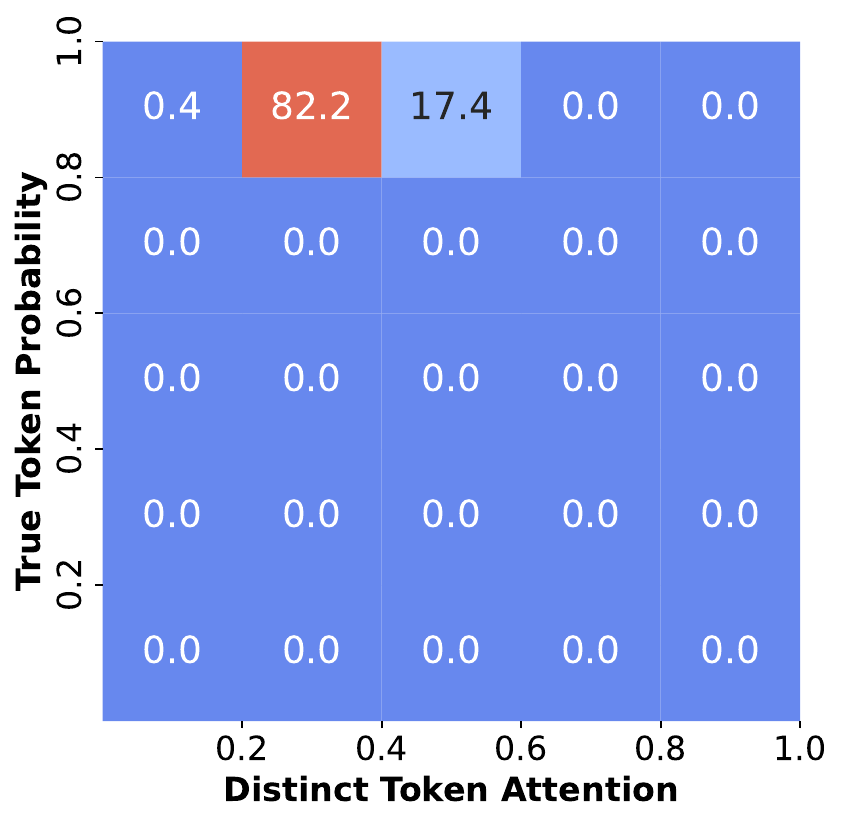}
       \caption{ FAFO}
        \label{dtap_syn:my_label1}
    \end{subfigure}
    \hspace{0.15cm}
   \begin{subfigure}[b]{0.22\textwidth}
    \includegraphics[width=1.1\textwidth]{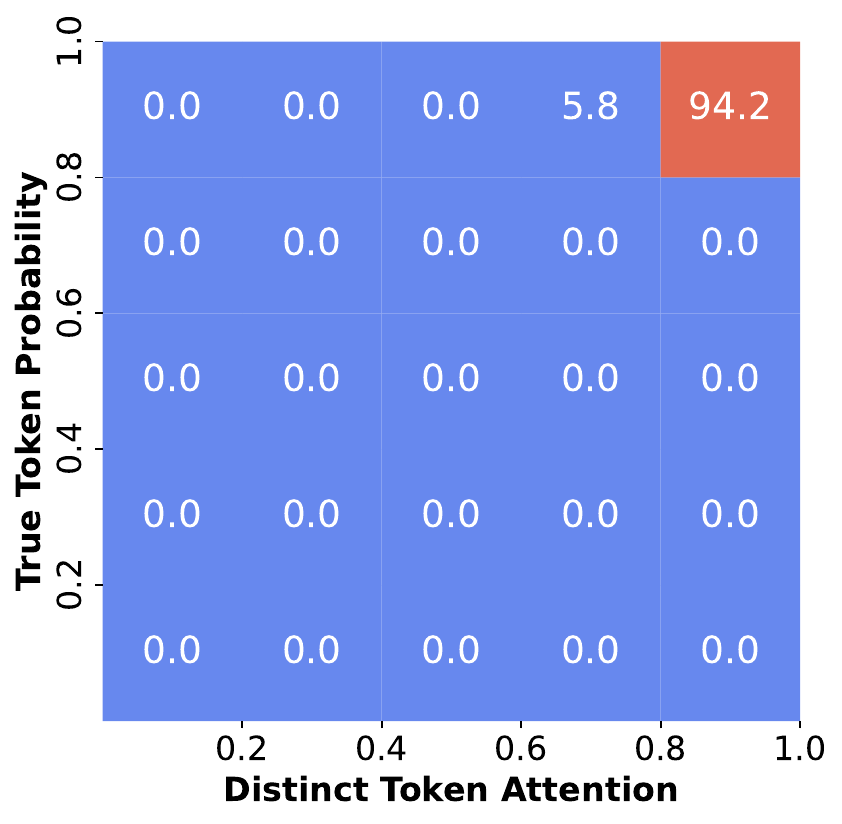}
       \caption{ FACO}
        \label{dtap_syn:my_label2}
    \end{subfigure}
    \hspace{0.15cm}
   \begin{subfigure}[b]{0.22\textwidth}
    \includegraphics[width=1.1\textwidth]{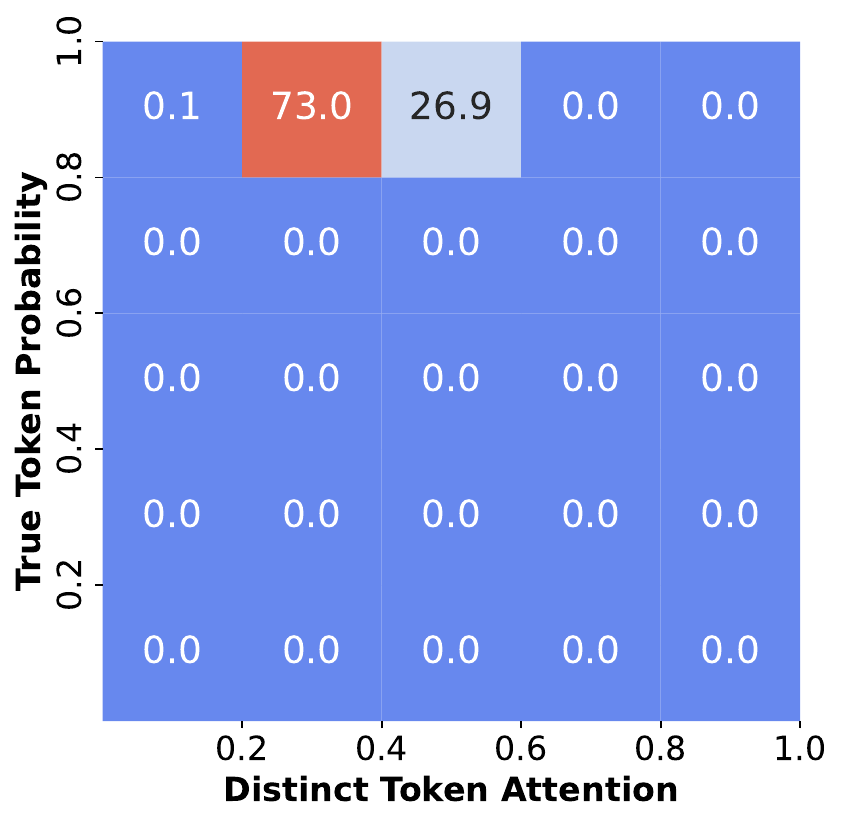}
       \caption{
 CAFO}
        \label{dtap_syn:my_label3}
    \end{subfigure}
    \hspace{0.15cm}
    \begin{subfigure}[b]{0.22\textwidth}
    \includegraphics[width=1.1\textwidth]{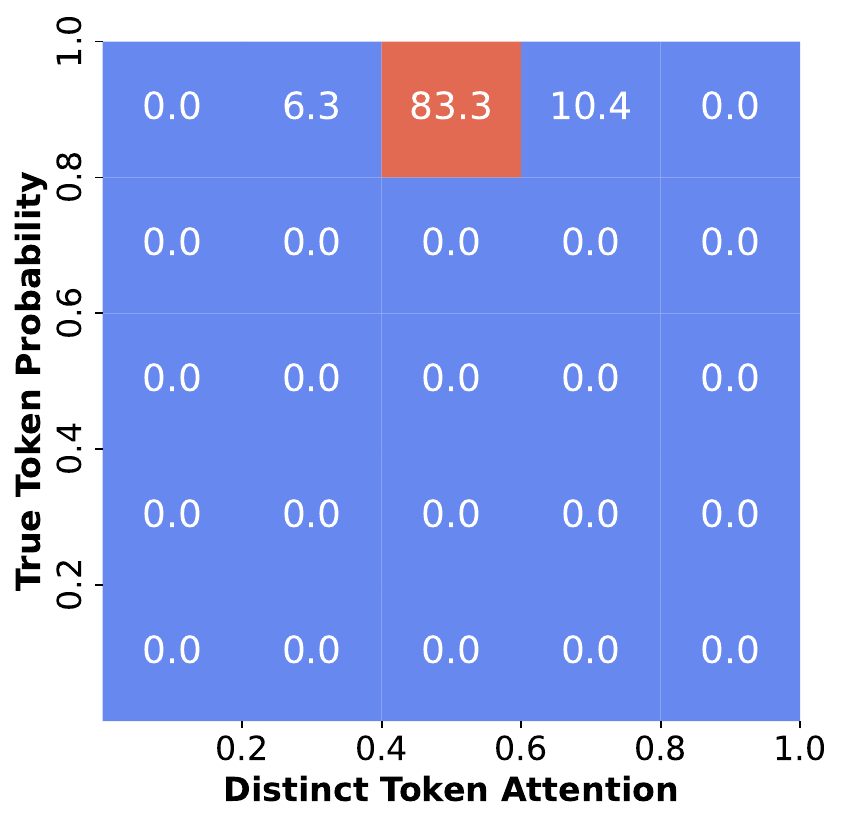}
       \caption{ CACO}
        \label{dtap_syn:my_label4}
    \end{subfigure}
    \begin{subfigure}[b]{0.22\textwidth}
    \includegraphics[width=1.1\textwidth]{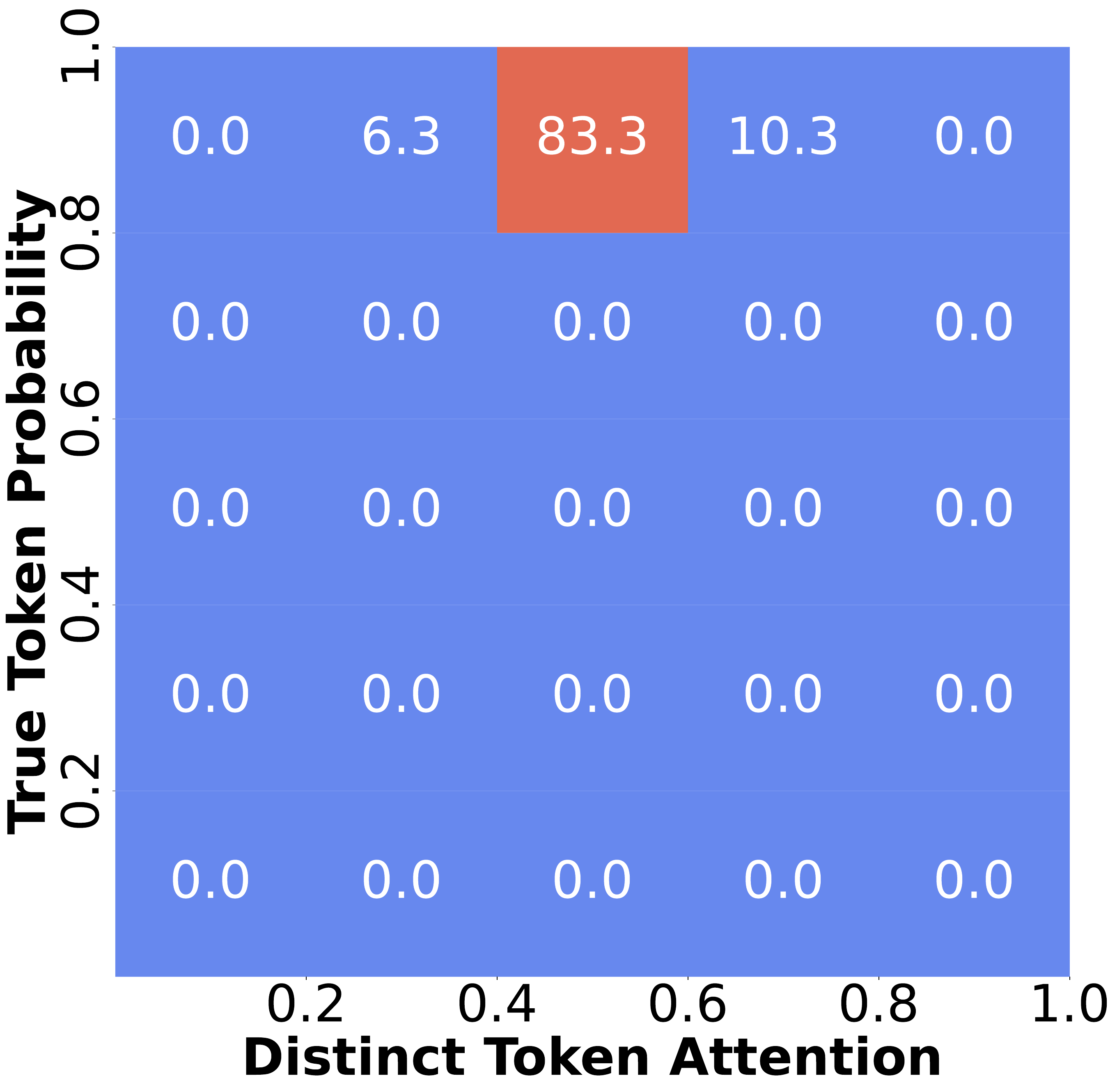}
       \caption{FAFO, QK learning rate increased $10 \times$ }
        \label{dtap_syn:my_label5}
    \end{subfigure}
    \hspace{0.15cm}
   \begin{subfigure}[b]{0.22\textwidth}
    \includegraphics[width=1.1\textwidth]{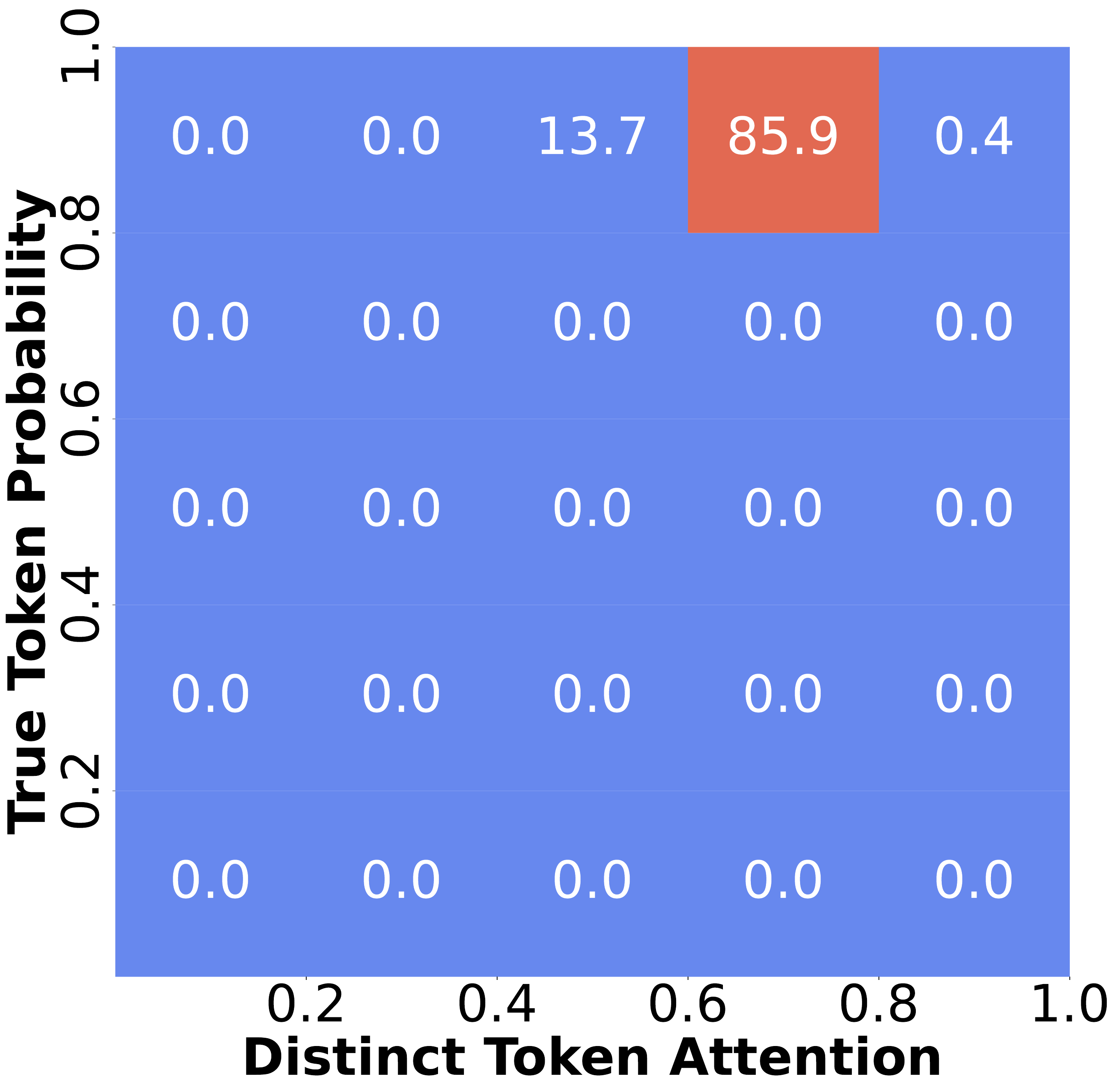}
       \caption{FAFO, QK learning rate increasing linearly $1-20 \times$}
        \label{dtap_syn:my_label6}
    \end{subfigure}
    \hspace{0.15cm}
   \begin{subfigure}[b]{0.22\textwidth}
    \includegraphics[width=1.1\textwidth]{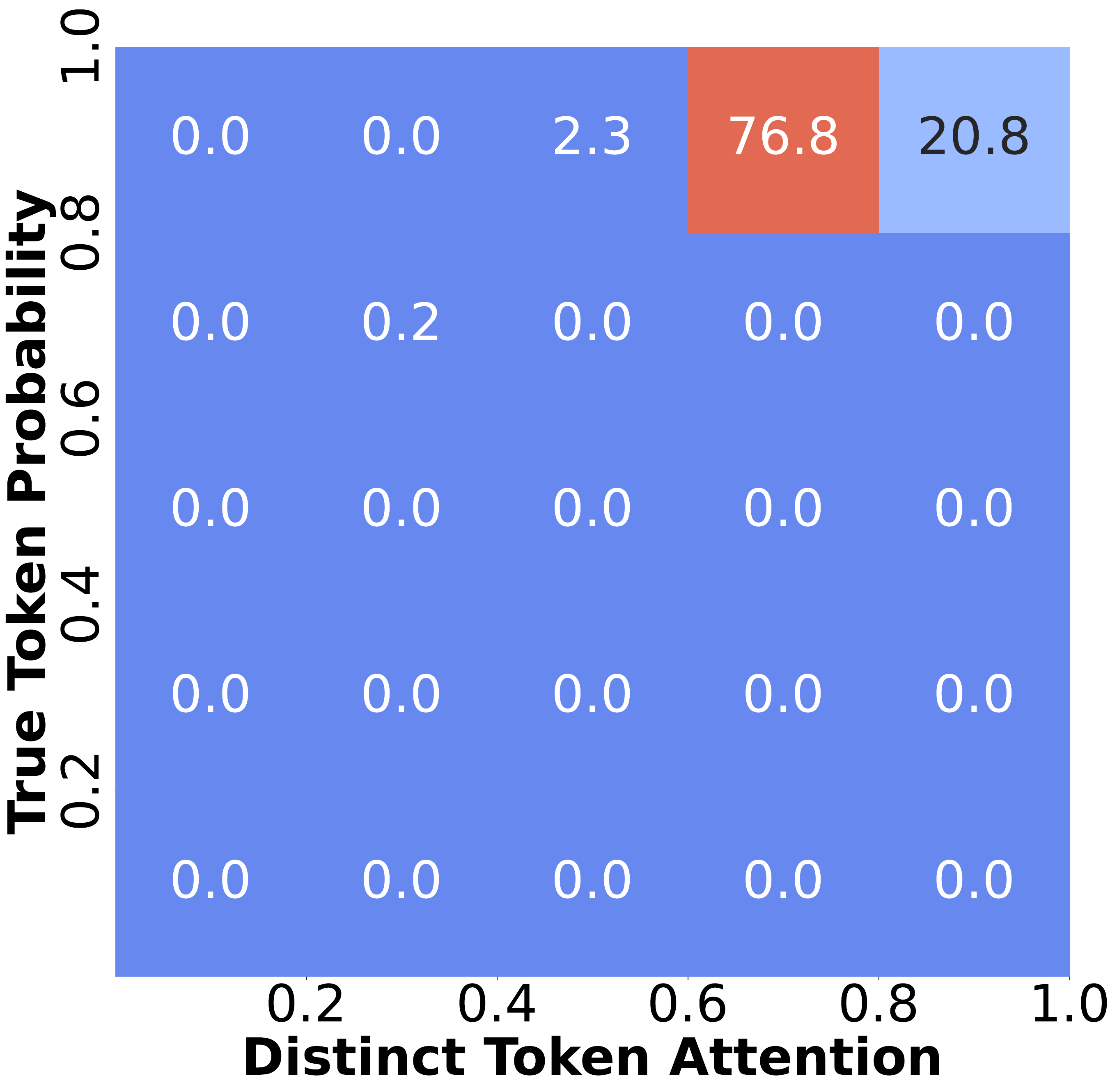}
       \caption{CACO, QK learning rate increased $5 \times$}
        \label{dtap_syn:my_label7}
    \end{subfigure}
    \hspace{0.15cm}
    \begin{subfigure}[b]{0.22\textwidth}
    \includegraphics[width=1.1\textwidth]{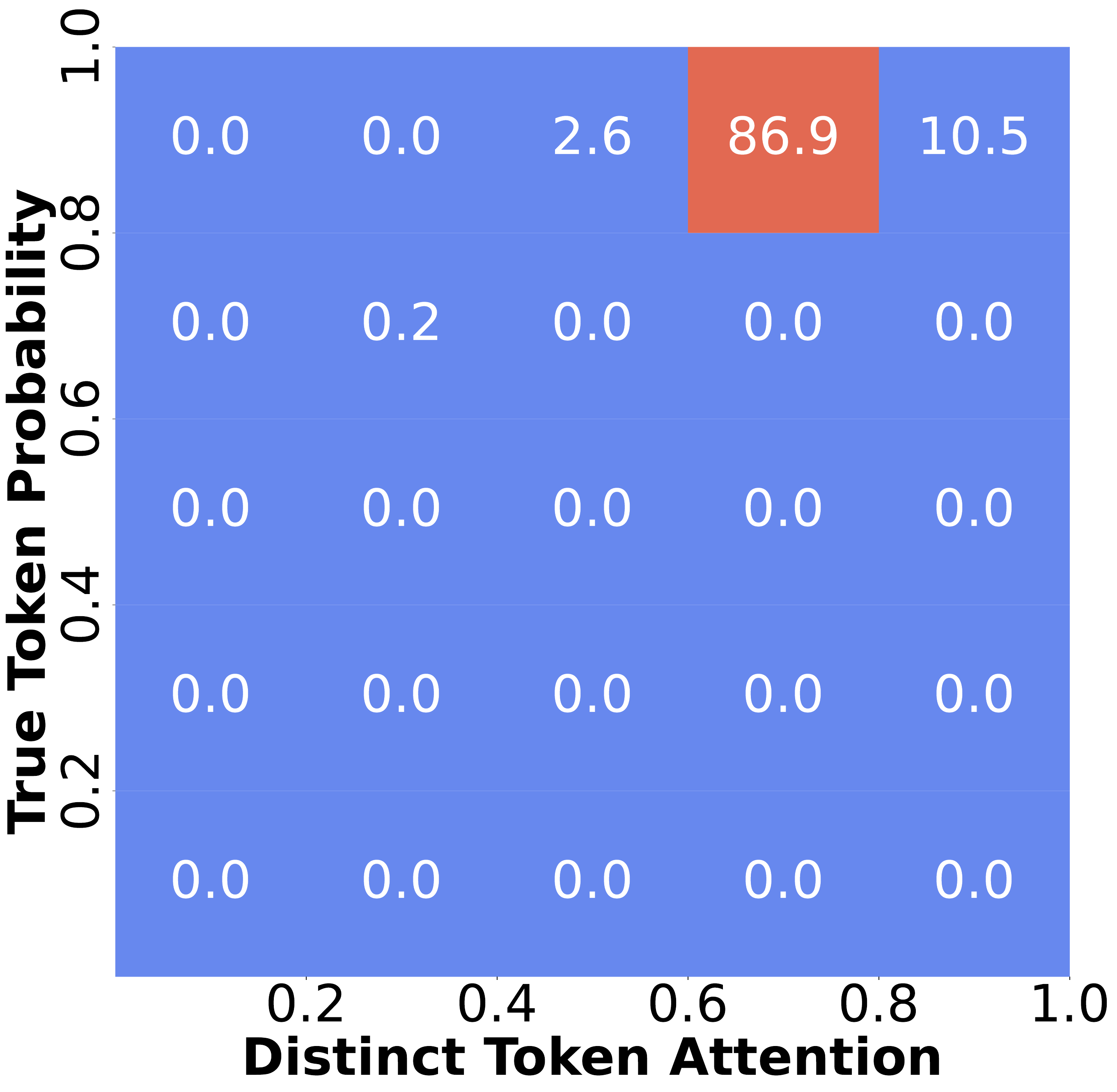}
       \caption{CACO, QK learning rate increasing linearly $1-10 \times$}
        \label{dtap_syn:my_label8}
    \end{subfigure}

    \caption{ Distinct Token Attention-Prediction heat map of test data for different parameterizations under SGD with same learning rates (Top-Row) and Faster Learning for Query-Key (Bottom-Row). All models are trained until the negative log-likelihood is close to zero ($\leq 0.001$). Here, we train the model for $5$ random seeds and report the average, refer to appendix for standard deviation in the heatmaps.   }
    \label{dtap_syn:my_label}

\end{figure*}

Many recent works have studied the optimization dynamics of attention and self-attention-based models, particularly in simplified or single-layer settings~\citep{10.5555/3666122.3669269,deora2024on,lu2021on,Vashisht2023OnTL}. \citet{10.5555/3666122.3669269} analyze attention dynamics under the assumption that output–value parameters learn faster than query–key parameters. \citet{deora2024on} study the optimization and generalization properties of multi-head self-attention models under realizability assumptions on the data. Across these analyses, optimization dynamics are typically examined under fixed or implicit assumptions about the relative learning behavior of attention components. In contrast, we vary the relative learning rates of the query–key and output–value circuits and show that faster learning of query–key parameters sharpens attention. While prior work often focuses on single-sample or batch-size-one regimes for analytical tractability, we analyze full-batch training dynamics that more closely reflect modern practice. Moreover, we extend the analysis beyond the attention mechanism to the final prediction layer, providing a unified view of how interacting circuits shape the training trajectory.

A related line of work adopts a mechanistic perspective on transformer models. The transformer circuits framework of \citet{elhage2021mathematical} provides a systematic decomposition of attention layers into interacting components, which has motivated subsequent analyses of how structure emerges in trained models, including clustering behavior in attention patterns \citep{10.5555/3666122.3668615,10.5555/3737916.3741589}, memory-like mechanisms \citep{10.5555/3666122.3666199}, and the emergence of topic structure \citep{10.5555/3618408.3619221}. These works offer valuable insight into the functional roles of attention heads and learned representations after training. However, they primarily characterize attention structure post hoc and do not analyze how optimization dynamics shape the relative evolution of interacting circuits during training. Our work instead focuses on how the training trajectories of the query–key and output–value circuits jointly determine attention sharpening under gradient-based optimization.

\section{Problem Setting}
\label{problem_setting}

We study a single-layer self-attention model trained for next-token prediction. Let $\bX = [\bx_1, \ldots, \bx_T]^\top \in \R^{T \times d}$ denote an input sequence, where $\bx_T$ is the query token. The model parameters are $\bW_Q, \bW_K, \bW_V \in \R^{d \times d}$ and $\bW_O \in \R^{M \times d}$, where $M$ is the vocabulary size.

Given a sequence $\bX$, the model output is
\begin{equation}
\f(\bX) = \bW_O \bW_V \bX^\top \BS\!\left(\bX \bW_K \bW_Q^\top \bx_T\right),
\end{equation}
where $\BS(\cdot)$ denotes the softmax operator. Figure~\ref{fig1:subfig1} illustrates this computation for a single-layer self-attention model.

The model decomposes naturally into two linear circuits. The query-key circuit $\bW_K \bW_Q^\top$ determines the attention distribution over tokens, while the output-value circuit $\bW_O \bW_V$ maps the attended representation to prediction logits. We study four parameterizations that differ in whether these circuits are factorized or collapsed:
\begin{itemize}
\item \textbf{FAFO:} factorized attention and factorized output-value.
\item \textbf{CAFO:} collapsed attention ($\bW_{QK}=\bW_K \bW_Q^\top$) and factorized output-value.
\item \textbf{FACO:} factorized attention and collapsed output-value ($\bW_{OV}=\bW_O \bW_V$).
\item \textbf{CACO:} collapsed attention and collapsed output-value.
\end{itemize}

Let $\mE = [\e_1,\ldots,\e_M]^\top \in \R^{M \times d}$ denote the embedding matrix. Given a sequence $\bX$ with next-token label $y \in [M]$, the training objective is to minimize the negative log-likelihood
\begin{equation}
\mathcal{L} = -\log \BS_y(\f(\bX)).
\end{equation}

To analyze attention behavior in a controlled setting, we adopt a synthetic data construction based on \citet{10.5555/3666122.3669269}. Each sequence class is associated with a fixed query and next-token pair $(q,n)$, where the next token $n$ uniquely identifies the class. Context tokens preceding the query are sampled from a mixture of class-specific \emph{distinct} tokens and class-agnostic \emph{common} tokens, as illustrated in Figure~\ref{fig1:subfig2}. 

For example, consider three sequence classes:
\begin{itemize}
    \item \textbf{Class 1}: ``cat sat on the mat'', ``cat ate on the mat'',
    \item \textbf{Class 2}: ``dog sat on the rug'', ``dog ate on the rug'',
    \item \textbf{Class 3}: ``bird sat on the branch'', ``bird ate on the branch''.
\end{itemize}
Here, tokens such as ``sat'', ``ate'', and ``on'' appear across all classes and are common, while tokens such as ``cat'', ``mat'', ``dog'', and ``rug'' are class-specific and distinct.

Formally, define $\Omega(l) = \{n : \mathbb{P}(l \mid n) > 0\}$. Tokens with $|\Omega(l)|=1$ are distinct, while tokens with $|\Omega(l)|>1$ are common. Context tokens are sampled according to
\[
\mathbb{P}(l \mid n) =
\begin{cases}
\gamma / C, & |\Omega(l)| = 1, \\
(1-\gamma) / D, & |\Omega(l)| > 1, \\
0, & \text{otherwise},
\end{cases}
\]
where $C$ and $D$ denote the number of distinct and common tokens, and $\gamma \in (0,1)$ controls the mixture weight. This construction separates informative and non-informative tokens and enables precise analysis of how attention and prediction evolve under different parameterizations and optimization speeds.

\section{Self-Attention Analysis under Different Parameterizations}
\label{empirical_analysis}

We identify an empirical phenomenon that motivates our theory. Different
self-attention parameterizations exhibit qualitatively different attention
patterns even when trained under identical optimization settings and achieving
comparable predictive performance.

We consider a synthetic next-token prediction task with four sequence classes. Each sequence contains $m$ class-specific tokens, which are informative for prediction, $n$ common tokens have no class information, and a query token.  


To assess alignment between attention and prediction, we introduce the
Distinct Token Attention–Prediction (DTAP) heatmap. DTAP is a two-dimensional
histogram in which the x-axis represents the fraction of attention assigned to
class-specific tokens and the y-axis represents the predicted probability of
the correct class. Each bin reports the fraction of examples that fall
within the corresponding attention and confidence ranges. An ideal model
concentrates most mass in the top-right corner, indicating both strong attention
on informative tokens and high prediction confidence.

We compare four parameterizations: factorized attention–factorized output
(FAFO), collapsed attention–collapsed output (CACO), collapsed
attention–factorized output (CAFO), and factorized attention–collapsed output
(FACO). Figure~\ref{dtap_syn:my_label}(a–d) shows that all parameterizations
achieve high prediction confidence, yet their attention distributions differ
substantially.

\begin{itemize}
  \item Parameterizations with collapsed output–value structure (CACO and FACO)
  concentrate mass near the top-right region of the DTAP heatmap.
  \item The factorized output–value parameterization (FAFO) places substantial
  mass in the top-left region, corresponding to correct predictions made with
  weak attention on informative tokens.
  \item Increasing the learning rate of the query–key parameters shifts mass
  toward the top-right region across all parameterizations.
\end{itemize}

As shown in Figure~\ref{dtap_syn:my_label}(e–h), accelerating query–key learning
causes the attention patterns of FAFO and CACO to resemble those of FACO. In
particular, attention mass shifts toward distinct tokens without degrading
prediction accuracy. This behavior is observed under both fixed and gradually
increasing query–key learning rates, indicating that relative optimization
speed governs attention alignment beyond parameterization alone. These
observations motivate the optimization analysis in
Section~\ref{optim-trajectory}.

The dataset for this task was generated as follows. We consider $4$ sequence classes, with $4$ query and corresponding next tokens. For each query token, we define a set of $10$ distinct tokens. We also have a set of common tokens with cardinality $10$.  Note that the position (or index) of the distinct and common tokens can be arbitrary. We sample multiple such triples (context tokens, query token, next token), train a single-layer self-attention model on a subset of this dataset, and evaluate it on the remaining data.  The vocabulary size is $50$ tokens and sequence length is $64$. We train the model with $200$ batches with batch size $32$ using stochastic gradient descent.

\section{Isolating Output–Value Dynamics by Fixing Attention}
\label{dynamics_sec}

\begin{figure*}[!ht]
  \centering
     \begin{subfigure}[b]{0.22\textwidth}
    \includegraphics[width=1.\textwidth]{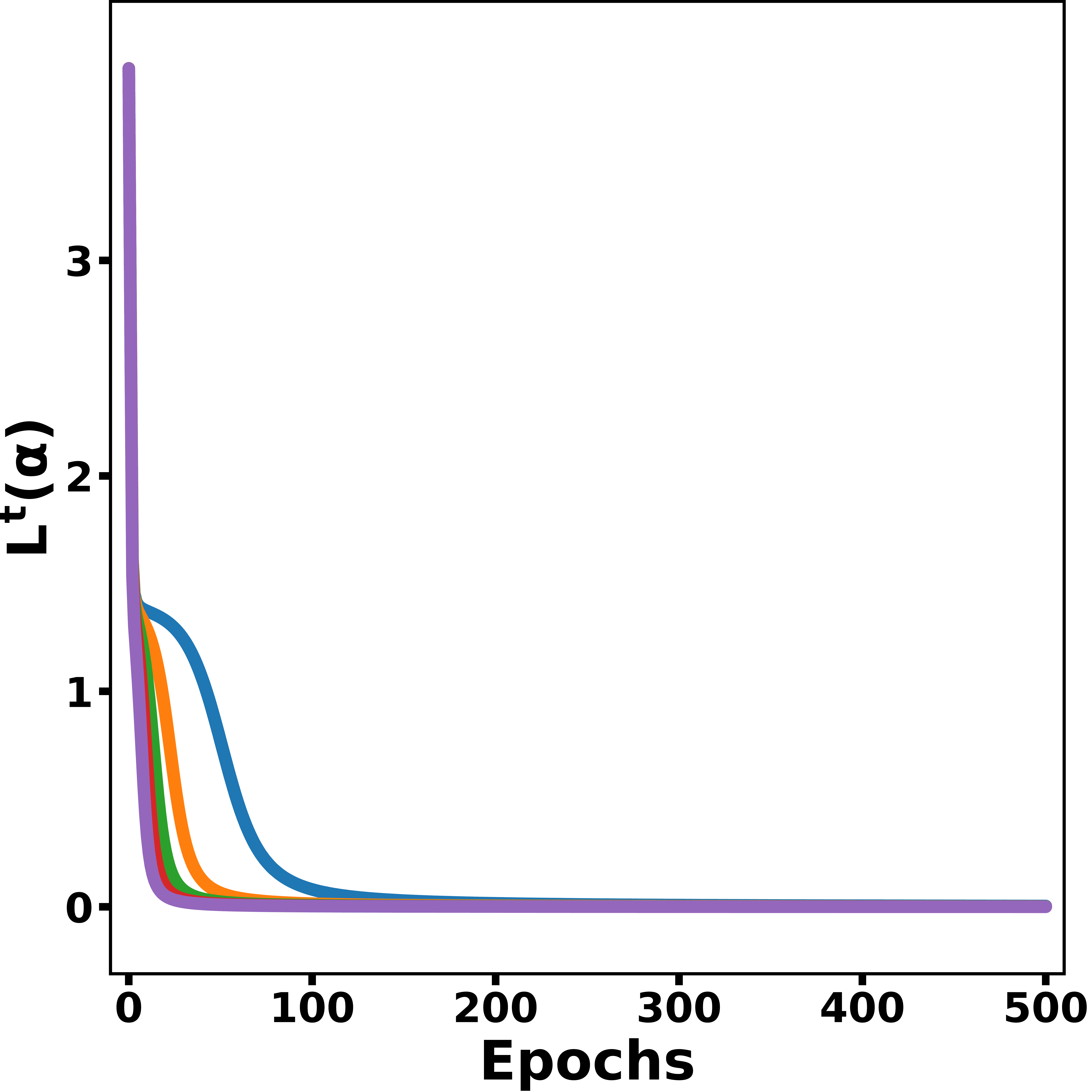}
       \caption{ Factorized Output-Value }
        \label{incnt:subfig1}
    \end{subfigure}
   \begin{subfigure}[b]{0.22\textwidth}
    \includegraphics[width=1.\textwidth]{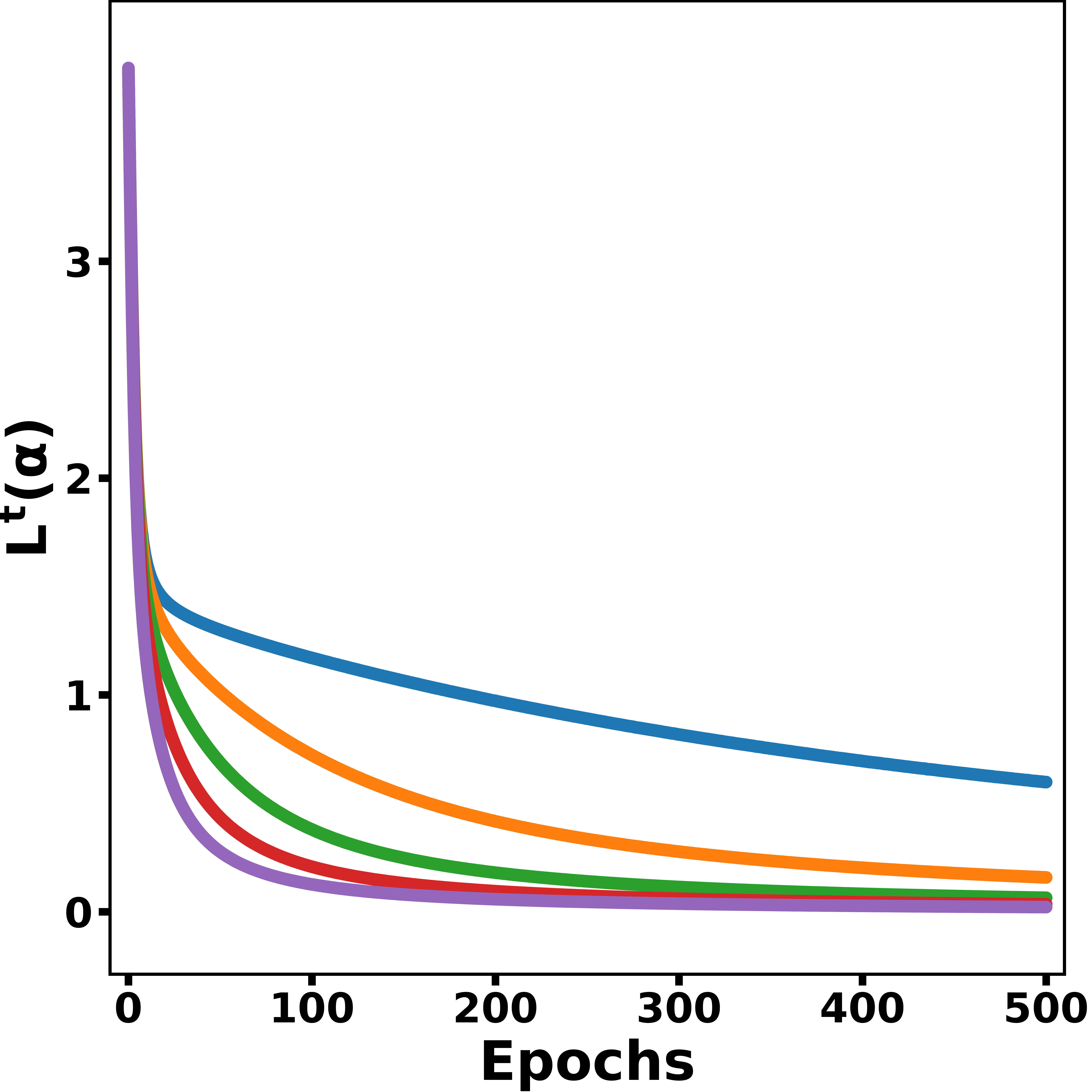}
       \caption{ Collapsed Output-Value }
        \label{incnt:subfig2}
    \end{subfigure}
   \begin{subfigure}[b]{0.23\textwidth}
    \includegraphics[width=1.\textwidth]{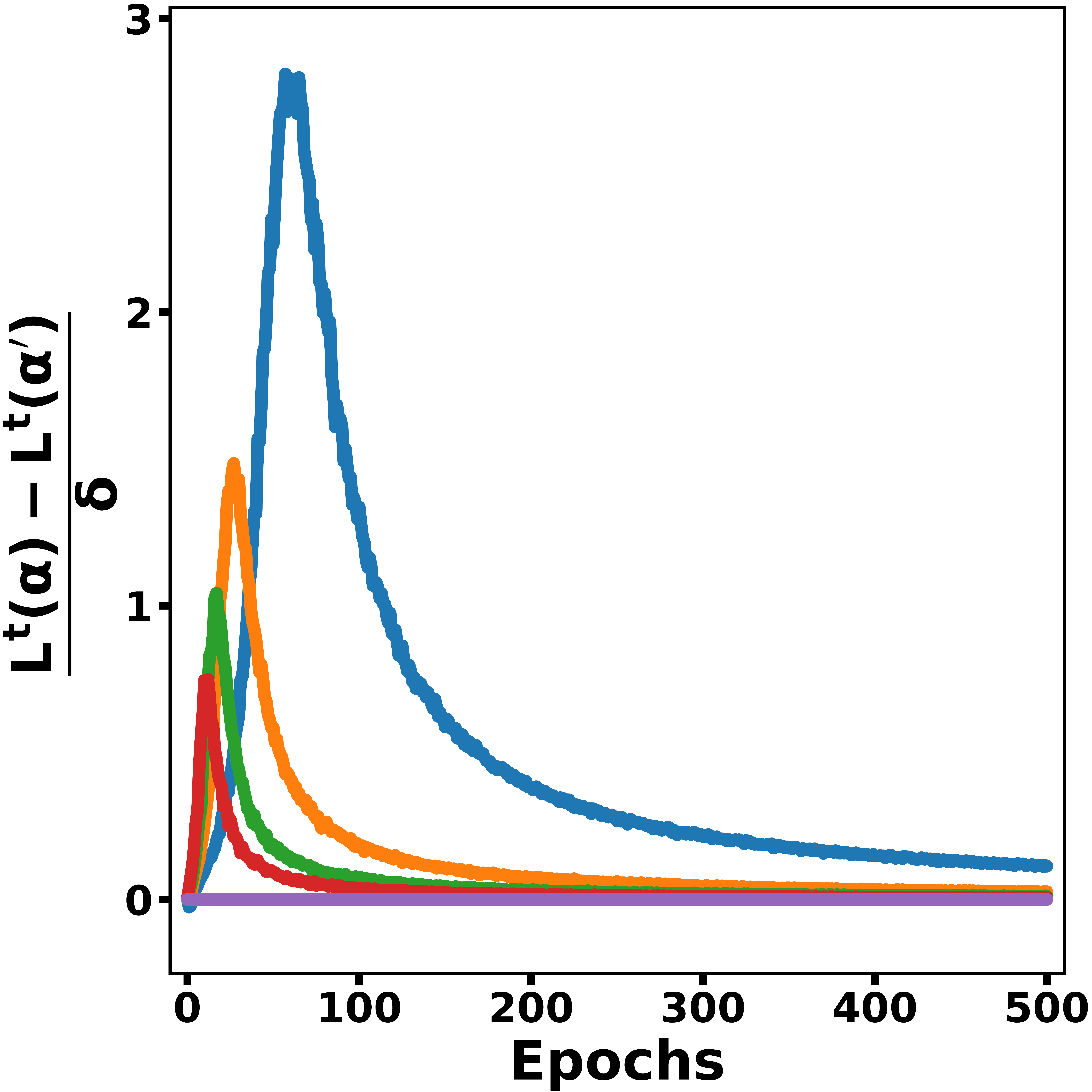}
       \caption{ Factorized Output-Value }
        \label{incnt:subfig3}
    \end{subfigure}
    \begin{subfigure}[b]{0.23\textwidth}
    \includegraphics[width=1.\textwidth]{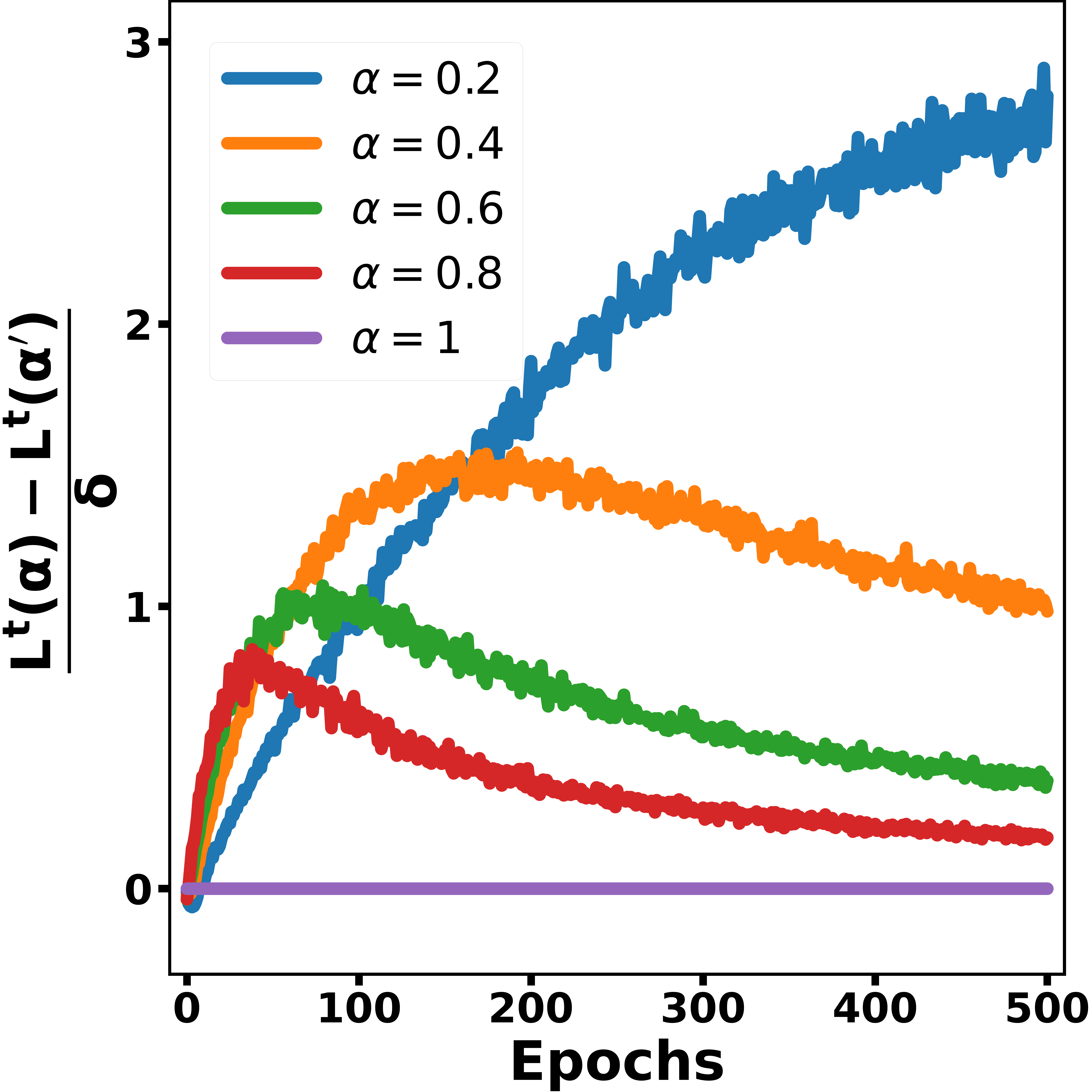}
       \caption{Collapsed Output-Value }
        \label{incnt:subfig4}
    \end{subfigure}
    
    \caption{ (a) and (b)  Evolution of the training loss for fixed values of $\alpha$ under gradient descent.  (c) and (d) show improvement incentive curves, measuring the change in loss resulting from a small increase $\delta = 0.005$ in $\alpha$. }
    \label{incnt:my_label}
\end{figure*}

The experiments in Section~\ref{empirical_analysis} show that both parameterization and relative learning rates influence attention allocation. To isolate the source of this effect, we analyze the learning dynamics of the output–value circuit while holding attention over distinct tokens fixed.

We consider a query–key circuit that assigns a total attention mass $\alpha \in [0,1]$ to the distinct tokens, distributed uniformly among them, with the remaining mass assigned uniformly to the common tokens. Although fixing attention in this manner is not a realistic training regime, it serves as a diagnostic intervention that isolates the optimization behavior of the output–value circuit.

Under fixed $\alpha$, training the output–value parameters reduces to a multi-class classification problem with an effective input representation
\[
\mathcal{L}(\alpha)
=
-\log\Big( \mathrm{softmax}_y\big( \mW_O \mW_V \widetilde{\vx} \big) \Big),
\]
where
\[
\widetilde{\vx}
=
\frac{\alpha}{m} \sum_{j \in S_{\text{distinct}}} \vx_j
+
\frac{1 - \alpha}{n} \sum_{j \in S_{\text{common}}} \vx_j .
\]

As $\alpha$ increases, the effective input $\widetilde{\vx}$ becomes more informative, simplifying the optimization problem faced by the output--value circuit. We define the fixed-attention loss curve as the training loss evaluated under gradient descent while holding the attention mass $\alpha$ fixed for distinct tokens. To quantify the incentive for attention sharpening, we measure the change in loss induced by a small increase in attention mass, $\delta = 0.005$, evaluated at a fixed training time.

Under the factorized output-value parameterization, the loss reaches near-zero at $\alpha = 0.2$ (blue curve in Figure~\ref{incnt:my_label}a). The improvement incentive curve in
Figure~\ref{incnt:my_label}c shows a positive incentive for attention sharpening for up to $100$ epochs, allowing $\alpha$ to increase to $0.4$. However, for $\alpha \geq 0.4$ (orange curve), the incentive rapidly diminishes. As a consequence, attention tends to stabilize at values such as $\alpha = 0.4$.

In contrast, under the collapsed output-value parameterization, the loss does not decrease to zero as quickly for $\alpha \geq 0.2$ and $\alpha \leq 0.8$ (blue, orange, green, and red curves in Figure~\ref{incnt:my_label}b). As a result, the incentive for increasing $\alpha$ remains high over a wider range, allowing attention to continue sharpening.

Taken together, the experiments in Sections~\ref{empirical_analysis} and \ref{dynamics_sec} reveal an empirical pattern. Architectural parameterization shapes the relative learning dynamics of the query–key and output–value circuits, which in turn determines whether attention continues to sharpen or stabilizes early. These effects cannot be explained by predictive performance alone and persist across learning-rate interventions.

In the next section, we develop a theoretical framework that explains the observations by analyzing how different parameterizations induce distinct optimization trajectories under gradient flow.

\section{Optimization Trajectories under Different
Parameterization}

\label{optim-trajectory}

Theoretical analyses of self-attention often simplify optimization dynamics \citep{deora2024optimizationgeneralizationmultiheadattention, Li2024MechanicsON, 10.5555/3692070.3692186} by working with collapsed parameter matrices, rather than explicitly modeling the factorized query–key and output–value parameters \citep{elhage2021mathematical, olsson2022context}. This simplification enables tractable analysis, but it is not a priori clear whether it faithfully reflects the training dynamics of factorized models. Empirically, as shown in Section~\ref{empirical_analysis}, factorized and collapsed parameterizations can achieve comparable predictive performance while exhibiting qualitatively different attention behaviors.

In a single-layer self-attention model, each circuit consists of a composition of linear transformations with no intervening nonlinearity. Thus, the optimization dynamics of the query-key and output-value circuits fall within the class of homogeneous linear models for which conservation laws under gradient flow hold \citep{Arora2018OnTO, 10.5555/3326943.3326979}. Building on results from deep linear network theory, we show that factorized training induces a well-defined dynamics on the collapsed parameters, differing from collapsed training only through a state-dependent preconditioning operator.

Let \( \vz = \mW_{OV}\mX^\top \boldsymbol{\va} \) and
\( \widetilde{\vz} = \mW_O \mW_V \mX^\top \boldsymbol{\va} \).
We define the loss functions \(
\mathcal{L}^{1}(\mW_{OV}) = -\log\, \BS_y(\vz), ~~
\mathcal{L}^{2}(\mW_O,\mW_V) = -\log\, \BS_y(\widetilde{\vz}),
\)
where the attention weights are given by $\boldsymbol{\va}
= \BS\!\left(\mX \mW_K \mW_Q^\top \vx_T\right)$.

\begin{figure*}[!ht]
  \centering
      \begin{subfigure}[b]{0.23\textwidth}
    \includegraphics[width=\textwidth]{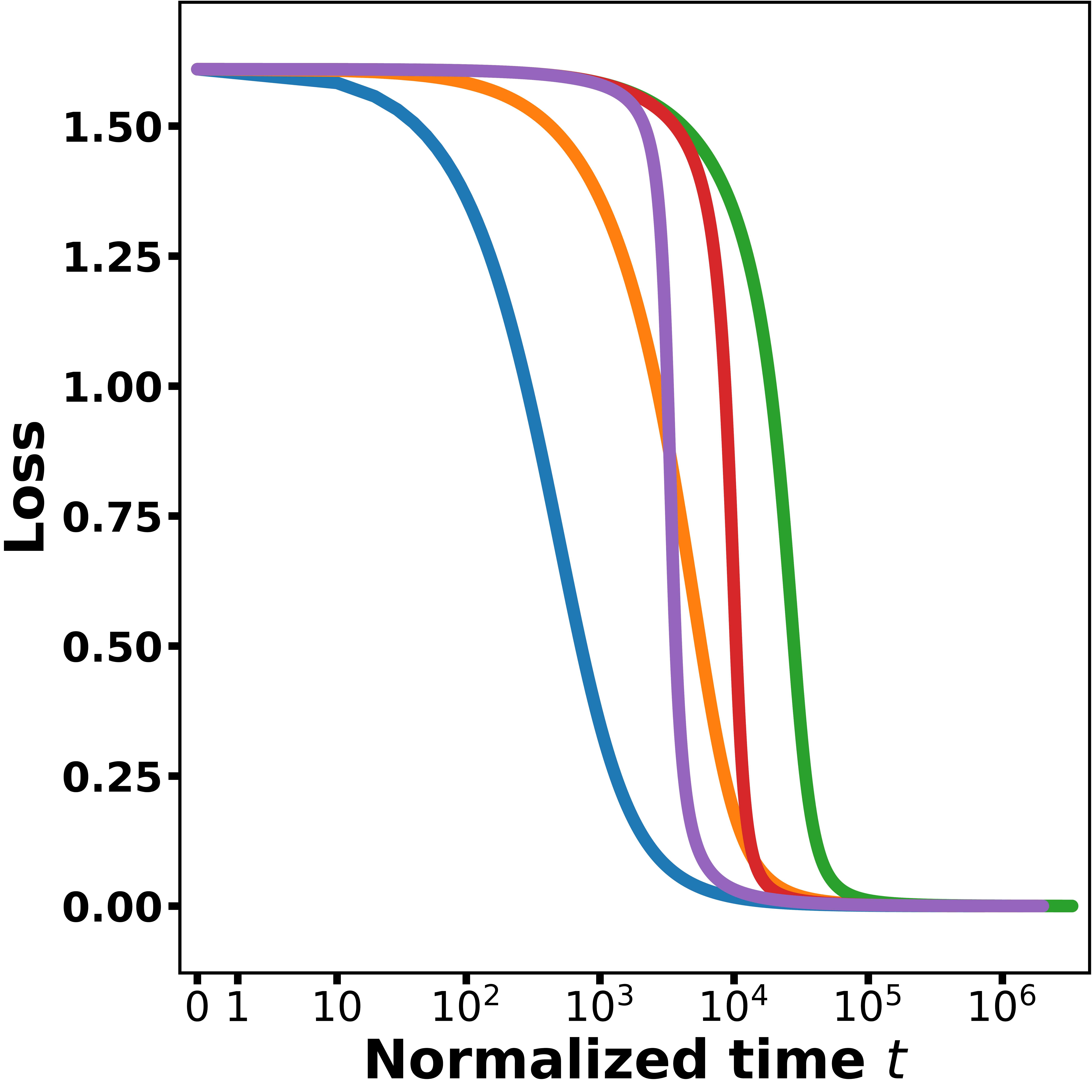}
       \caption{Log-Loss Curves}
        \label{fig-sim:subfig1}
    \end{subfigure}
    \begin{subfigure}[b]{0.23\textwidth}
    \includegraphics[width=\textwidth]{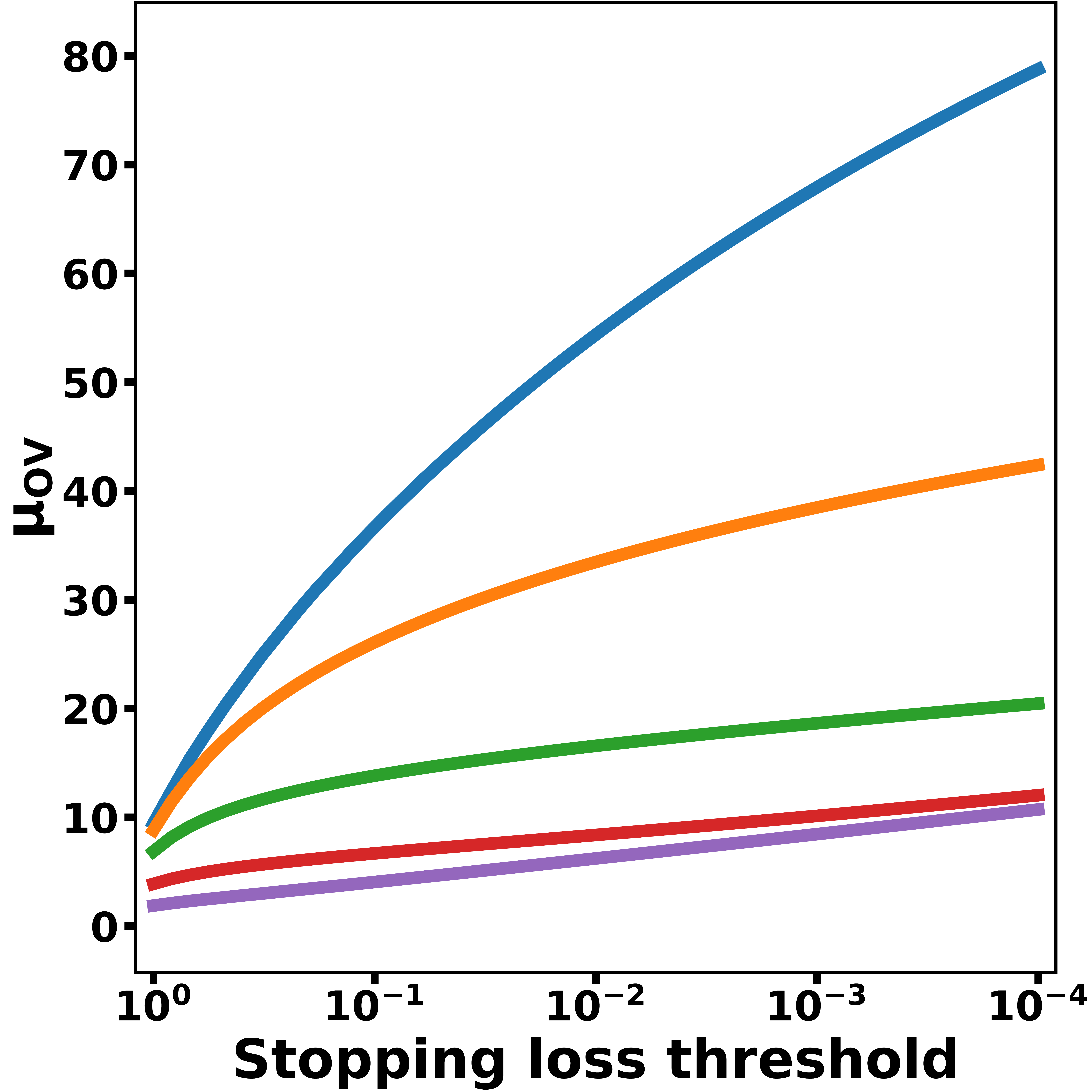}
       \caption{$\mu_{OV}(t)$ evolution }
        \label{fig-sim:subfig5}
    \end{subfigure}
    \begin{subfigure}[b]{0.21\textwidth}
    \includegraphics[width=\textwidth]{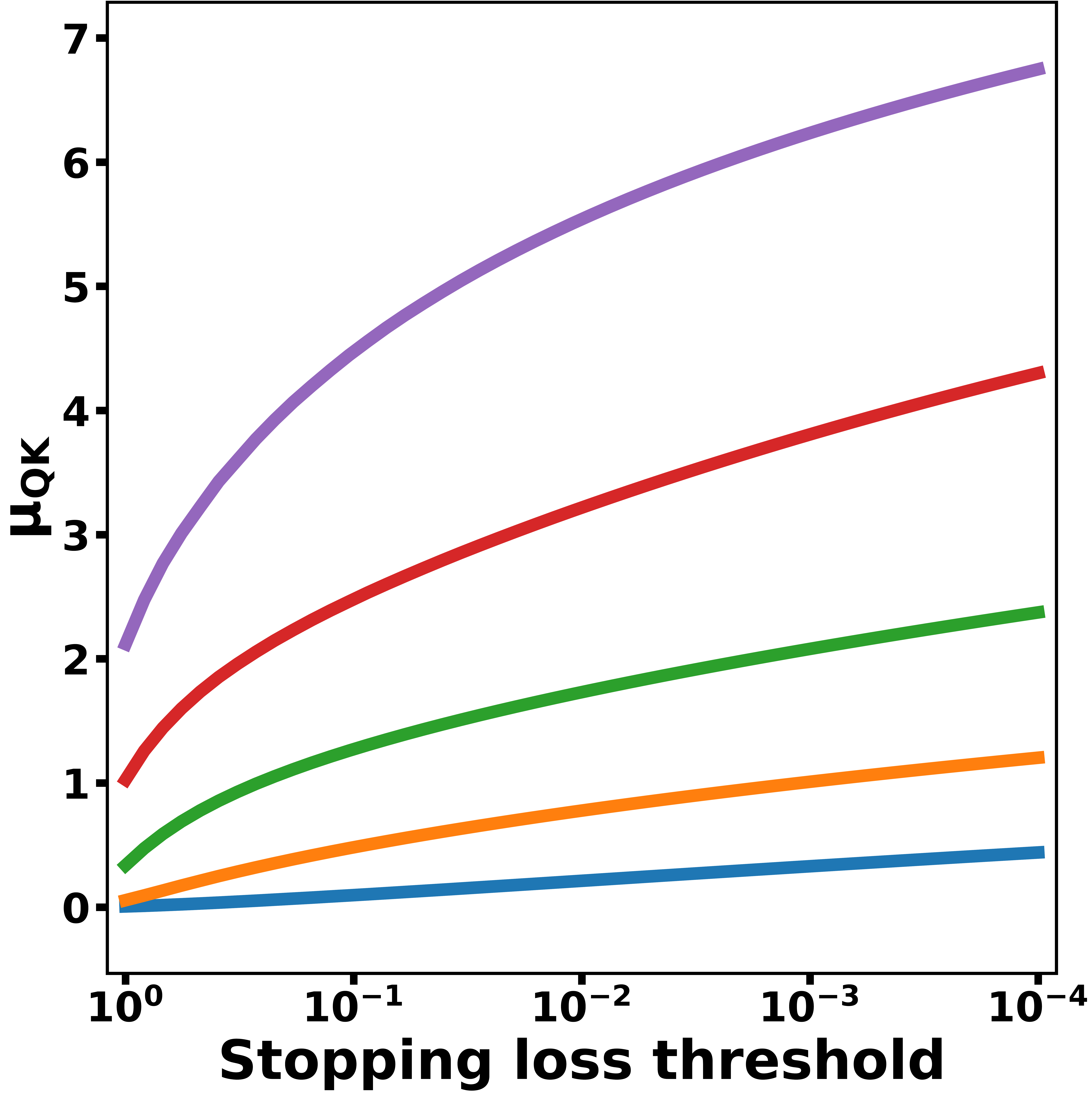}
       \caption{$\mu_{QK}(t)$ evolution }
        \label{fig-sim:subfig6}
    \end{subfigure}
    \begin{subfigure}[b]{0.23\textwidth}
    \includegraphics[width=\textwidth]{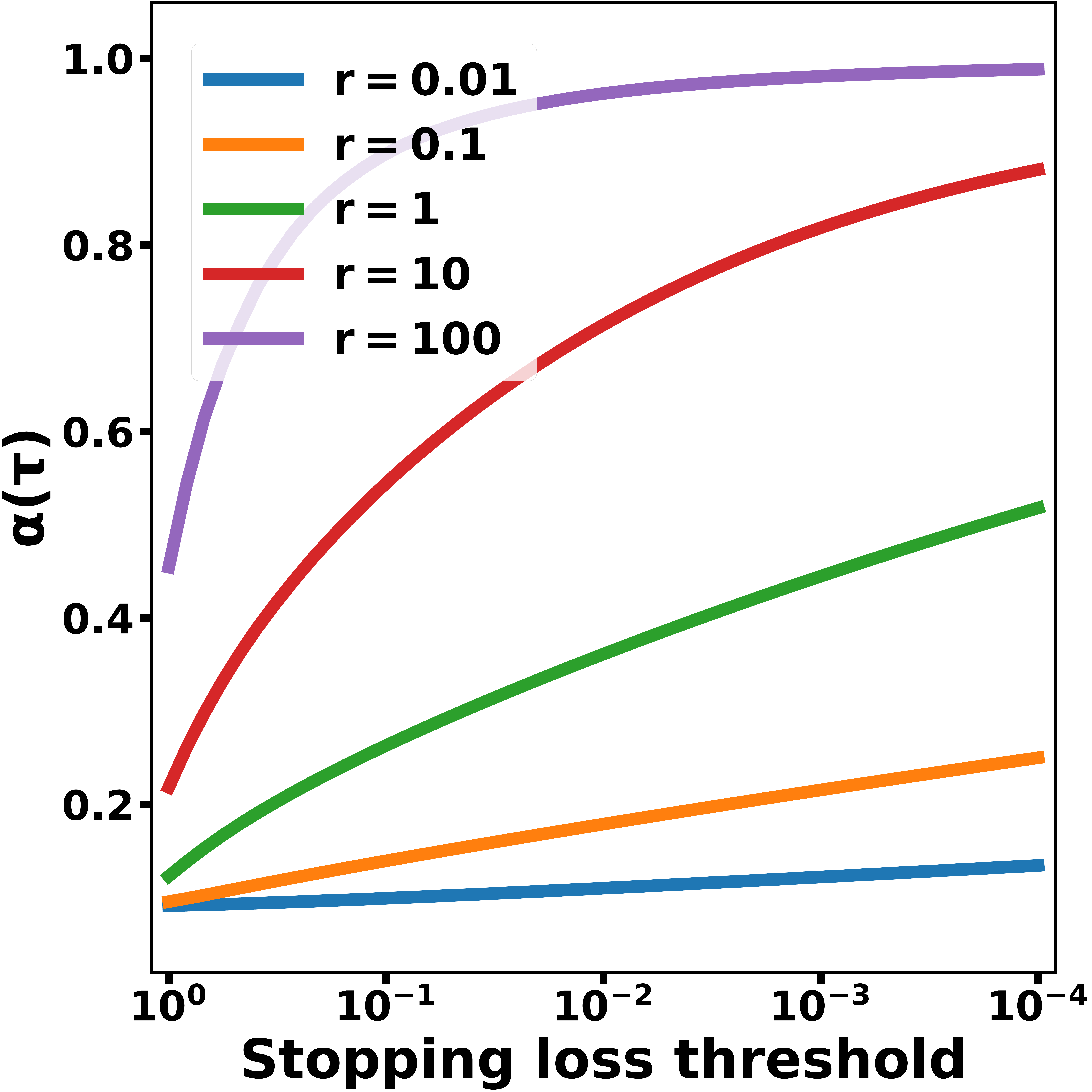}
       \caption{$\alpha(t)$ evolution }
        \label{fig-sim:subfig7}
    \end{subfigure}
    \caption{ Population gradient-flow simulations for the orthogonal synthetic setting of Theorem~\ref{thm:orthogonal-dynamics} with $m=5$, $n=50$, $b=50$, $M=5$, varying the ratio $ r =\eta_{QK}/\eta_{OV}$. (a) Training loss vs.\ normalized time, (b) $\mu_{OV}(t)$ at matched stopping loss, (d) $\mu_{QK}(t)$ at matched stopping loss, and attention mass $\alpha(t)$ evaluated at matched stopping losses.
}

    \label{fig-sim:my_label}
\end{figure*}

We now relate the gradient flow dynamics of the factorized model to those of the collapsed parameterization. Let $\mW_O(t)$ and $\mW_V(t)$ denote the gradient flow trajectories under $\mathcal L^2$, and define the effective matrix
$ \widetilde{\mW}_{OV}(t) = \mW_O(t)\mW_V(t)$.

The proposition~\ref{prop1} is a direct application of the chain rule.

\begin{proposition}[]
\label{prop1}
Let $C=AB$, where $A$,$B$, and $C$ are matrices of shape $M \times d$, $d \times d$ and $M \times d$. 

Then the gradients satisfy
\begin{equation}
\begin{aligned}
\nabla_{\mW_O}\mathcal L^2\big|_{(A,B)}
&=
\nabla_{\mW_{OV}}\mathcal L^1\big|_{C}\,B^\top,\\
\nabla_{\mW_V}\mathcal L^2\big|_{(A,B)}
&=
A^\top\,\nabla_{\mW_{OV}}\mathcal L^1\big|_{C}.
\end{aligned}
\label{eq:chainrule}
\end{equation}

\end{proposition}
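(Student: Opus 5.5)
The identities follow from the chain rule once we observe that $\mathcal L^2$ factors through $\mathcal L^1$. The attention vector $\va$ and the input $\mX$ do not depend on $\mW_O$ or $\mW_V$, so
\[
\mathcal L^2(\mW_O,\mW_V)=\mathcal L^1(\mW_O\mW_V).
\]
We therefore fix $(A,B)$, set $C=AB$, and write $G=\nabla_{\mW_{OV}}\mathcal L^1\big|_{C}\in\R^{M\times d}$. The task is then to differentiate the composition $A\mapsto AB$ (respectively $B\mapsto AB$) followed by $\mathcal L^1$.

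The cleanest route is through first-order perturbations and the Frobenius inner product $\langle X,Y\rangle=\mathrm{tr}(X^\top Y)$.

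\textbf{First identity.} Perturb $A$ to $A+\Delta$. Then $C$ becomes $C+\Delta B$, so
\[
\mathcal L^2(A+\Delta,B)-\mathcal L^2(A,B)=\langle G,\Delta B\rangle+o(\|\Delta\|).
\]
By cyclicity of the trace,
\[
\langle G,\Delta B\rangle=\mathrm{tr}(G^\top\Delta B)=\mathrm{tr}\big((GB^\top)^\top\Delta\big)=\langle GB^\top,\Delta\rangle .
\]
Since the gradient is the unique Riesz representer of the differential, $\nabla_{\mW_O}\mathcal L^2\big|_{(A,B)}=GB^\top$.

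\textbf{Second identity.} Perturb $B$ to $B+\Delta$. Then $C$ becomes $C+A\Delta$, and
\[
\langle G,A\Delta\rangle=\mathrm{tr}(G^\top A\Delta)=\langle A^\top G,\Delta\rangle ,
\]
so $\nabla_{\mW_V}\mathcal L^2\big|_{(A,B)}=A^\top G$.

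\textbf{Differentiability.} $\mathcal L^1$ is smooth: it is a log-softmax composed with the linear map $\mW_{OV}\mapsto\mW_{OV}\mX^\top\va$. Hence the $o(\|\Delta\|)$ remainder is legitimate. One could also write $G$ explicitly as $(\BS(\vz)-\e_y)(\mX^\top\va)^\top$, with $\e_y$ the standard basis vector, but this is not needed for the argument.

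There is no real obstacle here. The only care required is bookkeeping: the transpose on $B$ must appear on the right and the transpose on $A$ on the left, and the dimensions must match ($M\times d$ times $d\times d$, and $d\times M$ times $M\times d$). It should also be noted explicitly that $\va$ does not depend on the output-value parameters, since otherwise extra terms would enter the chain rule.
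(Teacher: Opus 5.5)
Your proof is correct and follows essentially the same route as the paper, which writes the collapsed gradient as $(\widetilde{y}-y)\widetilde{x}^{\top}$ and applies the chain rule to get $(\widetilde{y}-y)(\mW_V\widetilde{x})^{\top}$ and $\mW_O^{\top}(\widetilde{y}-y)\widetilde{x}^{\top}$; your Frobenius-pairing computation is the same chain rule done coordinate-free, so it does not need the explicit form of $\nabla_{\mW_{OV}}\mathcal L^1$. Your remark that $\va$ does not depend on $\mW_O,\mW_V$ is a useful addition, since it is exactly what justifies $\mathcal L^2(\mW_O,\mW_V)=\mathcal L^1(\mW_O\mW_V)$, which the paper uses without stating it.
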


The gradient identities in Proposition~\ref{prop1} do not by themselves imply a closed evolution for the effective matrix $\widetilde{\mW}_{OV} = \mW_O \mW_V$, since the dynamics of $\mW_O$ and $\mW_V$ may depend on their relative scaling. We therefore next characterize a structural condition under which factorized gradient flow admits a well-defined induced dynamics on the effective parameters.

\begin{lemma}
\label{lemma1}
Let $\mW_O(t)$ and $\mW_V(t)$ denote the gradient flow trajectories with loss $\mathcal{L}^2$. Let $\mW_O(0)^{\top}\mW_O(0)$ $= \mW_V(0) \mW_V(0)^{\top}$, Then $\mW_O(t)^{\top}\mW_O(t)$ $= \mW_V(t) \mW_V(t)^{\top}$
\end{lemma}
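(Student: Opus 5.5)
The plan is to prove that the quantity $\mW_O(t)^\top\mW_O(t) - \mW_V(t)\mW_V(t)^\top$ is conserved along gradient flow. Combined with the assumption that it vanishes at $t=0$, this gives the claim. This is the standard balancedness conservation law for two-layer linear factorizations \citep{Arora2018OnTO, 10.5555/3326943.3326979}. Here it holds because the loss depends on $(\mW_O,\mW_V)$ only through the product $\mW_O\mW_V$: the attention vector $\boldsymbol{\va}$ does not depend on these parameters.

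First, write the gradient flow explicitly. Allow a common learning rate $\eta_{OV}$ for both factors; the argument requires the two factors to share the same rate. The flow is
\[
\dot{\mW}_O = -\eta_{OV}\,\nabla_{\mW_O}\mathcal L^2, \qquad \dot{\mW}_V = -\eta_{OV}\,\nabla_{\mW_V}\mathcal L^2 .
\]
Let $G(t) = \nabla_{\mW_{OV}}\mathcal L^1\big|_{\mW_O(t)\mW_V(t)}$, which is an $M\times d$ matrix. By Proposition~\ref{prop1} with $A=\mW_O(t)$ and $B=\mW_V(t)$,
\[
\dot{\mW}_O = -\eta_{OV}\, G\,\mW_V^\top, \qquad \dot{\mW}_V = -\eta_{OV}\,\mW_O^\top G .
\]
If the loss is an average over a dataset or a population expectation, the same identities hold with $G$ replaced by the corresponding averaged gradient, since Proposition~\ref{prop1} is linear in the loss.

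Next, differentiate both Gram matrices with the product rule:
\[
\tfrac{d}{dt}\big(\mW_O^\top\mW_O\big) = \dot{\mW}_O^\top\mW_O + \mW_O^\top\dot{\mW}_O = -\eta_{OV}\big(\mW_V G^\top \mW_O + \mW_O^\top G\,\mW_V^\top\big),
\]
\[
\tfrac{d}{dt}\big(\mW_V\mW_V^\top\big) = \dot{\mW}_V\mW_V^\top + \mW_V\dot{\mW}_V^\top = -\eta_{OV}\big(\mW_O^\top G\,\mW_V^\top + \mW_V G^\top\mW_O\big).
\]
The two right-hand sides are identical, so $\tfrac{d}{dt}\big(\mW_O^\top\mW_O - \mW_V\mW_V^\top\big)=0$. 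The difference is therefore constant in $t$. It equals $0$ at $t=0$ by hypothesis, hence it is $0$ for all $t$ on the interval where the flow exists.

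The only real subtlety is the justification step, not the algebra. Two things must be checked:
\begin{itemize}
\item The trajectory is $C^1$, so that differentiation is legitimate. This holds because the loss is smooth (log-softmax composed with polynomials), so gradient flow has a unique $C^1$ solution on its maximal interval.
\item Both factors use the same learning rate. If $\mW_O$ and $\mW_V$ had different rates $\eta_O\neq\eta_V$, the conserved quantity would instead be $\eta_O^{-1}\mW_O^\top\mW_O-\eta_V^{-1}\mW_V\mW_V^\top$. The lemma as stated implicitly assumes equal rates within the output-value circuit.
\end{itemize}
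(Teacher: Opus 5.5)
Your proof is correct and follows the paper's argument: the paper also differentiates $\mW_O^\top\mW_O$ and $\mW_V\mW_V^\top$ with the product rule, using the chain-rule gradients from Proposition~\ref{prop1}, observes that the two derivatives coincide, and concludes that their difference is conserved and so stays zero from a balanced initialization. Your version is a bit more careful than the paper's, which uses the single-sample gradient $(\widetilde{y}-y)\widetilde{x}^\top$ with a loose but here harmless sign convention, since you work with a generic collapsed gradient $G$ and an explicit shared rate $\eta_{OV}$, and your remark on unequal rates names an assumption the paper leaves implicit.
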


Lemma~\ref{lemma1} ensures that the evolution of the effective matrix does not depend on arbitrary rescalings of the factorized parameters. Under this condition, the factorized gradient flow induces a closed evolution for the effective matrix that depends only on the collapsed loss.

\begin{lemma}[]
\label{lemma2}
Let $\mW_O(t)$ and $\mW_V(t)$ follow gradient flow on $\mathcal L^2$, and define the effective matrix
$\widetilde \mW_{OV}(t)= \mW_O(t)\mW_V(t)$.
Assume $\mW_O(0)^\top \mW_O(0)=\mW_V(0)\mW_V(0)^\top$. Then $\widetilde \mW_{OV}(t)$ evolves according to the preconditioned gradient flow for $\mathcal L^{1}$
\[
\frac{d}{dt}\operatorname{vec}(\widetilde \mW_{OV}(t))
=
-\Omega(\widetilde \mW_{OV}(t))\,
\operatorname{vec}\!\left(\nabla \mathcal L^1(\widetilde \mW_{OV}(t))\right),
\]
where $
\Omega(\mC)
=
(\mC^\top \mC)^{1/2} \oplus (\mC \mC^\top)^{1/2},
$

$\Omega(\mC)$ a mapping from $\R^{M \times d} \rightarrow \R^{Md \times Md}$ is a degree 1-homogeneous map.  $\oplus$ denotes the Kronecker sum. 
\end{lemma}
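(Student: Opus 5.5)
We would follow the standard deep-linear-network argument of \citet{Arora2018OnTO}, using Proposition~\ref{prop1} for the gradients and Lemma~\ref{lemma1} for balancedness. Write $G(t)=\nabla_{\mW_{OV}}\mathcal L^1\big|_{\widetilde\mW_{OV}(t)}$. By Proposition~\ref{prop1}, gradient flow on $\mathcal L^2$ reads
\[
\dot{\mW}_O=-G\,\mW_V^\top,\qquad \dot{\mW}_V=-\mW_O^\top G .
\]
The product rule then gives
\[
\dot{\widetilde\mW}_{OV}=\dot{\mW}_O\mW_V+\mW_O\dot{\mW}_V=-G\,\mW_V^\top\mW_V-\mW_O\mW_O^\top G .
\]
The whole task is to express $\mW_V^\top\mW_V$ and $\mW_O\mW_O^\top$ through $\mC=\widetilde\mW_{OV}$ alone.

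For this we use the balancedness $\mW_O^\top\mW_O=\mW_V\mW_V^\top$, which holds for all $t$ by Lemma~\ref{lemma1}. Take SVDs $\mW_O=U_1\Sigma_1 V_1^\top$ and $\mW_V=U_2\Sigma_2V_2^\top$. Balancedness says $V_1\Sigma_1^\top\Sigma_1V_1^\top=U_2\Sigma_2\Sigma_2^\top U_2^\top$, so the two factors share singular values and, up to rotations within eigenspaces of equal singular value, we may take $V_1=U_2$. This is the standard alignment step. It yields
\[
\mC\mC^\top=\mW_O\mW_V\mW_V^\top\mW_O^\top=\mW_O(\mW_O^\top\mW_O)\mW_O^\top=(\mW_O\mW_O^\top)^2 ,
\]
and symmetrically
\[
\mC^\top\mC=\mW_V^\top(\mW_V\mW_V^\top)\mW_V=(\mW_V^\top\mW_V)^2 .
\]
Since $\mW_O\mW_O^\top$ and $\mW_V^\top\mW_V$ are positive semidefinite, uniqueness of the PSD square root gives
\[
\mW_O\mW_O^\top=(\mC\mC^\top)^{1/2},\qquad \mW_V^\top\mW_V=(\mC^\top\mC)^{1/2}.
\]
Note that this route needs only the matrix identities, not the SVD alignment itself. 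For example,
\[
(\mW_O\mW_O^\top)^2=\mW_O(\mW_O^\top\mW_O)\mW_O^\top=\mW_O\mW_V\mW_V^\top\mW_O^\top ,
\]
which follows directly from balancedness. So the main obstacle, the non-uniqueness of the SVD when singular values repeat, is sidestepped by working with the identities and then invoking uniqueness of PSD square roots.

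Substituting these expressions gives
\[
\dot{\mC}=-G(\mC^\top\mC)^{1/2}-(\mC\mC^\top)^{1/2}G .
\]
Vectorize using $\operatorname{vec}(AXB)=(B^\top\otimes A)\operatorname{vec}(X)$, and use that both square-root matrices are symmetric:
\[
\frac{d}{dt}\operatorname{vec}(\mC)=-\Big[(\mC^\top\mC)^{1/2}\otimes I_M+I_d\otimes(\mC\mC^\top)^{1/2}\Big]\operatorname{vec}(G).
\]
The bracket is exactly the Kronecker sum $(\mC^\top\mC)^{1/2}\oplus(\mC\mC^\top)^{1/2}=\Omega(\mC)$, with the ordering convention matching the column-stacking $\operatorname{vec}$.

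Finally, degree-1 homogeneity follows because $(\lambda^2\mC^\top\mC)^{1/2}=|\lambda|(\mC^\top\mC)^{1/2}$, and likewise for the other term. Hence $\Omega(\lambda\mC)=|\lambda|\,\Omega(\mC)$, which is positive homogeneity of degree one.

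One more point to check: the sign conventions of the gradient identities in Proposition~\ref{prop1} must match the shapes. $G$ is $M\times d$, $\mW_V^\top\mW_V$ is $d\times d$, and $\mW_O\mW_O^\top$ is $M\times M$, so both terms in $\dot{\mC}$ are well-defined.
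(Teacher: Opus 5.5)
Your proof is correct and follows the same route as the paper. Both apply the product rule to $\mW_O\mW_V$, use the balancedness of Lemma~\ref{lemma1} to identify $\mW_O\mW_O^\top=(\mC\mC^\top)^{1/2}$ and $\mW_V^\top\mW_V=(\mC^\top\mC)^{1/2}$, and vectorize into the Kronecker sum. Your identity-plus-PSD-root-uniqueness step is a cleaner substitute for the paper's SVD argument, which tacitly needs the singular-vector alignment you avoid. Your $\Omega(\lambda\mC)=|\lambda|\,\Omega(\mC)$ is also the accurate form of the paper's homogeneity claim.
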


Lemma~\ref{lemma2} shows that, under the balancedness condition, the evolution of the effective matrix $\widetilde{\mW}_{OV}(t)$ induced by factorized training is fully determined by the collapsed loss $\mathcal L^1$, up to the application of a state-dependent preconditioning operator. In contrast to collapsed training, which follows standard gradient flow on $\mathcal L^1$, factorized training follows a preconditioned gradient flow in the same parameter space.  $\Omega$ being a homogeneous map implies that using factorized parameterization is closely approximated by collapsed factorization with a learning rate that increases over time. 

The same argument applies to the query-key factorization
\( \widetilde{\mW}_{QK} = \mW_K \mW_Q^\top \). Its gradient flow admits an analogous preconditioned representation of the collapsed query-key dynamics. We provide the full derivation in
Appendix~\ref{appendix:omega}. Together, these results justify analyzing optimization trajectories in the collapsed parameter space for both circuits, provided the induced preconditioning is taken into account.

\subsection{Synthetic Orthogonal Data Setting}
\label{theorem_section}


This section isolates the mechanism by which relative learning rates between
the query-key and output-value circuits shape attention during training.
We introduce a simplified synthetic setting that enables a transparent,
closed-form characterization of how attention sharpening emerges under
gradient flow. To enable such a characterization, we make the following
assumptions on the data distribution.

\begin{assumption}
\label{ass:orthogonal-tokens}
For each class \(y \in [M]\), the class-specific token distribution is
supported on \(b\) distinct tokens,$ D_y = \frac{1}{b}\sum_{\tau=1}^b \delta(\bs_y^\tau),
$
where for each \(\tau \in [b]\), the vectors
\(\bs_1^\tau, \bs_2^\tau, \ldots, \bs_M^\tau \in \R^d\) are mutually
orthogonal.
\end{assumption}

\begin{assumption}
\label{ass:zero-mean}
The background token distribution \(D_0\) has zero mean.
\end{assumption}

\begin{assumption}
\label{ass:background-orthogonal}
The support of \(D_0\) is orthogonal to all class-specific token vectors
\(\{\bs_y^\tau\}_{y \in [M],\,\tau \in [b]}\).
\end{assumption}

Together, these assumptions eliminate cross-token interactions, reducing training to a low-dimensional system in which the coupling between attention learning and output learning can be analyzed explicitly.

Each training sequence consists of \(m\) class-specific token positions (sampled with replacement from the \(b\) tokens of the corresponding class), \(n\) background token positions, and a final query token, giving total length \(T=m+n+1\). The attention vector is \(\boldsymbol{\alpha}=\mathrm{softmax}(\mX \mW_{QK}\vx_T)\), and we denote by \(\alpha(t):=\sum\limits_{j\in\mathcal{D}}\alpha_j\) the total attention mass on class-specific token positions (relevant token for a class).

The following theorem shows that, under Assumptions~\ref{ass:orthogonal-tokens}-\ref{ass:background-orthogonal}, training dynamics collapse to two coupled scalar trajectories governing output-value learning and attention sharpening.

\begin{theorem}[] 
\label{thm:orthogonal-dynamics} 
Under the orthogonal full-data setting and initialization \(\mW_{OV}(0)=\mW_{QK}(0)=\boldsymbol{0}\), the output-value and query-key parameters evolve under population gradient flow with learning rates \(\eta_{OV}\) and \(\eta_{QK}\) as

\[ \mW_{OV}^{(k,:)}(t) = \mu_{OV}(t) \left[ \sum_{\tau=1}^{b} s_k^\tau - \frac{1}{M}\sum_{y=1}^{M}\sum_{\tau=1}^{b}s_y^\tau \right], \] 

\[ \mW_{QK}^{(k,:)}(t) = \mu_{QK}(t) \sum_{y=1}^{M} \left(\sum_{\tau=1}^{b}s_y^\tau\right) \left(\sum_{\tau'=1}^{b}s_y^{\tau',k}\right), \] 

where the scalar coefficients satisfy \[ \frac{d\mu_{OV}(t)}{dt} = \frac{\eta_{OV}\,\alpha(t)} {b\big(\exp(\alpha(t)\mu_{OV}(t))+M-1\big)}, \] 
\[ \frac{d\mu_{QK}(t)}{dt} = \frac{\eta_{QK}\,(M-1)\big(\alpha(t)-\alpha(t)^2\big)\mu_{OV}(t)} {M b^2\big(\exp(\alpha(t)\mu_{OV}(t))+M-1\big)}, \] 
with 
\[ \alpha(t) = \frac{m\,\exp(\mu_{QK}(t))} {m\,\exp(\mu_{QK}(t))+n}. \] 

\end{theorem}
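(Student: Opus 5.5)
We would prove the theorem by an invariance (ansatz) argument: postulate that the trajectory stays in the two-parameter family given in the statement, show that the population gradients of $\mathcal L^1$ (and the analogous collapsed QK loss) at any point of this family lie again in the family, and read off the scalar ODEs. Since both parameters start at $\boldsymbol{0}$, which belongs to the family with $\mu_{OV}=\mu_{QK}=0$, uniqueness of solutions to the (locally Lipschitz) gradient-flow ODE then implies the true trajectory coincides with the ansatz for all $t$.

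\textbf{Step 1: the forward pass on the ansatz.} Fix a class $y$ with query token and a sequence containing $m$ class-specific tokens $\bs_y^{\tau_1},\dots,\bs_y^{\tau_m}$ and $n$ background tokens. Using Assumptions~\ref{ass:orthogonal-tokens} and \ref{ass:background-orthogonal}, the query-key logit $\vx_j^\top \mW_{QK}\vx_T$ vanishes on background positions and equals $\mu_{QK}$ on class-specific positions (after normalising the token vectors so the inner products collapse to a constant; we will fix this normalisation so that the constant is exactly $\mu_{QK}$, which is what produces the factor $b$ and $b^2$ in the ODEs). Hence the attention puts mass $\alpha(t)=m e^{\mu_{QK}}/(me^{\mu_{QK}}+n)$ uniformly on the class-specific positions. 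The attended vector is $\frac{\alpha}{m}\sum_i \bs_y^{\tau_i}+\frac{1-\alpha}{n}\sum_{\text{bg}}\vx_j$; applying the OV ansatz, the background part is killed by orthogonality and the class-specific part gives logit $\alpha\mu_{OV}(1-1/M)$ on class $y$ and $-\alpha\mu_{OV}/M$ on the others, i.e. a logit gap $\alpha\mu_{OV}$. Thus the softmax error is $\BS(\vz)-\e_y$ with $1-p_y=(M-1)/(e^{\alpha\mu_{OV}}+M-1)$ and $p_k=1/(e^{\alpha\mu_{OV}}+M-1)$ for $k\ne y$.

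\textbf{Step 2: the OV gradient.} $\nabla_{\mW_{OV}}\mathcal L^1=(\BS(\vz)-\e_y)\,\widetilde{\vx}^\top$. Take the population average over $y$ (uniform), over the $m$ sampled tokens (each $\bs_y^\tau$ appears with mean weight $1/b$), and over backgrounds (zero mean by Assumption~\ref{ass:zero-mean}, so the background contribution drops even before orthogonality). Row $k$ becomes a combination of $\sum_\tau \bs_k^\tau$ and $\sum_y\sum_\tau\bs_y^\tau$, which rearranges exactly into the centred direction of the statement with coefficient $\alpha/(b(e^{\alpha\mu_{OV}}+M-1))$, giving the $\mu_{OV}$ ODE.

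\textbf{Step 3: the QK gradient (main obstacle).} Differentiate through the softmax attention: $\nabla_{\mW_{QK}}\mathcal L=\sum_j \alpha_j(\vx_j-\bar{\vx})\,g^\top(\mW_{OV}\vx_j)\,\vx_T^\top$-type terms with $g=\BS(\vz)-\e_y$, yielding the factor $\alpha-\alpha^2$ from the softmax Jacobian restricted to the two position groups and the factor $\mu_{OV}(M-1)/M$ from $g^\top\mW_{OV}\bs_y$. The delicate points are (i) that the within-group fluctuation of which $\bs_y^{\tau}$ is sampled does not create new directions, handled by symmetry of $\tau$ and because all class-specific logits stay equal on the ansatz; (ii) background terms vanish by orthogonality/zero mean; (iii) the outer product with the query token $\vx_T$ produces the stated rank structure $\sum_y(\sum_\tau s_y^\tau)(\sum_{\tau'}s_y^{\tau',k})$ and the extra $1/b$. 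I would carry this out carefully via the chain rule $\partial\mathcal L/\partial \text{logit}_j=\alpha_j(\va_j^{\text{out}}-\sum_i\alpha_i\va_i^{\text{out}})$ with $\va_j^{\text{out}}=g^\top\mW_{OV}\vx_j$, and checking that the resulting matrix is again proportional to the QK ansatz direction. Combining Steps 2–3 with the uniqueness argument completes the proof.
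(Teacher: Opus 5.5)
Your proposal is correct and follows the paper's route. Like the paper, you posit the two-parameter ansatz, show that the population gradients of both collapsed circuits evaluated on it point along the ansatz directions, and read off the scalar ODEs; your softmax-Jacobian chain rule for the QK gradient is the paper's expression with $(\vx_j-\widetilde{\vx})^\top[\mW_{OV}^{(y,:)}-\sum_{k'}\vp_{k'}\mW_{OV}^{(k',:)}]$ written differently.

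Your explicit uniqueness step, and your observation that the logits do not depend on which class tokens are sampled, make rigorous what the paper leaves implicit (it simply plugs $\mathbb{E}[\mX^\top\boldsymbol{\alpha}\mid y]$ into the softmax). As in the paper, the exact constants silently require the query token to be $\frac{1}{b}\sum_{\tau}\bs_y^\tau$ and all $bM$ class vectors to be mutually orthonormal, which is stronger than Assumption~\ref{ass:orthogonal-tokens} as written. One small bookkeeping slip: the $1/M$ in the QK rate comes from averaging over $y$, not from $g^\top\mW_{OV}\bs_y^\tau$, which equals $-\mu_{OV}(M-1)/(e^{\alpha\mu_{OV}}+M-1)$.
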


In particular, both parameter matrices evolve along fixed data-dependent directions, with all learning dynamics captured by the scalar coefficients $\mu_{OV}(t)$ and $\mu_{QK}(t)$.


\begin{table*}[t]
\centering
\small
\caption{Train and test performance together with attention-based interpretability metrics on real-world datasets. $\uparrow$ indicates higher is better, while $\downarrow$ indicates lower is better. All results are reported as average over $5$ runs.  We report $95\%$ confidence intervals over five random seeds in Table~\ref{tab:std_real_results}.}
\label{tab:realdata_results}
\begin{tabular}{|l|l|c|c|c|c|c|c|c|}
\hline
\textbf{Dataset} & \textbf{Setting}
& \begin{tabular}[c]{@{}c@{}}\textbf{Train}\\ \textbf{Performance} $\uparrow$\end{tabular}
& \begin{tabular}[c]{@{}c@{}}\textbf{Test}\\ \textbf{Performance} $\uparrow$\end{tabular}
& \textbf{AC} $\uparrow$
& \textbf{ACMC } $\uparrow$
& \begin{tabular}[c]{@{}c@{}}\textbf{MRTA}\\ \textbf{(Rollout)} $\uparrow$\end{tabular}
& \textbf{Suff.} $\downarrow$
& \textbf{Compre.} $\uparrow$ \\
\hline

SQuAD QA
& Baseline & $70.30 $ & $56.11 $ & $12.65 $ & $12.6 $ & $ 0.06 $ & $0.69  $ & $0.38 $ \\
& Faster QK & $70.81 $ & $56.86 $ & $42.95 $ & $ 41.48$ & $0.1 $ & $0.59 $ & $0.53 $ \\
\hline
SVA & Baseline & $95.02 $ & $92.434 $ & $41.67  $ & $ 40.4  $ & $ 0.1 $ & $ 0.24  $ & $0.24 $\\
& Faster QK & $95.938 $ & $93.412 $ & $ 76.85  $ & $74.47  $ & $0.19 $ & $0.15 $ & $0.30 $ \\
\hline
HateXplain
& Baseline
& $88.75 $ & $55.74  $ & $0.00 $ & $0.00 $  & $0.027$ & $0.41   $ & $0.40  $ \\
& Faster QK & $ 89.72  $ & $56.90 $ & $43.5 $ & $30.29 $ & $0.31  $ & $0.32 $ & $0.48  $ \\
\hline
\end{tabular}

\end{table*}

\begin{lemma}[Relative growth rates (pre-saturation regime)]
\label{lem:growth-rates}
Under the population gradient-flow dynamics of
Theorem~\ref{thm:orthogonal-dynamics}, assume that the attention mass satisfies
$\alpha(t)\le 1-\delta$ for some $\delta>0$ over a time interval $[0,T]$.
Then for all $t\in[0,T]$,
\[
\mu_{OV}(t)=\Theta\!\big(\ln(1+t)\big),~\mu_{QK}(t)=\Theta\!\left(\frac{\eta_{QK}}{\eta_{OV}}(\ln(1+t))^2\right).\]
\end{lemma}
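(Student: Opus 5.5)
We work directly with the scalar ODE system of Theorem~\ref{thm:orthogonal-dynamics}. The plan is to sandwich $\alpha(t)$ between two constants, which reduces the $\mu_{OV}$ equation to a logistic-type ODE with explicit logarithmic solutions. We then integrate the $\mu_{QK}$ equation against those bounds.

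\textbf{Step 1: monotonicity and a two-sided bound on $\alpha$.} From $\mu_{OV}(0)=\mu_{QK}(0)=0$ and the right-hand sides, $\dot\mu_{OV}>0$ whenever $\alpha>0$, and $\alpha(t)>0$ always. Hence $\mu_{OV}$ is increasing and nonnegative. Then $\dot\mu_{QK}\ge 0$, since $\alpha-\alpha^2\ge 0$ and $\mu_{OV}\ge 0$. So $\mu_{QK}$ is nondecreasing and $\alpha(t)\ge \alpha_0:=m/(m+n)$. Together with the hypothesis $\alpha(t)\le 1-\delta$, we get $\alpha(t)\in[\alpha_0,1-\delta]$ on $[0,T]$. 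Consequently $\alpha-\alpha^2\in[\alpha_0\delta',\,1/4]$ for a constant $\delta'>0$ depending on $\alpha_0$ and $\delta$.

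\textbf{Step 2: $\mu_{OV}=\Theta(\ln(1+t))$.} Set $u=\alpha\mu_{OV}$, or simply compare using the bounds on $\alpha$. We have
\[
\frac{\eta_{OV}\alpha_0}{b(e^{(1-\delta)\mu_{OV}}+M-1)}\le \dot\mu_{OV}\le \frac{\eta_{OV}}{b(e^{\alpha_0\mu_{OV}}+M-1)}.
\]
Both comparison ODEs $\dot x = c/(e^{\kappa x}+M-1)$ separate exactly: $(e^{\kappa x}-1)/\kappa+(M-1)x = ct$. This gives $x(t)=\kappa^{-1}\ln(1+\Theta(t))$, i.e.\ $x=\Theta(\ln(1+t))$, with constants depending on $\eta_{OV},b,M,\alpha_0,\delta$. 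The comparison principle for scalar ODEs then sandwiches $\mu_{OV}$. As a by-product, $e^{\alpha\mu_{OV}}=\Theta(1+t)$, because $\alpha\mu_{OV}$ lies between $\alpha_0\mu_{OV}$ and $\mu_{OV}$. The upper bound $e^{\alpha\mu_{OV}}\le e^{\mu_{OV}}$ is polynomial in $1+t$, so exponents must be tracked. This is the main subtlety, addressed next.

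\textbf{Step 3: $\mu_{QK}$.} Take the ratio of the two ODEs:
\[
\frac{d\mu_{QK}}{d\mu_{OV}}=\frac{\eta_{QK}(M-1)(1-\alpha)\mu_{OV}}{\eta_{OV}Mb}.
\]
The awkward exponential factor cancels exactly. Since $1-\alpha\in[\delta,1-\alpha_0]$, integrating in $\mu_{OV}$ gives
\[
\mu_{QK}=\Theta\!\left(\tfrac{\eta_{QK}}{\eta_{OV}}\,\mu_{OV}^2\right).
\]
Substituting Step 2 yields $\Theta\big(\tfrac{\eta_{QK}}{\eta_{OV}}(\ln(1+t))^2\big)$. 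This reparametrization is the key move. A direct time integration of $\dot\mu_{QK}$ would instead require sharp matching of the exponents in $e^{\alpha\mu_{OV}}$, which I expect to be the main obstacle if attempted naively.

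The only remaining care concerns constants near $t=0$, where $\ln(1+t)\approx t$. The separated solutions give $x\sim ct$ there, so the $\Theta$ holds uniformly on $[0,T]$. The implied constants depend on $m,n,b,M,\delta,\eta_{OV}$ but not on $T$.
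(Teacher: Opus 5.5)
Your proposal is correct and follows essentially the paper's proof: you use monotonicity of $\mu_{QK}$ to get $\alpha\in[\alpha_0,1-\delta]$, sandwich $\mu_{OV}$ between logarithms, and then use the ratio $d\mu_{QK}/d\mu_{OV}=\frac{\eta_{QK}}{\eta_{OV}}\frac{M-1}{Mb}(1-\alpha)\mu_{OV}$ to obtain $\mu_{QK}=\Theta\big(\frac{\eta_{QK}}{\eta_{OV}}\mu_{OV}^2\big)$. The differences are cosmetic: you separate $\dot x=c/(e^{\kappa x}+M-1)$ exactly where the paper simply uses $e^{x}\le e^{x}+M-1\le Me^{x}$, and your aside that $e^{\alpha\mu_{OV}}=\Theta(1+t)$ does not follow from $\alpha_0\mu_{OV}\le\alpha\mu_{OV}\le\mu_{OV}$ alone, since that only gives polynomial bounds with mismatched exponents; the aside is never used, so it does not affect the argument.
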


Lemma~\ref{lem:growth-rates} formalizes the core mechanism: the output-value scale grows logarithmically in time, while the growth of the query-key scale that governs attention concentration is controlled by the learning-rate ratio $ r = \eta_{QK}/\eta_{OV}$ through a log-squared dependence. When output learning is slowed relative to attention learning ($\eta_{QK}>\eta_{OV}$), optimization compensates by increasing the magnitude of the query-key parameters, thereby concentrating attention mass on class-relevant tokens. 

Figure~\ref{fig-sim:my_label}a shows that the training loss converges to zero for all values of the ratio \( r = \eta_{QK} / \eta_{OV} \), indicating that final predictive accuracy alone does not distinguish between learning regimes. Figure~\ref{fig-sim:my_label}b shows the evolution of \(\mu_{OV}\) evaluated at matched stopping losses varying values of \(r\). Figures~\ref{fig-sim:my_label}c and~d similarly show the evolution of \(\mu_{QK}\) and the attention mass \(\alpha\) at matched stopping
loss thresholds.

To understand the differing attention behaviors, we examine the intermediate dynamics in Fig.~\ref{fig-sim:my_label}b--c. Figure~\ref{fig-sim:my_label}b shows the evolution of
\(\mu_{OV}(t)\), which serves as a proxy for logit growth and directly governs loss decay, while Fig.~\ref{fig-sim:my_label}c shows the evolution of \(\mu_{QK}(t)\), which acts as a proxy for growth of attention on relevant tokens. 

For all values of \(r\), \(\mu_{OV}(t)\) increases sufficiently to drive the loss to zero, explaining the similar convergence behavior observed in Fig.~\ref{fig-sim:my_label}a. In contrast, \(\mu_{QK}(t)\)
depends strongly on \(r\): for large ratios (e.g., \(r = 100\)), \(\mu_{QK}(t)\) grows rapidly, leading to attention mass on relevant tokens close to $1$, whereas for small ratios (e.g., \(r = 0.01\)),
\(\mu_{QK}(t)\) remains small, resulting in attention close to uniform despite loss becoming zero. This decoupling between logit growth and attention sharpening shows that selective attention emerges
from the relative optimization dynamics of the \(QK\) and \(OV\) rather than from improved prediction.

\section{Experiments}
\label{exp1}

The goal of our experiments is to test qualitative predictions of the theory developed in Section~\ref{optim-trajectory}. We do not aim to optimize task performance or explore architectural design choices. Instead, we evaluate whether changing the relative learning rates of the query-key and output-value circuits produces the attention behavior predicted by the theory.

The theory makes two core predictions. First, increasing the ratio $ r = \dfrac{\eta_{QK}}{ \eta_{OV}}$ should lead to sharper attention. Second, this attention sharpening should occur while predictive performance remains comparable. All experiments in this section are designed as controlled interventions on this learning rate ratio.

We compare a baseline transformer to a variant trained in a higher query-key learning-rate regime.
In this variant, we increase the learning rate of the query-key parameters while keeping all other settings fixed.
This directly increases the ratio $r$. This intervention corresponds directly to the mechanism analyzed in Lemma~\ref{lem:growth-rates}.

We evaluate three datasets with token-level relevance annotations: SQuAD, HateXplain, and subject-verb agreement (SVA). For SQuAD, we use a 6-layer GPT-style model. For HateXplain and SVA, we use 1-layer models to match the theoretical setting. Additional multi-layer results are provided in the Appendix~\ref{add:results}. Here, predictive performance refers to the standard task metric for each dataset: F1 score for SQuAD and classification accuracy for HateXplain and SVA.

All models are trained with AdamW. We first tune a single base learning rate using validation performance. This rate is shared across all parameters in the baseline model. In the increased $r$ regime, we multiply the query-key learning rate by a fixed factor while keeping the output-value learning rate unchanged. We emphasize that adaptive optimization alone does not induce this behavior, as adaptive optimizers treat all parameter groups equally. We tune the query–key multiplier using validation performance to ensure stable training and comparable predictive accuracy across settings. Importantly, this tuning is not used to optimize attention metrics. For concreteness, we use base learning rates of $5\times10^{-5}$, $5\times10^{-5}$, and $1\times10^{-5}$ for SQuAD, HateXplain, and SVA respectively, and increase the query-key learning rate by fixed multipliers of $20\times$, $30\times$, and $100\times$ while keeping the output-value learning rate unchanged. Hyperparameter details are in Appendix~\ref{ds-details}.

We also report attention-based metrics that directly test the theoretical predictions. We measure mean relevant token attention (MRTA) and use attention rollout \citep{abnar-zuidema-2020-quantifying} to aggregate the scores across layers.

We report sufficiency and comprehensiveness scores as faithfulness metrics \citep{deyoung-etal-2020-eraser}. Sufficiency measures if important tokens are enough to retain the original prediction (lower is better), while comprehensiveness measures  the change in probability of the predicted class after removing important tokens (higher is better).

Finally, we use DTAP-based summary metrics. Attention Confidence (AC) measures the percentage of instances in which a large fraction of attention mass is assigned to task-relevant tokens. Attention Confidence \& Model Confidence (ACMC) measures the percentage of confidently correct instances that also exhibit high attention concentration. These metrics are designed to capture the increase in attention mass $\alpha(t)$ predicted by the theory. We discuss these metrics in detail in Appendix~\ref{sec:metrics}.

Table~\ref{tab:realdata_results} summarizes the results across all datasets. Predictive performance remains stable across tasks. Increasing $r$ does not degrade test performance, and in some cases yields small improvements. This confirms that attention sharpening is not driven by improved prediction alone.

At the same time, attention becomes substantially sharper. Both AC and ACMC increase by large margins across all datasets. Sufficiency decreases and comprehensiveness increases in all settings, indicating more faithful attention. These effects are consistent across datasets and persist in deeper models. They match the qualitative behavior predicted by the optimization dynamics in Section~\ref{optim-trajectory}.

Overall, the results support the theoretical mechanism proposed in this work. When output-value learning is slower, optimization increases the query-key scale. This concentrates attention through the softmax nonlinearity. The model compensates for slower output learning by attending more strongly to informative tokens. Importantly, this effect does not require larger output weights or improved predictive performance. It arises from relative optimization speeds alone.

\section{Conclusion}
\label{conclusion}

In this work, we study how optimization dynamics shape attention structure in self-attention models. By analyzing the interaction between the query-key and output-value circuits, we showed that attention sharpening is controlled by the relative learning speeds of these components rather than by predictive performance alone. In particular, faster learning of the query-key circuit leads to sharper attention, even when performance is  comparable. Our theoretical analysis provides a mechanistic explanation for this behavior, which is supported by experiments on synthetic data and real-world benchmarks. These results highlight the importance of parameterization and learning-rate choices. These factors are often treated as implementation details but play a central role in shaping attention behavior. Our findings suggest that attention interpretability can be systematically influenced through optimization dynamics, without requiring architectural changes.

\section{Limitations}
The theoretical analysis in the paper focuses on a single-layer attention-only transformer. Although our empirical results suggest that the core findings extend to standard multi-layer transformers, additional components such as layer normalization and feedforward networks may introduce dynamics that require further theoretical study. Our conclusions are also tied to the simplifying assumptions used in the theory and experiments. Sharper attention or improved alignment with annotation-based metrics does not necessarily imply a fully trustworthy explanation of the model’s reasoning, especially in high-stakes settings. The learning-rate intervention should be viewed as one concrete probe of relative QK-OV optimization dynamics rather than as the only possible causal mechanism. Other circuit-specific interventions, such as initialization imbalance, weight decay, or gradient normalization, are left for future work. We do not claim that sharper attention directly improves downstream task performance. Our results show that attention structure can be controlled, with improvements on attention-based interpretability proxies.

\section*{Impact Statement}
This work studies how optimization dynamics influence attention patterns in self-attention models. A potential positive impact is improved understanding of internal model behavior for interpretability and downstream performance. However, sharper attention or improved alignment with annotation-based metrics does not necessarily imply faithful explanation, and should not be used on its own in high-stakes settings.

\section*{Acknowledgements}
We thank the Department of Computer Science and Engineering at IIT Madras for providing travel support through the Kris Gopalakrishnan Endowment Fund.

\bibliography{example_paper}
\bibliographystyle{icml2026}

\clearpage

\clearpage
\onecolumn
\appendix
\section{Supplementary Material}
\subsection{Code Reproducibility}
Code for reproducing the experiments is available at \url{https://github.com/vashishtrahul/Faster-Query-Key-Learning-Sharpens-Attention-in-Self-Attention-Models}.

\subsection{Dataset Details}
\label{ds-details}

\subsubsection{Synthetic Data Section \ref{empirical_analysis}}
The dataset for the task in Section~\ref{empirical_analysis} is generated as follows. We consider $4$ sequence classes, with $4$ query and corresponding next tokens. For each query token, we define a set of $10$ distinct tokens. We also have a set of common tokens with cardinality $10$.  Note that the position (or index) of the distinct and common tokens can be arbitrary. We sample multiple such triples (context tokens, query token, next token), train a single-layer self-attention model on a subset of this dataset, and evaluate it on the remaining data.  The vocabulary size is $50$ tokens and sequence length is $64$. We train the model with $200$ batches with batch size $32$ using stochastic gradient descent. 

\subsubsection{Real-World Dataset Section \ref{exp1}}

Table \ref{tab:dataset-details} lists the three datasets used in the experiments.
\begin{table}[ht]
\centering
\caption{Dataset statistics: number of instances and rationales for each split.}
\label{tab:dataset-details}
\begin{tabular}{|l|r|r|r|r|}
\hline
\textbf{Dataset} & \textbf{Train Inst} & \textbf{Validation Inst.} & \textbf{Test Inst.} & \textbf{Number of Rationales} \\
\hline
Hatexplain  & 15383 & 1922 & 1924  & 9132 (train), 1141 (test/validation) \\
Subject-Verb Agreement & 100K & 20K & 20K & 100K (Train), 20K (Test/Validation)\\
SQuAD QA & 40K & 10K & -  & - \\
\hline
\end{tabular}
\end{table}

Table~\ref{tab:model-architectures} gives model architecture details for each dataset. 
\begin{table}[htbp]
\centering
\caption{Model architecture configurations used in experiments.}
\label{tab:model-architectures}
\setlength{\tabcolsep}{6pt}
\renewcommand{\arraystretch}{1.15}
\begin{tabular}{|l|l|c|c|c|c|c|c|}
\hline
\textbf{Dataset} & \textbf{Variant} & $d_{\text{model}}$ & $n_{\text{heads}}$ & $d_{\text{head}}$ & $d_{\text{mlp}}$ / $d_{\text{ff}}$ & $n_{\text{layers}}$ & $n_{\text{ctx}}$ \\
\hline

HateXplain & 1-layer (baseline) & 64 & 4 & 64 & 256 & 1 & 256 \\
SVA        & 1-layer           & 64 & 4 & 64 & 256 & 1 & 256 \\
HateXplain & 2-layer (baseline) & 32 & 4 & 8  & 128 & 2 & 96  \\
HateXplain & 4-layer           & 64 & 4 & 64 & 256 & 4 & 256 \\
SQuAD      & GPTAnswerGenerator & 300 & 6 & 50 & 1200 & 6 & 256 \\

\hline
\end{tabular}

\end{table}

\noindent
\textbf{HateXplain Data}: We describe the validation and test results for the HateXplain dataset here. For the preprocessing of text data, we follow \cite{Mathew2020HateXplainAB} and clean the data using the same method. For hyperparameter tuning, we first consider the same-learning-rate setting and then use the same initialization for the faster-learning setting, except for the query-key parameters, for which we use a learning rate of $30\times$ the base rate of \(5\mathrm{e}{-5}\).\\

\noindent
\textbf{Hyperparameter Tuning}: We choose the base learning rate by hyperparameter tuning on validation data for learning rates in range $[0.01,0.000001]$. Once we choose the base learning rate, we increase the learning rate of query-key parameters by $5\times$,  $10\times$, $20\times$, $30 \times$, and $100\times$. In the main paper, we observe that increasing the learning rate of the query-key parameters improves interpretability in terms of heatmaps for all the cases. We report the best performing hyperparameters in the paper. For this data, we also experiment with $2-$layer and $4-$layer models.   \\

\noindent
\textbf{Subject Verb Agreement Data}: We further test our hypothesis on the Subject-Verb Agreement (SVA) dataset (binary-classification) \cite{linzen-etal-2016-assessing}, which evaluates the model’s ability to capture long-range dependencies between subjects and verbs. The dataset contains sentences designed to check grammatical number agreement (e.g., whether a singular subject like ``the cat'' pairs with a singular verb like ``runs,'' or a plural subject like ``the cats'' pairs with ``run''). This task provides a controlled setting to examine whether faster learning for query-key parameters, which improved interpretability for HateXplain, generalizes to core syntactic tasks. We use the same single-layer attention-only model. We train the model with SGD and batch size 256, and consider two settings: a same learning-rate setting and a faster-query-key-learning setting.The validation and test results for SVA are described here. For the preprocessing of text data, we follow \cite{linzen-etal-2016-assessing} and clean the data using the same method.  For hyperparameter tuning, we first consider the same-learning-rate setting and then use the same weights for the faster-learning setting, except for the query-key parameters, for which we use a learning rate of $100\times$ the base rate of \(1\mathrm{e}{-5}\). \\

\noindent
\textbf{Hyperparameter Tuning}: We choose the base learning rate by hyperparameter tuning on validation data for learning rates in range $[0.01,0.000001]$. Once we choose the base learning rate, we increase the learning rate of query-key parameters by $5\times$,  $10\times$, $20\times$, $30\times$, and $100\times$. In the main paper, we observe that increasing the learning rate of the query-key parameters improves interpretability in terms of heatmaps for all the cases. We report the best performing hyperparameters in the paper. \\

\noindent
\textbf{SQuAD QA Data}: We also run large-scale experiments on the subset of SQuAD question-answering dataset. Here, we finetune a 6-layer GPT model to generate answers based on given contexts and questions. We use the answer span information only for evaluation of attention-based interpretability. We train the model under two settings: (1) same learning rate for all parameters and (2) faster learning for query-key parameters $(20 \times$ the base rate of \(5\mathrm{e}{-5}\)). We also show the results on proxies for attention-based interpretability after aggregating attention scores using the rollout method \cite{abnar-zuidema-2020-quantifying}. 


\noindent
\textbf{Hyperparameter Tuning}: We choose the base learning rate by hyperparameter tuning on validation data for learning rates in range $[0.01,0.00001]$. Once we choose the base learning rate, we increase the learning rate of query-key parameters by $5\times$,  $10\times$, $20\times$ , $30\times$, and $100\times$. In the main paper, we observe that increasing the learning rate of the query-key parameters improves interpretability proxies for interpretability in terms of heatmaps for all the cases. We report the best performing hyperparameters in the paper. 

All DTAP heatmaps averaged over 5 runs are provided in Appendix ~\ref{add:results}.

\textbf{Standard Deviation Table for Table~\ref{tab:realdata_results}}

\begin{table*}[!ht]
\centering
\caption{We report 95\% confidence intervals, shown as $\pm$ values, for each metric in Table \ref{tab:realdata_results}.}
\label{tab:std_real_results}
\begin{tabular}{|l|l|c|c|c|c|c|c|c|}
\hline
\textbf{Dataset} & \textbf{Setting}
& \begin{tabular}[c]{@{}c@{}}\textbf{Train}\\ \textbf{Performance} $\uparrow$\end{tabular}
& \begin{tabular}[c]{@{}c@{}}\textbf{Test}\\ \textbf{Performance} $\uparrow$\end{tabular}
& \textbf{AC} $\uparrow$
& \textbf{ACMC } $\uparrow$
& \begin{tabular}[c]{@{}c@{}}\textbf{MRTA}\\ \textbf{(Rollout)} $\uparrow$\end{tabular}
& \textbf{Suff. } $\downarrow$
& \textbf{Compr.} $\uparrow$ \\
\hline

SQuAD QA
& Baseline
& $ 0.71$
& $ 0.63 $
& $ 2.34$
& $ 2.41$
& $  0.003$
& $0.012 $
& $0.03$ \\
& Faster QK
& $0.61$
& $  0.40$
& $ 7.29$
& $   6.68$
& $ 0.01$
& $ 0.07$
& $ 0.03$ \\
SVA
& Baseline
& $ 0.84$
& $ 0.03$
& $ 0.29 $
& $  0.31 $
& $ 0.0001$
& $ 0.17 $
& $ 0.14 $\\
& Faster QK
& $ 0.60$
& $ 0.15$
& $ 8.45 $
& $ 8.29 $
& $0.03 $
& $ 0.06 $
& $0.01 $ \\

HateXplain
& Baseline
& $ 0.87 $
& $ 1.04 $
& $ 0.00$
& $ 0.00$
& $0.00$
& $0.06 $
& $0.06 $ \\
& Faster QK
& $  0.01 $
& $ 2.28 $
& $  1.18$
& $2.07$
& $ 0.01 $
& $0.05 $
& $ 0.03  $ \\
\hline
\end{tabular}
\end{table*}

\section{Theorem \ref{thm:orthogonal-dynamics} Proof}
\textbf{Theorem \ref{thm:orthogonal-dynamics} } [Parameter Trajectories]  ~~ Under the orthogonal full-data setting, consider sequences of length $T$ with $m$ distinct tokens and $n$ common tokens. Suppose that $\mW_{OV}(0)=0$ and $\mW_{QK}(0)=0$, and let $\eta_{OV}$ and $\eta_{QK}$ be the learning rates of $\mW_{OV}$ and $\mW_{QK}$, respectively. Then the parameters evolve under population gradient flow as follows:

\[ 
\mW_{OV}^{(k,:)}(t)  = \mu_{OV}(t) \Bigg[  \sum\limits_{\tau=1}^{b} s_{k}^{\tau} - \dfrac{1}{M} \sum\limits_{y=1}^{M} \sum\limits_{\tau=1}^{b} s_{y}^{\tau} \Bigg],   \]

\[ \mW_{QK}^{(k,:)}(t) = \mu_{QK}(t)\Bigg[  \sum\limits_{y=1}^{M} \sum_{\tau=1}^{b} s_y^{\tau} \sum_{\tau'=1}^{b} s_y^{\tau',k} \Bigg],\] 

where the scalars $\mu_{OV}(t)$ and $\mu_{QK}(t)$ vary as follows:
\[ \dfrac{d\mu_{OV}(t)}{dt} = \dfrac{\eta_{OV} \alpha(t)}{ b*(\exp(\alpha(t)\mu_{OV}(t)) + M-1)},  \]

\[ \dfrac{d\mu_{QK}(t)}{dt} =\dfrac{\eta_{QK} (M-1)(\alpha(t)-\alpha^2(t)) \mu_{OV}(t)}{b^2 M(\exp(\alpha(t)\mu_{OV}(t))+M-1)},  \]

\[ \alpha(t) = \dfrac{ m \times \exp(\mu_{QK}(t))}{m*( \exp(\mu_{QK}(t))) + n}. \]

\begin{proof}

We consider the following form of the network output:
\[
\vz = \mW_{OV}\,\widetilde{\vx},
\qquad
\widetilde{\vx} = \mX^\top \boldsymbol{\alpha},
\qquad
\boldsymbol{\alpha}=\mathrm{softmax}(\mX \mW_{QK} \vx_T),
\]
where \( \mW_{OV} \in \mathbb{R}^{M \times d}\),
\( \mW_{QK} \in \mathbb{R}^{d \times d}\),
\(\mX\in\mathbb{R}^{T\times d}\) stacks the sequence tokens,
and \(\vx_T\in\mathbb{R}^d\) is the query token.\\

\textbf{Gradient of cross-entropy with softmax.} \\

Let \(\vz=\mW_{OV}\mX^\top\boldsymbol{\alpha}\) and
\(\vp=\mathrm{softmax}(\vz)\).
For the cross-entropy loss \(\mathcal{L}=-\log \vp_{y}\),
\[
-\nabla_{\mW_{OV}^{(k,:)}} \mathcal{L}
=
\big(\mathbf{1}[y=k]-\vp_k\big)\,\mX^\top\boldsymbol{\alpha}.
\]

Under population gradient flow (averaging over the data distribution),
\begin{align}
-\nabla_{\mW_{OV}^{(k,:)}} \mathcal{L}
&=
\mathbb{E}\!\left[
\big(\mathbf{1}[y=k]-\vp_k\big)\,
\sum_{j=1}^{T}\boldsymbol{\alpha}_j \vx_j^{\top}
\right].
\label{app:eq:ov-popgrad-1}
\end{align}

Note that \(\boldsymbol{\alpha}\) depends on the sampled sequence. Using
linearity of expectation,
\[
\mathbb{E}\!\left[\sum_{j=1}^{T} \boldsymbol{\alpha}_j \vx_j\right]
=
\sum_{j=1}^{T} \mathbb{E}[\boldsymbol{\alpha}_j \vx_j].
\]

Let \(\mathcal{D}\) denote the \(m\) distinct-token positions and
\(\mathcal{C}\) denote the \(n\) common-token positions.
Define the scalar
\[
\alpha(t) := \sum_{j\in\mathcal{D}}\boldsymbol{\alpha}_j,
\]
i.e., the total attention mass on distinct-token positions.
By symmetry, all distinct positions have the same expected weight, and
all common positions have the same expected weight:
\[
\mathbb{E}[\boldsymbol{\alpha}_j] =
\begin{cases}
\alpha(t)/m, & j\in\mathcal{D},\\
(1-\alpha(t))/n, & j\in\mathcal{C}.
\end{cases}
\]

Under the full-data symmetry assumptions for distinct-token sampling,
\[
\mathbb{E}\!\left[\sum_{j\in\mathcal{D}} \vx_j \,\middle|\, y\right]
=
\frac{m}{b}\sum_{\tau=1}^{b} s_y^\tau.
\]
Moreover, since the common-token distribution has zero mean,
\(\mathbb{E}[\nu]=0\), we have
\(\mathbb{E}\!\left[\sum_{j\in\mathcal{C}} \boldsymbol{\alpha}_j \vx_j\right]=0\).

Therefore,
\begin{align}
\mathbb{E}[\mX^\top\boldsymbol{\alpha}\mid y]
&=
\mathbb{E}\!\left[\sum_{j\in\mathcal{D}}\boldsymbol{\alpha}_j \vx_j \,\middle|\, y\right]
=
\frac{\alpha(t)}{m}\,
\mathbb{E}\!\left[\sum_{j\in\mathcal{D}} \vx_j \,\middle|\, y\right]
=
\frac{\alpha(t)}{b}\sum_{\tau=1}^{b}s_y^\tau.
\label{app:eq:xtalpha-exp}
\end{align}

Substituting \eqref{app:eq:xtalpha-exp} into
\eqref{app:eq:ov-popgrad-1} yields
\begin{align}
-\nabla_{\mW_{OV}^{(k,:)}} \mathcal{L}
&=
\frac{1}{M}\sum_{y=1}^{M}
\Big(\mathbf{1}[y=k]-\vp_k\Big)\,
\frac{\alpha(t)}{b}\sum_{\tau=1}^{b}s_y^\tau.
\label{app:eq:ov-popgrad-2}
\end{align}

Now consider the ansatz
\[
\mW_{OV}^{(k,:)}(t)
=
\mu_{OV}(t)
\left[
\sum_{\tau=1}^{b} s_{k}^{\tau}
- \frac{1}{M} \sum_{y'=1}^{M}\sum_{\tau=1}^{b} s_{y'}^{\tau}
\right].
\]
Let
\[
\widetilde{\vs}_k :=
\sum_{\tau=1}^{b} s_{k}^{\tau}
- \frac{1}{M} \sum_{y'=1}^{M}\sum_{\tau=1}^{b} s_{y'}^{\tau}.
\]

Using orthogonality of the vectors \(\{s_y^\tau\}_{y,\tau}\),
\begin{align}
\widetilde{\vs}_k^\top \sum_{\tau=1}^{b}s_y^\tau
&=
\left(\sum_{\tau=1}^{b}s_k^\tau\right)^\top
\left(\sum_{\tau=1}^{b}s_y^\tau\right)
-
\frac{1}{M}
\left(\sum_{y'=1}^{M}\sum_{\tau=1}^{b}s_{y'}^\tau\right)^\top
\left(\sum_{\tau=1}^{b}s_y^\tau\right) \nonumber \\
&=
b\,\mathbf{1}[k=y] - \frac{b}{M}
=
b\!\left(\mathbf{1}[k=y]-\frac{1}{M}\right).
\end{align}

Hence, for a sequence with label \(y\),
\begin{align}
\mW_{OV}\,\mathbb{E}[\mX^\top\boldsymbol{\alpha}\mid y]
&=
\mu_{OV}(t)\frac{\alpha(t)}{b}
\begin{bmatrix}
\widetilde{\vs}_1^\top \sum_{\tau=1}^{b}s_y^\tau \\
\vdots \\
\widetilde{\vs}_M^\top \sum_{\tau=1}^{b}s_y^\tau
\end{bmatrix}
=
\mu_{OV}(t)\alpha(t)
\left(e_y-\frac{1}{M}\mathbf{1}\right).
\label{app:eq:logits-form}
\end{align}

Using shift-invariance of softmax, this implies
\[
\vp=\mathrm{softmax}\!\left(\mu_{OV}(t)\alpha(t)\,e_y\right),
\]
and hence
\begin{align}
\label{app:eq:9}
\vp_{k=y}=\frac{\exp(\alpha(t)\mu_{OV}(t))}
{\exp(\alpha(t)\mu_{OV}(t))+M-1},
\qquad
\vp_{k\neq y}=\frac{1}{\exp(\alpha(t)\mu_{OV}(t))+M-1}.
\end{align}

Using \eqref{app:eq:9} into \eqref{app:eq:ov-popgrad-2}, we have,

\begin{align}
-\nabla_{\mW_{OV}^{(k,:)}} \mathcal{L}
&=
\frac{1}{M} \sum_{y=1}^{M}
\Big(\mathbf{1}[y=k]-\vp_k\Big)\,
\frac{\alpha(t)}{b}\sum_{\tau=1}^{b}s_y^\tau.
\label{app:eq:10}
\end{align}

\begin{align}
-\nabla_{\mW_{OV}^{(k,:)}} \mathcal{L}
&=
\frac{\alpha(t)}{b*M} \Bigg[ \Big[ 1- \vp_{k=y} \Big] \sum_{\tau=1}^{b}s_k^\tau  + \sum_{y=1, y\neq k}^{M} - \vp_{k\neq y}  \sum_{\tau=1}^{b}s_y^\tau \Bigg]
\label{app:eq1:10}
\end{align}

\begin{align*}
-\nabla_{\mW_{OV}^{(k,:)}} \mathcal{L}
&=
\frac{\alpha(t)}{b*M} \Bigg[ \Big[ 1- \frac{\exp(\alpha(t)\mu_{OV}(t))}
{\exp(\alpha(t)\mu_{OV}(t))+M-1} \Big] \sum_{\tau=1}^{b}s_k^\tau   \\ &- \sum_{y=1, y\neq k}^{M} \frac{1}{\exp(\alpha(t)\mu_{OV}(t))+M-1} \sum_{\tau=1}^{b}s_y^\tau \Bigg].
\end{align*}

\begin{align*}
-\nabla_{\mW_{OV}^{(k,:)}} \mathcal{L}
&=
\frac{\alpha(t)}{b*M} \Bigg[ \Big[ \frac{M-1}
{\exp(\alpha(t)\mu_{OV}(t))+M-1} \Big] \sum_{\tau=1}^{b}s_k^\tau   \\ &- \sum_{y=1, y\neq k}^{M} \frac{1}{\exp(\alpha(t)\mu_{OV}(t))+M-1} \sum_{\tau=1}^{b}s_y^\tau \Bigg].
\end{align*}

\begin{align*}
-\nabla_{\mW_{OV}^{(k,:)}} \mathcal{L}
&=
\frac{\alpha(t)}{b*M (\exp(\alpha(t)\mu_{OV}(t))+M-1)} \Bigg[ \Big[ M-1
\Big] \sum_{\tau=1}^{b}s_k^\tau   - \sum_{y=1, y\neq k}^{M} \sum_{\tau=1}^{b}s_y^\tau \Bigg].
\end{align*}

\begin{align*}
-\nabla_{\mW_{OV}^{(k,:)}} \mathcal{L}
&=
\frac{\alpha(t)}{b*M (\exp(\alpha(t)\mu_{OV}(t))+M-1)} \Bigg[  M
 \sum_{\tau=1}^{b}s_k^\tau  - \sum_{y=1}^{M} \sum_{\tau=1}^{b}s_y^\tau \Bigg].
\end{align*}

\begin{align*}
-\nabla_{\mW_{OV}^{(k,:)}} \mathcal{L}
&=
\frac{\alpha(t)}{b(\exp(\alpha(t)\mu_{OV}(t))+M-1)} \Bigg[  
 \sum_{\tau=1}^{b}s_k^\tau  - \dfrac{1}{M}\sum_{y=1}^{M} \sum_{\tau=1}^{b}s_y^\tau \Bigg].
\end{align*}

\begin{align}
\boxed{
-\nabla_{\mW_{OV}^{(k,:)}} \mathcal{L}
=
\frac{\alpha(t)}
{b\left(\exp(\alpha(t)\mu_{OV}(t))+M-1\right)}
\,\widetilde{\vs}_k}.
\end{align}

Therefore, under gradient flow,
\begin{align}
\frac{d \mW_{OV}^{(k,:)}(t)}{d t}
&=
\eta_{OV}\,
\frac{\alpha(t)}
{b\left(\exp(\alpha(t)\mu_{OV}(t))+M-1\right)}
\,\widetilde{\vs}_k .
\label{app:eq:ov-flow}
\end{align}

On the other hand, differentiating the ansatz
\(\mW_{OV}^{(k,:)}(t)=\mu_{OV}(t)\widetilde{\vs}_k \) with respect to time yields
\[
\frac{d \mW_{OV}^{(k,:)}(t)}{d t}
=
\frac{d\mu_{OV}(t)}{dt}\,\widetilde{\vs}_k .
\]

Since \(\widetilde{\vs}_k  \neq 0\) and the same direction \(\widetilde{\vs}_k \) appears in
\eqref{app:eq:ov-flow}, we obtain the scalar differential equation
\[
\frac{d \mu_{OV}(t)}{d t}
=
\frac{\eta_{OV}\,\alpha(t)}
{b\left(\exp(\alpha(t)\mu_{OV}(t))+M-1\right)}.
\]

\textbf{Key-Query Parameter Trajectory Proof}

Recall \(\widetilde{x}=X^\top\boldsymbol{\alpha}\) and
\(\boldsymbol{\alpha}=\mathrm{softmax}(\mX\mW_{QK}x_T)\).
Let \(\vz=\mW_{OV}\widetilde{\vx}\) and \(\vp=\mathrm{softmax}(\vz)\).

\begin{align}
-\nabla_{ \mW_{QK}^{(k,:)}} \mathcal{L} 
&= \frac{1}{M} \sum_{y=1}^{M} \vx_T
\sum_{j=1}^{T} \boldsymbol{\alpha}_j \, \vx_{j,k}\, (\vx_j - \widetilde{\vx})^{\top}
\left[
\mW_{OV}^{(y,:)} - \sum_{k'=1}^{M} \vp_{k'}\, \mW_{OV}^{(k',:)}
\right].
\label{app:eq25}
\end{align}

Using~\eqref{app:eq:logits-form}, the logits satisfy
\[
\mW_{OV}(\mX^\top\boldsymbol{\alpha})
=
\mu_{OV}(t)\alpha(t)\left[\ve_y-\frac{1}{M}\mathbf{1}\right],
\]
and therefore
\begin{align}
\mW_{OV}^{(y,:)} - \sum_{k'=1}^{M} \vp_{k'}\, \mW_{OV}^{(k',:)}
&=
\frac{M\mu_{OV}(t)}{\exp(\alpha(t)\mu_{OV}(t)) + M - 1}
\left[
\sum_{\tau=1}^{b} s_y^{\tau}
- \frac{1}{M} \sum_{k'=1}^{M} \sum_{\tau'=1}^{b} s_{k'}^{\tau'}
\right].
\label{app:eq26}
\end{align}

We now analyze the remaining factor in~\eqref{app:eq25} under population
expectation.  
Let \(\mathcal{D}\) denote the multiset of distinct-token positions in the
sequence (\(|\mathcal{D}|=m\), counting multiplicities), and
\(\mathcal{C}\) denote the multiset of common-token positions
(\(|\mathcal{C}|=n\)).

Under the trajectory ansatz for \(W_{QK}\),
\[
W_{QK}^{(k,:)}(t)
=
\mu_{QK}(t)
\sum_{y=1}^{M}
\left(\sum_{\tau=1}^{b}s_y^{\tau}\right)
\left(\sum_{\tau'=1}^{b}s_y^{\tau',k}\right),
\]
the attention scores \(\vu_j = \vx_j^\top \mW_{QK}\vx_T\) take only two values:
for distinct-token positions \(j\in\mathcal{D}\),
\(\vu_j=\mu_{QK}(t)\), while for common-token positions \(j\in\mathcal{C}\),
\(\vu_j=0\). Consequently, the softmax attention vector
\(\boldsymbol{\alpha}\) is uniform within each group:
\[
\boldsymbol{\alpha}_j =
\begin{cases}
\alpha(t)/m, & j\in\mathcal{D},\\
(1-\alpha(t))/n, & j\in\mathcal{C},
\end{cases}
\qquad
\alpha(t)=\sum_{j\in\mathcal{D}}\boldsymbol{\alpha}_j.
\]

Writing \(\vx_j=s_y^\tau\) for distinct tokens and \(\vx_j=\nu^\tau\) for
common tokens, the population expectation of the first factor in
\eqref{app:eq25} becomes
\begin{align}
\frac{1}{M}\sum_{y=1}^{M} \vx_T
\sum_{j=1}^{T} \alpha_j \, \vx_{j,k}\, (\vx_j - \widetilde{\vx})^{\top}
&=
\frac{1}{M}\sum_{y=1}^{M}
\left(\frac{1}{b}\sum_{r=1}^{b}s_y^{r}\right)
\Bigg\{
\frac{\alpha(t)}{m}\sum_{\tau=1}^{b}
s_y^{\tau,k}\Bigg(s_y^{\tau}
-
\frac{\alpha(t)}{m}\sum_{\tau'=1}^{b}s_y^{\tau'}
-
\frac{1-\alpha(t)}{n}\sum_{\tau''=1}^{n}\nu^{\tau''}\Bigg)^{\!\top}
\nonumber \\
&\qquad\qquad\qquad\qquad
+
\frac{1-\alpha(t)}{n}\sum_{\tau=1}^{n}
\nu^{\tau,k}\Bigg(\nu^{\tau}
-
\frac{\alpha(t)}{m}\sum_{\tau'=1}^{b}s_y^{\tau'}
-
\frac{1-\alpha(t)}{n}\sum_{\tau''=1}^{n}\nu^{\tau''}\Bigg)^{\!\top}
\Bigg\}.
\label{app:eq27}
\end{align}

Substituting~\eqref{app:eq26} and~\eqref{app:eq27} into~\eqref{app:eq25}
yields
\begin{align}
-\nabla_{ \mW_{QK}^{(k,:)}} \mathcal{L}
&=
\frac{\mu_{OV}(t)}{\exp(\alpha(t)\mu_{OV}(t)) + M - 1}
\Bigg[
\sum_{y=1}^{M}
\left(\frac{1}{b}\sum_{r=1}^{b}s_y^{r}\right)
\{\cdots\}
\Bigg]
\left[
\sum_{\tau=1}^{b} s_y^{\tau}
- \frac{1}{M} \sum_{k'=1}^{M} \sum_{\tau'=1}^{b} s_{k'}^{\tau'}
\right],
\label{app:eq28}
\end{align}
where \(\{\cdots\}\) denotes the bracketed expression in~\eqref{app:eq27}.

Under the orthogonality assumptions and the zero-mean condition
\(\mathbb{E}[\nu^\tau]=0\), all cross terms involving common tokens vanish
under the population expectation. The expression therefore reduces to the
distinct-token component:
\begin{align}
-\nabla_{ \mW_{QK}^{(k,:)}} \mathcal{L}
&=
\frac{\mu_{OV}(t)}{\exp(\alpha(t)\mu_{OV}(t)) + M - 1}
\sum_{y=1}^{M}
\left(\frac{1}{b}\sum_{r=1}^{b}s_y^{r}\right)
\widetilde{C}_y \, V_y,
\label{app:eq29}
\end{align}
where
\begin{align}
\widetilde{C}_y
&=
\frac{\alpha(t)}{m}
\sum_{\tau=1}^{b}
s_y^{\tau,k}
\left(
s_y^{\tau}
-
\frac{\alpha(t)}{m}\sum_{\tau'=1}^{b}s_y^{\tau'}
\right)^{\!\top},
\label{app:eq30}
\\
V_y
&=
\left[
\sum_{\tau=1}^{b} s_y^{\tau}
- \frac{1}{M} \sum_{k'=1}^{M} \sum_{\tau'=1}^{b} s_{k'}^{\tau'}
\right].
\label{app:eq31}
\end{align}

Using orthogonality of \(\{s_y^\tau\}\), the product
\(\left(\frac{1}{b}\sum_{r=1}^{b}s_y^{r}\right)\widetilde{C}_yV_y\)
simplifies to a rank-one term, yielding
\begin{align}
\boxed{
-\nabla_{\mW_{QK}^{(k,:)}} \mathcal{L}
=
\frac{\mu_{OV}(t)(M-1)\alpha(t)\big(1-\alpha(t)\big)}
{M b^{2}\big(\exp(\alpha(t)\mu_{OV}(t))+M-1\big)}
\sum_{y=1}^{M}
\left(\sum_{\tau=1}^{b}s_y^{\tau}\right)
\left(\sum_{\tau'=1}^{b}s_y^{\tau',k}\right)
}.
\label{app:eq32}
\end{align}

In continuous time,
\begin{align}
\frac{d \mW_{QK}^{(k,:)}}{d t}
&=
\eta_{QK}\,
\frac{\mu_{OV}(t)(M-1)\alpha(t)\big(1-\alpha(t)\big)}
{M b^{2}\big(\exp(\alpha(t)\mu_{OV}(t))+M-1\big)}
\sum_{y=1}^{M}
\left(\sum_{\tau=1}^{b}s_y^{\tau}\right)
\left(\sum_{\tau'=1}^{b}s_y^{\tau',k}\right),
\label{app:eq33}
\end{align}
and hence
\begin{align}
\frac{d \mu_{QK}(t)}{d t}
&=
\eta_{QK}\,
\frac{\mu_{OV}(t)(M-1)\alpha(t)\big(1-\alpha(t)\big)}
{M b^{2}\big(\exp(\alpha(t)\mu_{OV}(t))+M-1\big)}.
\label{app:eq34}
\end{align}

Finally, since distinct-token positions have score \(\mu_{QK}(t)\) and common-token positions have score \(0\), the total attention mass on distinct-token positions is
\[
\alpha(t)
=
\frac{m\exp(\mu_{QK}(t))}
{m\exp(\mu_{QK}(t))+n}.
\]
\end{proof}

\section{Lemma \ref{lem:growth-rates} Proof}

\textbf{Lemma \ref{lem:growth-rates}}  [Part I: $\mu_{OV}(t)$ Bounds] ~~ Let $\mu_{OV}(t)$ evolve as
\[
\dot\mu_{OV}(t)
=
\frac{\eta_{OV}\,\alpha(t)}
{b\big(e^{\alpha(t)\mu_{OV}(t)}+M-1\big)},
\qquad
\mu_{OV}(0)=0,
\]
where $\alpha(t)=\frac{m e^{\mu_{QK}(t)}}{m e^{\mu_{QK}(t)}+n}$ and $\alpha_0:=\alpha(0)=\frac{m}{m+n}$. Then for all $t\ge 0$,
\begin{align}
\log\!\Big(1+\frac{\eta_{OV}\alpha_0}{Mb}\,t\Big)
\;\le\;
\mu_{OV}(t)
\;\le\;
\frac{1}{\alpha_0}\log\!\Big(1+\frac{\eta_{OV}\alpha_0}{b}\,t\Big).
\label{eq:muov-final-icml}
\end{align}

\begin{proof}

From $ \dot{\mu}_{QK}(t) \geq 0$ and $\mu_{QK} = 0$, we have $\mu_{QK}(t) \geq 0$ and non-decreasing; since $\alpha()$ is increasing, $\alpha(t)$ is non-decreasing and $\alpha_0 \leq \alpha(t) \leq 1$

For any $x\ge 0$, $e^x \le e^x+M-1 \le M e^x$, hence
\[
\frac{1}{M}e^{-x}\le \frac{1}{e^x+M-1}\le e^{-x}.
\]
Apply this with $x=\alpha(t)\mu_{OV}(t)$ in the ODE to obtain
\begin{align}
\frac{\eta_{OV}\alpha(t)}{Mb}e^{-\alpha(t)\mu_{OV}(t)}
\;\le\;
\dot\mu_{OV}(t)
\;\le\;
\frac{\eta_{OV}\alpha(t)}{b}e^{-\alpha(t)\mu_{OV}(t)}.
\label{eq:muov-sandwich}
\end{align}

Using $\alpha(t)\le 1$ and $e^{-\alpha(t)\mu}\le e^{-\alpha_0\mu}$ (since $\alpha(t)\ge\alpha_0$ and $\mu\ge0$),
the right inequality of \eqref{eq:muov-sandwich} implies
\[
\dot\mu_{OV}(t)\le \frac{\eta_{OV}}{b}\,e^{-\alpha_0\mu_{OV}(t)}.
\]
Multiplying by $\alpha_0 e^{\alpha_0\mu_{OV}(t)}$ gives
\[
\alpha_0 e^{\alpha_0\mu_{OV}(t)}\dot\mu_{OV}(t)\le \frac{\eta_{OV}\alpha_0}{b},
\]
i.e.,
\[
\frac{d}{dt}\Big(e^{\alpha_0\mu_{OV}(t)}\Big)\le \frac{\eta_{OV}\alpha_0}{b}.
\]
Integrating from $0$ to $t$ and using $e^{\alpha_0\mu_{OV}(0)}=1$ yields
\[
e^{\alpha_0\mu_{OV}(t)} \le 1+\frac{\eta_{OV}\alpha_0}{b}\,t.
\]
Applying $\log(\cdot)$ and dividing by $\alpha_0$ gives
\[
\mu_{OV}(t)\le \frac{1}{\alpha_0}\log\!\Big(1+\frac{\eta_{OV}\alpha_0}{b}\,t\Big).
\]

Using $\alpha(t)\ge \alpha_0$ and $e^{-\alpha(t)\mu}\ge e^{-\mu}$ (since $\alpha(t)\le 1$ and $\mu\ge0$),
the left inequality of \eqref{eq:muov-sandwich} implies
\[
\dot\mu_{OV}(t)\ge \frac{\eta_{OV}\alpha_0}{Mb}\,e^{-\mu_{OV}(t)}.
\]
Multiplying by $e^{\mu_{OV}(t)}$ gives
\[
e^{\mu_{OV}(t)}\dot\mu_{OV}(t)\ge \frac{\eta_{OV}\alpha_0}{Mb},
\]
i.e.,
\[
\frac{d}{dt}\Big(e^{\mu_{OV}(t)}\Big)\ge \frac{\eta_{OV}\alpha_0}{Mb}.
\]
Integrating from $0$ to $t$ and using $e^{\mu_{OV}(0)}=1$ yields
\[
e^{\mu_{OV}(t)} \ge 1+\frac{\eta_{OV}\alpha_0}{Mb}\,t.
\]
Applying $\log(\cdot)$ gives
\[
\mu_{OV}(t)\ge \log\!\Big(1+\frac{\eta_{OV}\alpha_0}{Mb}\,t\Big).
\]
Combining the two bounds proves \eqref{eq:muov-final-icml}.
\end{proof}

\vspace{3em}

\textbf{Lemma \ref{lem:growth-rates} } [Part II: $\mu_{QK}(t)$ Bounds] ~~ Let $\mu_{OV}(t),\mu_{QK}(t)$ satisfy the coupled ODEs
\[
\dot\mu_{OV}(t)
=
\frac{\eta_{OV}\,\alpha(t)}
{b\big(e^{\alpha(t)\mu_{OV}(t)}+M-1\big)},
\qquad
\dot\mu_{QK}(t)
=
\eta_{QK}
\frac{(M-1)\alpha(t)(1-\alpha(t))\mu_{OV}(t)}
{Mb^2\big(e^{\alpha(t)\mu_{OV}(t)}+M-1\big)},
\]
with $\mu_{OV}(0)=\mu_{QK}(0)=0$ and $\alpha(t)=\frac{m e^{\mu_{QK}(t)}}{m e^{\mu_{QK}(t)}+n}$.
Let $\alpha_0=\frac{m}{m+n}$.


For all $t\ge 0$,
\begin{align}
\mu_{QK}(t)
\;\le\;
\frac{\eta_{QK}}{\eta_{OV}}\,
\frac{(M-1)(1-\alpha_0)}{2Mb\,\alpha_0^2}
\,
\log^2\!\Big(1+\frac{\eta_{OV}\alpha_0}{b}\,t\Big).
\label{eq:muqk-upper-logsq}
\end{align}


Fix $\delta\in(0,1)$ and assume
\begin{align}
\alpha(t)\le 1-\delta \qquad \text{for all } t\in[0,T].
\label{eq:alpha-presat-icml}
\end{align}
Then for all $t\in[0,T]$,
\begin{align}
\mu_{QK}(t)
\;\ge\;
\frac{\eta_{QK}}{\eta_{OV}}\,
\frac{(M-1)\delta}{2Mb}
\,
\log^2\!\Big(1+\frac{\eta_{OV}\alpha_0}{Mb}\,t\Big).
\label{eq:muqk-lower-logsq}
\end{align}

\begin{proof}
Divide the $\mu_{QK}$-ODE by the $\mu_{OV}$-ODE to cancel the common denominator:
\begin{align}
\frac{d\mu_{QK}}{d\mu_{OV}}
=
\frac{\dot\mu_{QK}}{\dot\mu_{OV}}
=
\frac{\eta_{QK}}{\eta_{OV}}\,
\frac{M-1}{Mb}\,
(1-\alpha(t))\,\mu_{OV}(t).
\label{eq:ratio-icml}
\end{align}

Since $\alpha(t)$ is nondecreasing and $\alpha(0)=\alpha_0$, we have $\alpha(t)\ge\alpha_0$ and hence
$1-\alpha(t)\le 1-\alpha_0$.
Using this in \eqref{eq:ratio-icml} yields
\[
\frac{d\mu_{QK}}{d\mu_{OV}}
\le
\frac{\eta_{QK}}{\eta_{OV}}\,
\frac{M-1}{Mb}\,
(1-\alpha_0)\,\mu_{OV}.
\]
Integrating over $\mu_{OV}\in[0,\mu_{OV}(t)]$ gives
\[
\mu_{QK}(t)
\le
\frac{\eta_{QK}}{\eta_{OV}}\,
\frac{M-1}{Mb}\,
(1-\alpha_0)
\int_{0}^{\mu_{OV}(t)} u\,du
=
\frac{\eta_{QK}}{\eta_{OV}}\,
\frac{(M-1)(1-\alpha_0)}{2Mb}\,
\mu_{OV}(t)^2.
\]
Applying the upper bound from Lemma~\ref{lem:growth-rates},
\[
\mu_{OV}(t)\le \frac{1}{\alpha_0}\log\!\Big(1+\frac{\eta_{OV}\alpha_0}{b}\,t\Big),
\]
we obtain
\[
\mu_{QK}(t)
\le
\frac{\eta_{QK}}{\eta_{OV}}\,
\frac{(M-1)(1-\alpha_0)}{2Mb\,\alpha_0^2}\,
\log^2\!\Big(1+\frac{\eta_{OV}\alpha_0}{b}\,t\Big),
\]
which is exactly \eqref{eq:muqk-upper-logsq}.

Assume \eqref{eq:alpha-presat-icml}. Then $1-\alpha(t)\ge \delta$ for all $t\in[0,T]$.
Using this in \eqref{eq:ratio-icml} yields
\[
\frac{d\mu_{QK}}{d\mu_{OV}}
\ge
\frac{\eta_{QK}}{\eta_{OV}}\,
\frac{M-1}{Mb}\,
\delta\,\mu_{OV}.
\]
Integrating over $\mu_{OV}\in[0,\mu_{OV}(t)]$ gives
\[
\mu_{QK}(t)
\ge
\frac{\eta_{QK}}{\eta_{OV}}\,
\frac{M-1}{Mb}\,
\delta
\int_{0}^{\mu_{OV}(t)} u\,du
=
\frac{\eta_{QK}}{\eta_{OV}}\,
\frac{(M-1)\delta}{2Mb}\,
\mu_{OV}(t)^2.
\]
Applying the lower bound of $\mu_{OV}$ from Lemma~\ref{lem:growth-rates},
\[
\mu_{OV}(t)\ge \log\!\Big(1+\frac{\eta_{OV}\alpha_0}{Mb}\,t\Big),
\]
we obtain
\[
\mu_{QK}(t)
\ge
\frac{\eta_{QK}}{\eta_{OV}}\,
\frac{(M-1)\delta}{2Mb}\,
\log^2\!\Big(1+\frac{\eta_{OV}\alpha_0}{Mb}\,t\Big),
\]
which is exactly \eqref{eq:muqk-lower-logsq}.
\end{proof}

\section{Relation Between Collapsed and Factorized Parameterization}
\label{appendix:omega}

In this section, we provide proofs for Proposition \ref{prop1},  Lemma \ref{lemma1},  and Lemma \ref{lemma2}. we also state the corresponding proposition and lemmas for QK circuit in proposition \ref{prop2}, \ref{app:lemma4} and \ref{app:lemma5}. 
 
\subsection{Proposition \ref{prop1} Proof}
\noindent\textbf{Proposition \ref{prop1}}  Consider the network output
\[
\hat{y} = \bW_O \bW_V \widetilde{x},
\qquad
\bW_O \in \mathbb{R}^{M \times d},\ 
\bW_V \in \mathbb{R}^{d \times d},\ 
\widetilde{x}\in\mathbb{R}^{d},
\]
and define the collapsed output-value parameter $\bW_{OV} := \bW_O \bW_V$.
Let $\widetilde{y}=\BS(\hat{y})$ and let $\mathcal{L}$ be the cross-entropy loss
\[
\mathcal{L} = -\log \left( \frac{\exp(\hat{y}_c)}{\sum_{k} \exp(\hat{y}_k)} \right).
\]
Then,
\begin{align}
\frac{\partial \mathcal{L}}{\partial \bW_{OV}}
&= (\widetilde{y}-y)\,\widetilde{x}^{\top}.
\label{eq:prop_ov_grad}
\end{align}
By chain rule,
\begin{align}
\frac{\partial \mathcal{L}}{\partial \bW_{O}}
&= (\widetilde{y}-y)\,(\bW_V \widetilde{x})^{\top},
\label{eq:prop_ov_chain1}\\
\frac{\partial \mathcal{L}}{\partial \bW_{V}}
&= \bW_O^\top(\widetilde{y}-y)\,\widetilde{x}^\top.
\label{eq:prop_ov_chain2}
\end{align}
Under gradient flow, the dynamics are
\begin{align}
\frac{d \bW_{OV}}{dt}
&= (\widetilde{y}-y)\,\widetilde{x}^{\top},
\label{eq:prop_gf_wov1}\\
\frac{d \bW_{O}}{dt}
&= (\widetilde{y}-y)\,\widetilde{x}^{\top}\bW_V^{\top},
\label{eq:prop_gf_wo}\\
\frac{d \bW_{V}}{dt}
&= \bW_O^{\top}(\widetilde{y}-y)\,\widetilde{x}^{\top}.
\label{eq:prop_gf_wv}
\end{align}

\begin{proof}
Equation~\eqref{eq:prop_ov_grad} is the standard softmax cross-entropy gradient.
Equations~\eqref{eq:prop_ov_chain1} \eqref{eq:prop_ov_chain2} follow by chain rule,
and \eqref{eq:prop_gf_wov1}-\eqref{eq:prop_gf_wv} are the corresponding gradient-flow updates.
\end{proof}

\vspace{0.5em}

\subsection{Lemma \ref{lemma1} Proof}
\noindent\textbf{Lemma \ref{lemma1}} [OV Parameters Balancedness] ~~ Under gradient flow and balanced initialization,
\[
\bW_V(t)\bW_V^{\top}(t) = \bW_O^{\top}(t)\bW_O(t)
\qquad \forall t.
\]

\begin{proof}
Using \eqref{eq:prop_gf_wo}--\eqref{eq:prop_gf_wv},
\begin{align}
\frac{d}{dt}\big(\bW_V \bW_V^{\top}\big)
&= \bW_V \left(\frac{d \bW_V}{dt}\right)^{\top}
 + \left(\frac{d \bW_V}{dt}\right)\bW_V^{\top} \nonumber\\
&= \bW_V \widetilde{x}(\widetilde{y} - y)^{\top} \bW_O
 + \bW_O^{\top}(\widetilde{y} - y)\widetilde{x}^{\top} \bW_V^{\top}.
\label{eq:ov_bal_11}
\end{align}
Similarly,
\begin{align}
\frac{d}{dt}\big(\bW_O^{\top} \bW_O\big)
&= \left(\frac{d \bW_O}{dt}\right)^{\top} \bW_O
 + \bW_O^{\top} \left(\frac{d \bW_O}{dt}\right) \nonumber\\
&= \bW_V \widetilde{x}(\widetilde{y} - y)^{\top} \bW_O
 + \bW_O^{\top}(\widetilde{y} - y)\widetilde{x}^{\top} \bW_V^{\top}.
\label{eq:ov_bal_2}
\end{align}
Comparing \eqref{eq:ov_bal_11} and \eqref{eq:ov_bal_2} gives
\[
\frac{d}{dt}\left(\bW_V \bW_V^{\top} - \bW_O^{\top} \bW_O\right)=0.
\]
Hence the difference is constant in time, and under (approximately) balanced initialization it is $0$,
so $\bW_V\bW_V^{\top}=\bW_O^{\top}\bW_O$ for all $t$.
\end{proof}

\vspace{0.5em}

\subsection{Lemma \ref{lemma2} Proof}
\noindent\textbf{Lemma \ref{lemma2} } [Relation Between Factorized and Collapsed OV parameterization] ~~ Let $\widetilde{\bW}_{OV}=\bW_O\bW_V$ and $\bW_{OV}$ be the collapsed parameter.
Then
\begin{align}
\operatorname{vec}\!\left(\frac{\partial \widetilde{\bW}_{OV}}{\partial t}\right)
&=
\Omega(\bW_{OV})\,
\operatorname{vec}\!\left(\frac{\partial \bW_{OV}}{\partial t}\right),
\label{eq:ov_final}
\end{align}
where
\[
\Omega(C) := (C^{\top}C)^{1/2}\otimes I_M + I_d \otimes (CC^{\top})^{1/2},
\]
and $\Omega(\alpha C)=\alpha\,\Omega(C)$ (degree-1 homogeneous).

\begin{proof}

From Lemma \ref{lemma1}, $\mW_V\mW_V^{\top}=\mW_O^{\top}\mW_O$.
For fixed $t$, write $\mW_O=\mU_O\Sigma_O \mV_O^{\top}$ and $\mW_V=\mU_V\Sigma_V \mV_V^{\top}$.
The Gram matrices share eigenvalues
\[
\Sigma_O^{\top}\Sigma_O=\Sigma_V\Sigma_V^{\top}=diag(\rho_1 I_{d_1},\ldots,\rho_m I_{d_m}),
\]
hence $\Sigma_O=\Sigma_V=diag(\sqrt{\rho_1}I_{d_1},\ldots,\sqrt{\rho_m}I_{d_m})$.
Therefore,

\begin{align}
\mW_O\mW_O^{\top}
&=\mU_O diag(\rho_1 I_{d_1},\ldots,\rho_m I_{d_m}) \mU_O^{\top},
\label{eq:ov_wowot_app}\\
\mW_V^{\top}\mW_V
&=\mV_V diag(\rho_1 I_{d_1},\ldots,\rho_m I_{d_m}) \mV_V^{\top}.
\label{eq:ov_wvtwv_app}
\end{align}

Moreover, since $\bW_{OV}=\bW_O\bW_V$,

\begin{align}
\mW_{OV}\mW_{OV}^{\top}
&=\mU_O diag(\rho_1^2 I_{d_1},\ldots,\rho_m^2 I_{d_m}) \mU_O^{\top},
\label{eq:ov_wovwovt_app}\\
\mW_{OV}^{\top}\mW_{OV}
&=\mV_V diag(\rho_1^2 I_{d_1},\ldots,\rho_m^2 I_{d_m}) \mV_V^{\top}.
\label{eq:ov_wovtwov_app}
\end{align}
Thus,
\[
\mW_O\mW_O^{\top}=(\bW_{OV}\bW_{OV}^{\top})^{1/2},
\qquad
\bW_V^{\top}\bW_V=(\bW_{OV}^{\top}\bW_{OV})^{1/2}.
\]

Using the product rule,
\[
\frac{d \widetilde{\bW}_{OV}}{dt}
= \frac{d \bW_O}{dt}\bW_V + \bW_O\frac{d \bW_V}{dt}.
\]
Substituting \eqref{eq:prop_gf_wo}-\eqref{eq:prop_gf_wv} and the identities above yields
\begin{align}
\frac{d \widetilde{\bW}_{OV}}{dt}
&= (y-\widetilde{y})\widetilde{\vx}^{\top}(\bW_{OV}^{\top}\bW_{OV})^{1/2}
 + (\bW_{OV}\bW_{OV}^{\top})^{1/2}(y-\widetilde{y})\widetilde{\vx}^{\top}.
\label{eq:ov_pseudo_update_app}
\end{align}
\begin{align}
\operatorname{vec}\!\left(\frac{d \widetilde{\bW}_{OV}}{dt}\right)
&=
\Big((\bW_{OV}^{\top}\bW_{OV})^{1/2}\otimes I_M
+ I_d\otimes(\bW_{OV}\bW_{OV}^{\top})^{1/2}\Big)
\operatorname{vec}\!\big((y-\widetilde{y})\widetilde{\vx}^{\top}\big).
\end{align}
Using  \eqref{eq:prop_ov_grad} and \eqref{eq:prop_gf_wov1} ,
\[
\operatorname{vec}\!\left(\frac{d \widetilde{\bW}_{OV}}{dt}\right)
=
\Omega(\bW_{OV})\,
\operatorname{vec}\!\left(\frac{d \bW_{OV}}{dt}\right),
\]
which is \eqref{eq:ov_final}.
\end{proof}

\vspace{0.5em}

\subsection{Proposition and Lemma for QK circuit}

\begin{proposition}
\label{prop2}
Consider $\hat{y}=\bW_O\bW_V\widetilde{x}$ with $\widetilde{x}=X^{\top}A$ and
$A=\BS(\bX\bW_K\bW_Q^{\top}\bx_T)$, and define $\bW_{QK}=\bW_K\bW_Q^{\top}$.

Let $Z=Diag(A)-AA^{\top}$. Then the gradient w.r.t.\ $\bW_{QK}$ is
\begin{align}
\frac{\partial \mathcal{L}}{\partial \bW_{QK}}
&= \bx_T (\widetilde{y} - y)^{\top} \bW_O\bW_V \bX^{\top} Z \bX
=: \boldsymbol{G}.
\label{eq:qk_grad_app1}
\end{align}
By chain rule,
\begin{align}
\frac{\partial \mathcal{L}}{\partial \bW_Q}
&= \boldsymbol{G}^{\top}\bW_K,
\label{eq:qk_chain3_app}\\
\frac{\partial \mathcal{L}}{\partial \bW_K}
&= \boldsymbol{G}\bW_Q.
\label{eq:qk_chain4_app}
\end{align}

Under gradient flow, the dynamics are
\begin{align}
\frac{d \bW_{QK}}{dt}
&= \boldsymbol{G},
\label{eq:prop_gf_wov}\\
\frac{d \bW_{Q}}{dt}
&= \boldsymbol{G}^{\top}\bW_K,
\label{eq:prop_gf_wq}\\
\frac{d \bW_{K}}{dt}
&= \boldsymbol{G}\bW_Q.
\label{eq:prop_gf_wk}
\end{align}

\end{proposition}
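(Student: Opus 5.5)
The plan is to derive the three gradient formulas by differentiating through the composition
\[
\mathbf{W}_{QK}\mapsto \mathbf{u}=\bX\bW_{QK}\bx_T\mapsto A=\BS(\mathbf{u})\mapsto \widetilde{x}=\bX^\top A\mapsto \hat y=\bW_O\bW_V\widetilde{x}\mapsto\mathcal L .
\]
The gradient-flow equations then follow by taking negative gradients. The proof is a chain-rule computation, organized so that each factor is a standard Jacobian.

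\textbf{Step 1: backpropagate to $A$.} As in Proposition~\ref{prop1}, the softmax cross-entropy gives $\partial\mathcal L/\partial\hat y=\widetilde y-y$. Since $\hat y=\bW_O\bW_V\bX^\top A$ is linear in $A$, we get
\[
\frac{\partial\mathcal L}{\partial A}=\bX\bW_V^\top\bW_O^\top(\widetilde y-y).
\]

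\textbf{Step 2: backpropagate to $\mathbf{u}$.} The softmax Jacobian is $\partial A/\partial\mathbf{u}=\mathrm{Diag}(A)-AA^\top=Z$, which is symmetric. Hence
\[
\frac{\partial\mathcal L}{\partial\mathbf{u}}=Z\bX\bW_V^\top\bW_O^\top(\widetilde y-y).
\]

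\textbf{Step 3: backpropagate to $\bW_{QK}$.} We have $u_j=\bx_j^\top\bW_{QK}\bx_T$, so $\partial u_j/\partial\bW_{QK}=\bx_j\bx_T^\top$. This gives
\[
\frac{\partial\mathcal L}{\partial\bW_{QK}}=\sum_j\frac{\partial\mathcal L}{\partial u_j}\,\bx_j\bx_T^\top=\bX^\top Z\bX\bW_V^\top\bW_O^\top(\widetilde y-y)\bx_T^\top .
\]
This is the transpose of the expression $\boldsymbol G$ in \eqref{eq:qk_grad_app1}. The main obstacle is this layout convention: I would verify entrywise whether $\boldsymbol G$ is meant in numerator or denominator layout, and state the identity up to that transpose. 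The convention has to be chosen consistently with how $\bW_{QK}$ enters the score, $\bX\bW_K\bW_Q^\top\bx_T$.

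\textbf{Step 4: chain rule through $\bW_{QK}=\bW_K\bW_Q^\top$.} For a perturbation $d\bW_K$, we have $d\bW_{QK}=d\bW_K\,\bW_Q^\top$. Then
\[
d\mathcal L=\langle \boldsymbol G,d\bW_K\bW_Q^\top\rangle=\langle \boldsymbol G\bW_Q,d\bW_K\rangle,
\]
so $\partial\mathcal L/\partial\bW_K=\boldsymbol G\bW_Q$. Symmetrically, $d\bW_{QK}=\bW_K\,d\bW_Q^\top$ gives
\[
\langle \boldsymbol G,\bW_K d\bW_Q^\top\rangle=\langle \bW_K^\top\boldsymbol G,d\bW_Q^\top\rangle=\langle \boldsymbol G^\top\bW_K,d\bW_Q\rangle,
\]
so $\partial\mathcal L/\partial\bW_Q=\boldsymbol G^\top\bW_K$. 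These are exactly the matrix analogues of \eqref{eq:chainrule}.

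\textbf{Step 5: gradient flow.} The flows \eqref{eq:prop_gf_wov}–\eqref{eq:prop_gf_wk} follow by writing $\dot\theta=-\partial\mathcal L/\partial\theta$. The sign is absorbed as in Proposition~\ref{prop1}, where $\widetilde y-y$ is replaced by $y-\widetilde y$ in the flow. I would note this sign convention explicitly and keep it consistent with the OV case.
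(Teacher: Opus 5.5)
Your route is the same as the paper's, whose proof just cites the ``standard gradient'' and the chain rule. Your Steps 1--3 are correct. With $u=\bX\bW_{QK}\bx_T$, the Frobenius gradient is $H:=\bX^\top Z\bX\bW_V^\top\bW_O^\top(\widetilde y-y)\bx_T^\top=v\,\bx_T^\top$, where $v:=\bX^\top Z\bX\bW_V^\top\bW_O^\top(\widetilde y-y)$. Since $Z=Z^\top$, the displayed $\boldsymbol G$ equals $H^\top=\bx_T v^\top$. These differ unless $v\parallel\bx_T$. So you have found a genuine transpose error in the statement, and the paper's proof does not catch it.

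\textbf{Where your proposal breaks.} The weak point is how you dispose of this error. Saying you will ``state the identity up to that transpose'' by choosing a layout convention does not work, because the transpose propagates into the chain-rule step.

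\begin{itemize}
\item Your Step 4 uses $d\mathcal L=\langle\boldsymbol G,d\bW_{QK}\rangle$, which holds only if $\boldsymbol G=H$.
\item Suppose you keep $\boldsymbol G$ as displayed, so that \eqref{eq:qk_grad_app1} is true in numerator layout. Then $d\mathcal L=\langle\boldsymbol G^\top,d\bW_{QK}\rangle$.
\item Your own Step 4 computation then gives $\nabla_{\bW_K}\mathcal L=\boldsymbol G^\top\bW_Q$ and $\nabla_{\bW_Q}\mathcal L=\boldsymbol G\bW_K$. These are swapped relative to \eqref{eq:qk_chain3_app}--\eqref{eq:qk_chain4_app}.
\item Reading those two identities in numerator layout as well gives $\bW_Q^\top\boldsymbol G$ and $\bW_K^\top\boldsymbol G^\top$, which still do not match.
\end{itemize}

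So no single convention makes all the displayed identities true with $\boldsymbol G$ as written. Your Steps 3 and 4 silently use two different meanings of $\boldsymbol G$.

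\textbf{The fix.} Commit to the correction instead:
\begin{itemize}
\item Define $\boldsymbol G:=\bX^\top Z\bX\bW_V^\top\bW_O^\top(\widetilde y-y)\bx_T^\top$, the transpose of the displayed formula. Step 4 then goes through verbatim.
\item Write the flows with the explicit sign: $\dot{\bW}_{QK}=-\boldsymbol G$, $\dot{\bW}_Q=-\boldsymbol G^\top\bW_K$, $\dot{\bW}_K=-\boldsymbol G\bW_Q$. Do not say the sign is ``absorbed as in Proposition~\ref{prop1}''; that proposition's displayed flows also omit the minus sign.
\end{itemize}

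With these two corrections, your argument is a complete proof of the intended statement. The later balancedness and preconditioning lemmas are purely algebraic in $\boldsymbol G$, holding with any matrix in its place, so they survive the correction.
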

\begin{proof}
Equation~\eqref{eq:qk_grad_app1} is the standard gradient with respect to the $\bW_{QK}$ parameters.
Equations~\eqref{eq:qk_chain3_app} and \eqref{eq:qk_chain4_app} follow by chain rule,
and \eqref{eq:prop_gf_wq} and \eqref{eq:prop_gf_wk} are the corresponding gradient-flow updates.
\end{proof}

\begin{lemma}[QK Parameter Balancedness]
\label{app:lemma4} 
Let $\bW_{QK}=\bW_K\bW_Q^{\top}$ and define $\boldsymbol{G}$ as in Equation \eqref{eq:qk_grad_app1}.
Under gradient flow and balanced initialization,
\[
\bW_Q^{\top}(t)\bW_Q(t) = \bW_K^{\top}(t)\bW_K(t)
\qquad \forall t.
\]

\end{lemma}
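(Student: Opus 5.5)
The plan mirrors the proof of Lemma~\ref{lemma1}: I will show that the difference $\bW_Q^{\top}\bW_Q-\bW_K^{\top}\bW_K$ has zero time derivative along gradient flow. Balanced initialization makes this difference zero at $t=0$, so it then stays zero for all $t$.

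\textbf{Step 1 (gradients).} I take the factor gradients from Proposition~\ref{prop2}:
\[
\nabla_{\bW_Q}\mathcal L=\boldsymbol G^{\top}\bW_K,\qquad \nabla_{\bW_K}\mathcal L=\boldsymbol G\bW_Q .
\]
These follow from $\bW_{QK}=\bW_K\bW_Q^{\top}$ by the chain rule, exactly as in Proposition~\ref{prop1}. Gradient flow then gives $\dot\bW_Q=-\boldsymbol G^{\top}\bW_K$ and $\dot\bW_K=-\boldsymbol G\bW_Q$. The overall sign convention does not matter, because the same sign appears in both updates; the same holds for any common learning rate $\eta_{QK}$. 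If the loss is a population average, $\boldsymbol G$ is simply replaced by its expectation, and the argument below is unchanged because it is linear in $\boldsymbol G$.

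\textbf{Step 2 (derivatives of the Gram matrices).} By the product rule,
\[
\tfrac{d}{dt}\big(\bW_Q^{\top}\bW_Q\big)=\dot\bW_Q^{\top}\bW_Q+\bW_Q^{\top}\dot\bW_Q=-\bW_K^{\top}\boldsymbol G\bW_Q-\bW_Q^{\top}\boldsymbol G^{\top}\bW_K ,
\]
\[
\tfrac{d}{dt}\big(\bW_K^{\top}\bW_K\big)=\dot\bW_K^{\top}\bW_K+\bW_K^{\top}\dot\bW_K=-\bW_Q^{\top}\boldsymbol G^{\top}\bW_K-\bW_K^{\top}\boldsymbol G\bW_Q .
\]
The two right-hand sides coincide term by term. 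Hence $\tfrac{d}{dt}\big(\bW_Q^{\top}\bW_Q-\bW_K^{\top}\bW_K\big)=0$.

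\textbf{Step 3 (conclusion).} The difference is therefore constant in time. Since it vanishes at $t=0$ by balanced initialization, it vanishes for all $t$.

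The only point needing care is Step 1. One must match the transposes correctly given the ordering $\bW_K\bW_Q^{\top}$, so that $\boldsymbol G^{\top}$ goes with $\bW_Q$ and $\boldsymbol G$ with $\bW_K$. Once that pairing is right, the cancellation is automatic and mirrors the output-value case.
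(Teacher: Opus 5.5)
Your proposal is correct and matches the paper's proof essentially step for step. Both arguments differentiate $\bW_Q^{\top}\bW_Q$ and $\bW_K^{\top}\bW_K$ using the factor gradients $\boldsymbol G^{\top}\bW_K$ and $\boldsymbol G\bW_Q$, note that the two derivatives coincide term by term, and conclude from balanced initialization (your explicit handling of the gradient-flow sign and the population average is a tidier presentation, not a different route).
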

\begin{proof}

\begin{align}
\frac{d}{dt}\big(\bW_Q^{\top} \bW_Q\big)
&= \bW_Q^{\top} \left(\frac{d \bW_Q}{dt}\right)
 + \left(\frac{d \bW_Q}{dt}\right)^{\top}\bW_Q \nonumber\\
&= \bW_Q^{\top} \boldsymbol{G}^{\top}\bW_K + \bW_K^{\top}G \bW_Q.
\label{eq:ov_bal_1}
\end{align}

Similarly,
\begin{align}
\frac{d}{dt}\big(\bW_K^{\top} \bW_K\big)
&= \bW_K^{\top} \left(\frac{d \bW_K}{dt}\right)
 + \left(\frac{d \bW_Q}{dt}\right)^{\top}\bW_Q \nonumber\\
&= \bW_K^{\top} \boldsymbol{G} \bW_Q + \bW_Q^{\top}G^{\top} \bW_K.
\label{eq:ov_bal1_1}
\end{align}

\begin{align}
\frac{d}{dt}\big(\bW_Q^{\top}\bW_Q\big)
&= \bW_K^{\top}\boldsymbol{G}\bW_Q + \bW_Q^{\top}\boldsymbol{G}^{\top}\bW_K,
\label{eq:qk_bal_1}\\
\frac{d}{dt}\big(\bW_K^{\top}\bW_K\big)
&= \bW_Q^{\top}\boldsymbol{G}^{\top}\bW_K + \bW_K^{\top}\boldsymbol{G}\bW_Q.
\label{eq:qk_bal_2}
\end{align}
Comparing \eqref{eq:qk_bal_1} and \eqref{eq:qk_bal_2} yields
\[
\frac{d}{dt}\left(\bW_Q^{\top}\bW_Q - \bW_K^{\top}\bW_K\right)=0.
\]
Thus the difference is constant in time and equals $0$ under balanced initialization,
so $\bW_Q^{\top}\bW_Q=\bW_K^{\top}\bW_K$ for all $t$.
\end{proof}

\vspace{0.5em}

\begin{lemma}[QK preconditioning identity]
\label{app:lemma5}
Let $\widetilde{\bW}_{QK}=\bW_K\bW_Q^{\top}$ and $\bW_{QK}$ be the collapsed parameter.
Then
\begin{align}
\operatorname{vec}\!\left(\frac{\partial \widetilde{\bW}_{QK}}{\partial t}\right)
&=
\Omega(\bW_{QK})\,
\operatorname{vec}\!\left(\frac{\partial \bW_{QK}}{\partial t}\right),
\label{eq:qk_final}
\end{align}
where
\[
\Omega(C) := (C^{\top}C)^{1/2}\otimes I_d + I_d \otimes (CC^{\top})^{1/2},
\]
and $\Omega(\alpha C)=\alpha\,\Omega(C)$.
\end{lemma}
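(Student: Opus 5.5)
The argument mirrors the proof of Lemma~\ref{lemma2}, with the pair $(\bW_K,\bW_Q)$ in place of $(\bW_O,\bW_V)$. The inputs are the gradient-flow equations of Proposition~\ref{prop2} and the conservation law of Lemma~\ref{app:lemma4}.

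\textbf{Step 1: time derivative of the product.} Differentiate $\widetilde{\bW}_{QK}=\bW_K\bW_Q^\top$ with the product rule:
\[
\frac{d\widetilde{\bW}_{QK}}{dt}=\frac{d\bW_K}{dt}\bW_Q^\top+\bW_K\Big(\frac{d\bW_Q}{dt}\Big)^\top .
\]
Substituting $\dot\bW_K=\boldsymbol G\bW_Q$ and $\dot\bW_Q=\boldsymbol G^\top\bW_K$ gives
\[
\frac{d\widetilde{\bW}_{QK}}{dt}=\boldsymbol G\,\bW_Q\bW_Q^\top+\bW_K\bW_K^\top\,\boldsymbol G .
\]
Here $\boldsymbol G$ is the descent direction, with the sign convention of \eqref{eq:prop_gf_wov}. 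The collapsed flow moves along the same $\boldsymbol G$, so $\boldsymbol G=\frac{d\bW_{QK}}{dt}$ when evaluated at the current effective matrix.

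\textbf{Step 2: express the Gram factors through the product.} The goal is
\[
\bW_K\bW_K^\top=(CC^\top)^{1/2},\qquad \bW_Q\bW_Q^\top=(C^\top C)^{1/2},\qquad C=\bW_K\bW_Q^\top .
\]
Balancedness must be in the form $\bW_K^\top\bW_K=\bW_Q^\top\bW_Q$, which is the form Lemma~\ref{app:lemma4} supplies. (I will recheck that the conserved quantity derived there is indeed this inner Gram difference, since $\bW_K\bW_Q^\top$ contracts on the second index.) Call the common matrix $P$. Then
\[
CC^\top=\bW_K\bW_Q^\top\bW_Q\bW_K^\top=\bW_K P\bW_K^\top .
\]
Take SVDs $\bW_K=U\Sigma V^\top$ and $\bW_Q=U'\Sigma' V'^\top$. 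The equality $P=V\Sigma^2V^\top=V'\Sigma'^2V'^\top$ forces $\Sigma=\Sigma'$, and lets us choose $V=V'$ within each eigenspace of $P$. Then
\[
CC^\top=U\Sigma^4U^\top=(\bW_K\bW_K^\top)^2 .
\]
Since $\bW_K\bW_K^\top$ is PSD, the unique PSD square root gives $\bW_K\bW_K^\top=(CC^\top)^{1/2}$. The symmetric computation gives $\bW_Q\bW_Q^\top=(C^\top C)^{1/2}$.

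\textbf{Step 3: vectorize and check homogeneity.} Apply $\operatorname{vec}(AXB)=(B^\top\otimes A)\operatorname{vec}(X)$ to the two terms of Step 1:
\[
\operatorname{vec}\big(\boldsymbol G(C^\top C)^{1/2}\big)=\big((C^\top C)^{1/2}\otimes I_d\big)\operatorname{vec}(\boldsymbol G),
\qquad
\operatorname{vec}\big((CC^\top)^{1/2}\boldsymbol G\big)=\big(I_d\otimes(CC^\top)^{1/2}\big)\operatorname{vec}(\boldsymbol G).
\]
Summing yields $\Omega(C)\operatorname{vec}(\boldsymbol G)$, which is \eqref{eq:qk_final}. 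For homogeneity, $((\alpha C)^\top\alpha C)^{1/2}=|\alpha|(C^\top C)^{1/2}$, so $\Omega(\alpha C)=\alpha\,\Omega(C)$ for $\alpha\ge0$. The statement's version should be read with that restriction.

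\textbf{Main obstacle.} The hard part is Step 2, specifically matching the form of the balancedness invariant to the ordering in $\bW_K\bW_Q^\top$. The invariant must equate the Gram matrices on the shared inner dimension, and the singular-vector alignment argument needs care when $P$ has repeated eigenvalues or is rank-deficient. If the stated invariant turns out to be the wrong one, I would first rederive it directly from $\dot\bW_K=\boldsymbol G\bW_Q$ and $\dot\bW_Q=\boldsymbol G^\top\bW_K$. Both $\frac{d}{dt}\bW_K^\top\bW_K$ and $\frac{d}{dt}\bW_Q^\top\bW_Q$ equal $\bW_K^\top\boldsymbol G\bW_Q+\bW_Q^\top\boldsymbol G^\top\bW_K$, so their difference is conserved. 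This confirms the needed form without relying on Lemma~\ref{app:lemma4}'s intermediate steps.
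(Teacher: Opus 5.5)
Your proposal is correct and follows the paper's proof essentially step for step: product rule with the flows of Proposition~\ref{prop2}, the balancedness invariant $\bW_K^\top\bW_K=\bW_Q^\top\bW_Q$ of Lemma~\ref{app:lemma4} to identify $\bW_K\bW_K^\top$ and $\bW_Q\bW_Q^\top$ as PSD square roots, then vectorization. Your restriction of the homogeneity claim to $\alpha\ge 0$ is a correct refinement of the statement, and the SVD alignment you flag as the main obstacle is unnecessary, because substituting $P=\bW_K^\top\bW_K$ (resp.\ $P=\bW_Q^\top\bW_Q$) directly gives $CC^\top=(\bW_K\bW_K^\top)^2$ and $C^\top C=(\bW_Q\bW_Q^\top)^2$.
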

\begin{proof}

Using Lemma \ref{app:lemma4}, $\bW_Q^{\top}\bW_Q=\bW_K^{\top}\bW_K$, SVD argument of the proof of Lemma \ref{lemma2}, we obtain
\[
\bW_Q\bW_Q^{\top}=(\bW_{QK}^{\top}\bW_{QK})^{1/2},
\qquad
\bW_K\bW_K^{\top}=(\bW_{QK}\bW_{QK}^{\top})^{1/2}.
\]

Using the product rule,
\[
\frac{d \widetilde{\bW}_{QK}}{dt}
=\frac{d \bW_K}{dt}\bW_Q^{\top} + \bW_K\left(\frac{d \bW_Q}{dt}\right)^{\top}
=\boldsymbol{G}\bW_Q\bW_Q^{\top} + \bW_K\bW_K^{\top}\boldsymbol{G}.
\]
Thus,
\begin{align}
\frac{d \widetilde{\bW}_{QK}}{dt}
&=
\boldsymbol{G}(\bW_{QK}^{\top}\bW_{QK})^{1/2}
+
(\bW_{QK}\bW_{QK}^{\top})^{1/2}\boldsymbol{G}.
\label{eq:qk_pseudo_update_app}
\end{align}
Vectorizing \eqref{eq:qk_pseudo_update_app} and using $vec(AB)=(B^{\top}\otimes I)vec(A)$,
\begin{align*}
vec\!\left(\frac{d \widetilde{\bW}_{QK}}{dt}\right)
&=
\Big((\bW_{QK}^{\top}\bW_{QK})^{1/2}\otimes I_d
+ I_d\otimes(\bW_{QK}\bW_{QK}^{\top})^{1/2}\Big)vec(\boldsymbol{G}).
\end{align*}
Under gradient flow, \eqref{eq:qk_grad_app1} implies
\[
vec\!\left(\frac{d \widetilde{\bW}_{QK}}{dt}\right)
=
\Omega(\bW_{QK})\,
vec\!\left(\frac{d \bW_{QK}}{dt}\right),
\]
which is \eqref{eq:qk_final}.
\end{proof}

\subsection{Additional Results DTAP Heatmaps and Multilayer Setting Results}
 \label{add:results}

 \begin{table*}[!ht]
\centering
\caption{Train and test performance together with attention-based interpretability metrics on real-world datasets. $\uparrow$ indicates higher is better, while $\downarrow$ indicates lower is better. All results are reported as mean over 5 random seeds. We report 95\% confidence intervals over five random seeds in Table \ref{tab:hx_mul_layers_STD}.}
\label{tab:realdata_results1}
\small
\begin{tabular}{|l|l|c|c|c|c|c|c|c|}
\hline
\textbf{Dataset} & \textbf{Setting}
& \begin{tabular}[c]{@{}c@{}}\textbf{Train}\\ \textbf{Performance} $\uparrow$\end{tabular}
& \begin{tabular}[c]{@{}c@{}}\textbf{Test}\\ \textbf{Performance} $\uparrow$\end{tabular}
& \textbf{AC} $\uparrow$
& \textbf{ACMC } $\uparrow$
& \begin{tabular}[c]{@{}c@{}}\textbf{MRTA}\\ \textbf{(Rollout)} $\uparrow$\end{tabular}
& \textbf{Suff.} $\downarrow$
& \textbf{Compr. } $\uparrow$ \\
\hline

HateXplain 
& Baseline
& $94.58$
& $58.12$
&  $4.54$
& $0.00$
& $0.08$
& $0.49$
& $0.22$\\
( 2-Layers ) & Faster QK 
& $96.10$
& $57.91$
& $9.86$
& $5.23$
& $0.15$
& $0.32$
& $0.35$\\
\hline

HateXplain 
& Baseline
& $94.29$
& $57.84$ 
& $8.89$
& $0.00$
& $0.1$
& $0.53$
& $0.21$\\
(4-Layers) & Faster QK
& $95.89$
& $58.51$
&  $12.89$
& $6.45$
&  $0.16$
&  $0.44$
&  $0.26$ \\
\hline
\end{tabular}

\end{table*}

 \begin{table*}[!ht]
\centering
\caption{In this table we report 95\% CI as $\pm$ values for each metric for Table \ref{tab:realdata_results1}.  }
\label{tab:hx_mul_layers_STD}
\small
\begin{tabular}{|l|l|c|c|c|c|c|c|c|}
\hline
\textbf{Dataset} & \textbf{Setting}
& \begin{tabular}[c]{@{}c@{}}\textbf{Train}\\ \textbf{Performance} $\uparrow$\end{tabular}
& \begin{tabular}[c]{@{}c@{}}\textbf{Test}\\ \textbf{Performance} $\uparrow$\end{tabular}
& \textbf{AC} $\uparrow$
& \textbf{ACMC } $\uparrow$
& \begin{tabular}[c]{@{}c@{}}\textbf{MRTA}\\ \textbf{(Rollout)} $\uparrow$\end{tabular}
& \textbf{Suff.} $\downarrow$
& \textbf{Compr. } $\uparrow$ \\
\hline

HateXplain 
& Baseline
& $0.002$
& $0.004$
&  $0.80$
& $0.00$
& $0.007$
& $0.07$
& $0.03$\\
( 2-Layers ) & Faster QK 
& $0.006$
& $0.02$
& $1.20$
& $1.39$
& $0.005$
& $0.1$
& $0.05$\\
\hline

HateXplain 
& Baseline
& $0.008$
& $0.009$ 
& $1.45$
& $0.00$
& $0.001$
& $0.06$
& $0.01$\\
(4-Layers) & Faster QK
& $0.74$
& $0.01$
&  $2.39$
& $2.12$
&  $0.025$
&  $0.07$
&  $0.03$ \\
\hline
\end{tabular}

\end{table*}

\begin{figure*}[!ht]
  \centering
     \begin{subfigure}[b]{0.22\textwidth}
    \includegraphics[width=1.1\textwidth]{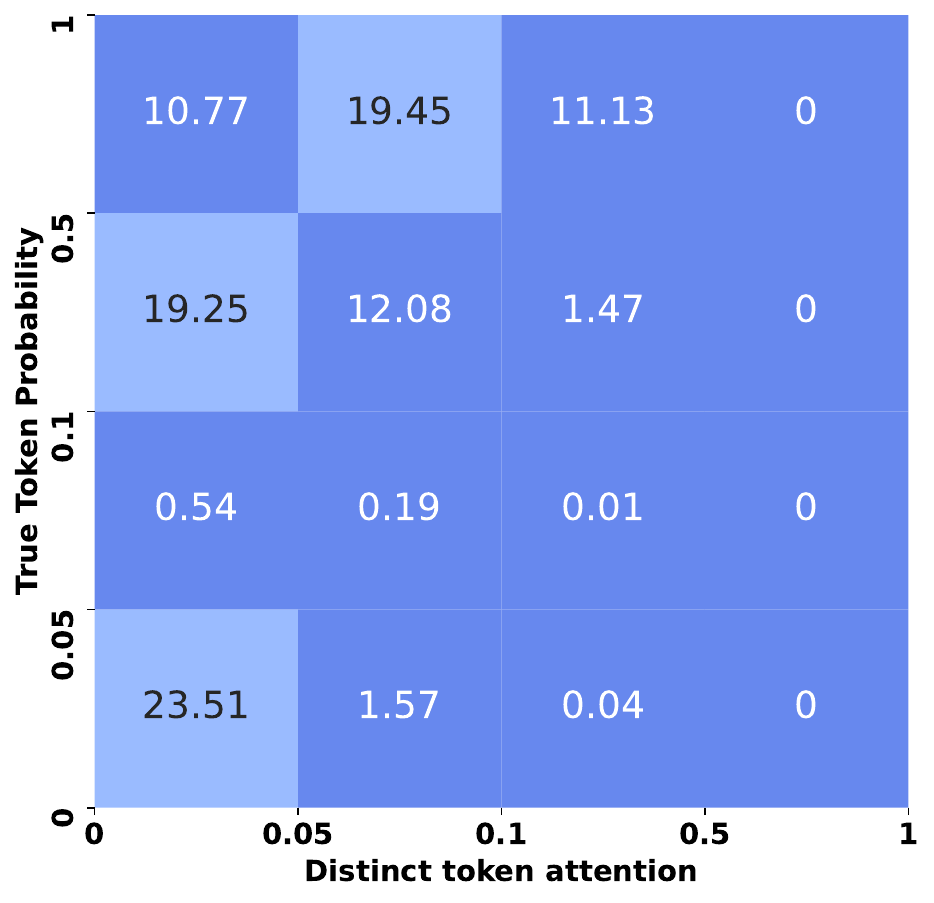}
       \caption{ Baseline Average}
        \label{app:my_label1}
    \end{subfigure}
    \hspace{0.15cm}
   \begin{subfigure}[b]{0.22\textwidth}
    \includegraphics[width=1.1\textwidth]{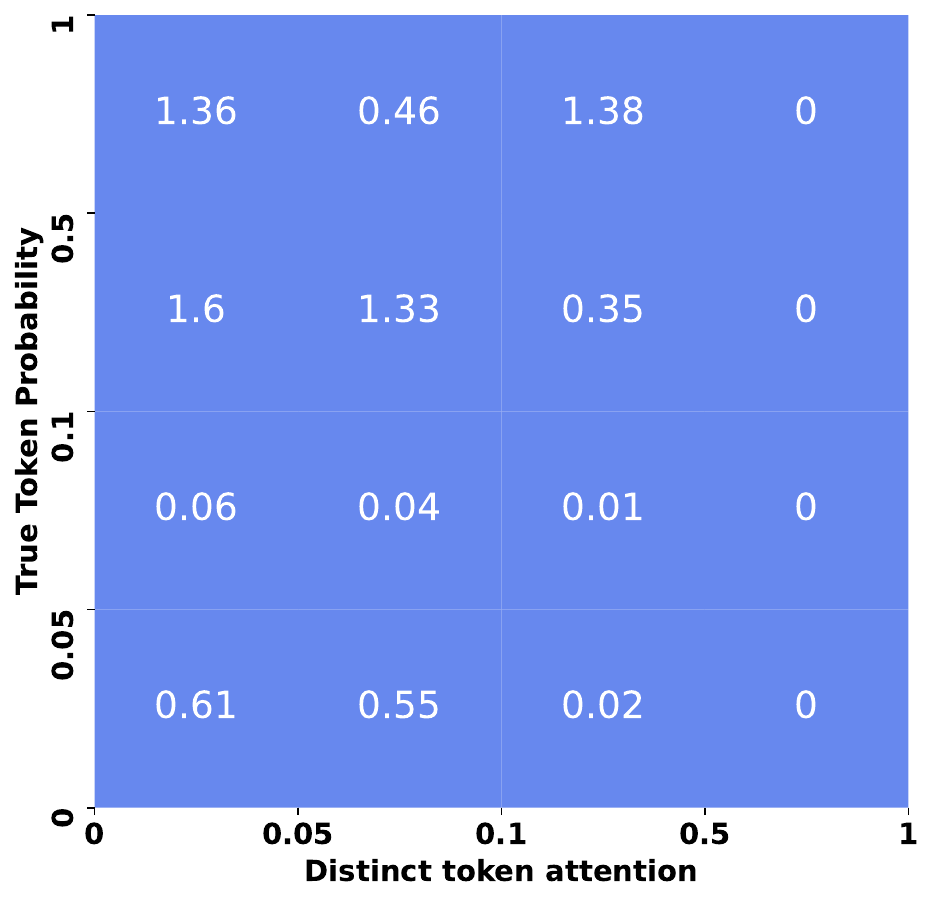}
       \caption{ Baseline STD Dev}
        \label{app:my_label2}
    \end{subfigure}
    \hspace{0.15cm}
   \begin{subfigure}[b]{0.22\textwidth}
    \includegraphics[width=1.1\textwidth]{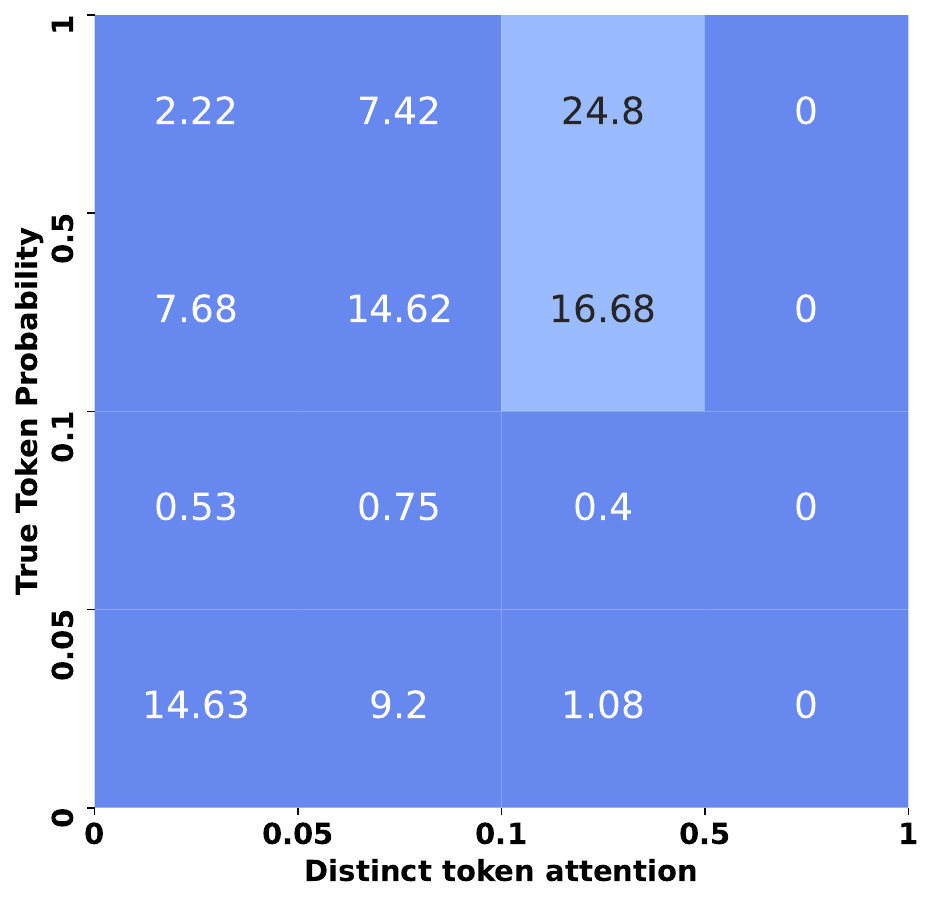}
       \caption{
 Faster QK Average}
        \label{app:my_label3}
    \end{subfigure}
    \hspace{0.15cm}
   \begin{subfigure}[b]{0.22\textwidth}
    \includegraphics[width=1.1\textwidth]{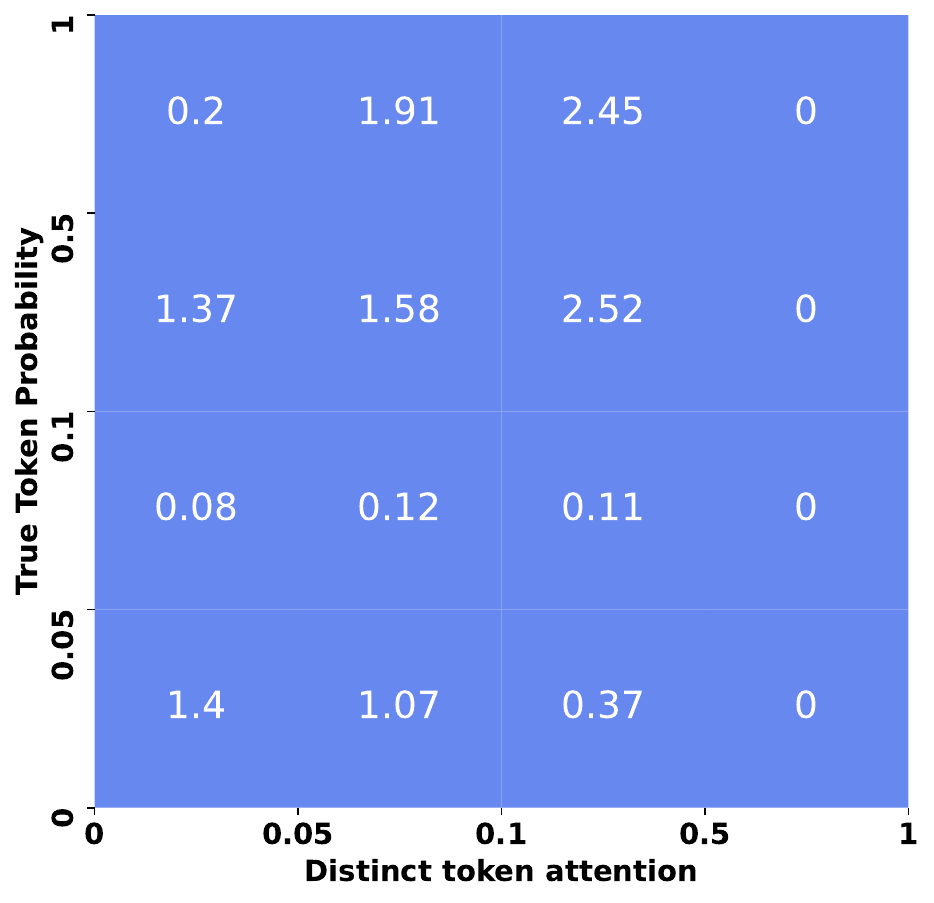}
       \caption{
Faster QK STD Dev}
        \label{app:my_label4}
    \end{subfigure}

    \caption{ $6$ layer model Distinct Token Attention-Prediction heat map of SQuAD QA Data (a)\-(b) Baseline Model Average and Standard Deviation (c)\-(d) Faster\-QK Model Average and Standard Deviation (5 runs)}
    \label{app:my_label}

\end{figure*}

\begin{figure*}[!ht]
  \centering
     \begin{subfigure}[b]{0.22\textwidth}
    \includegraphics[width=1.1\textwidth]{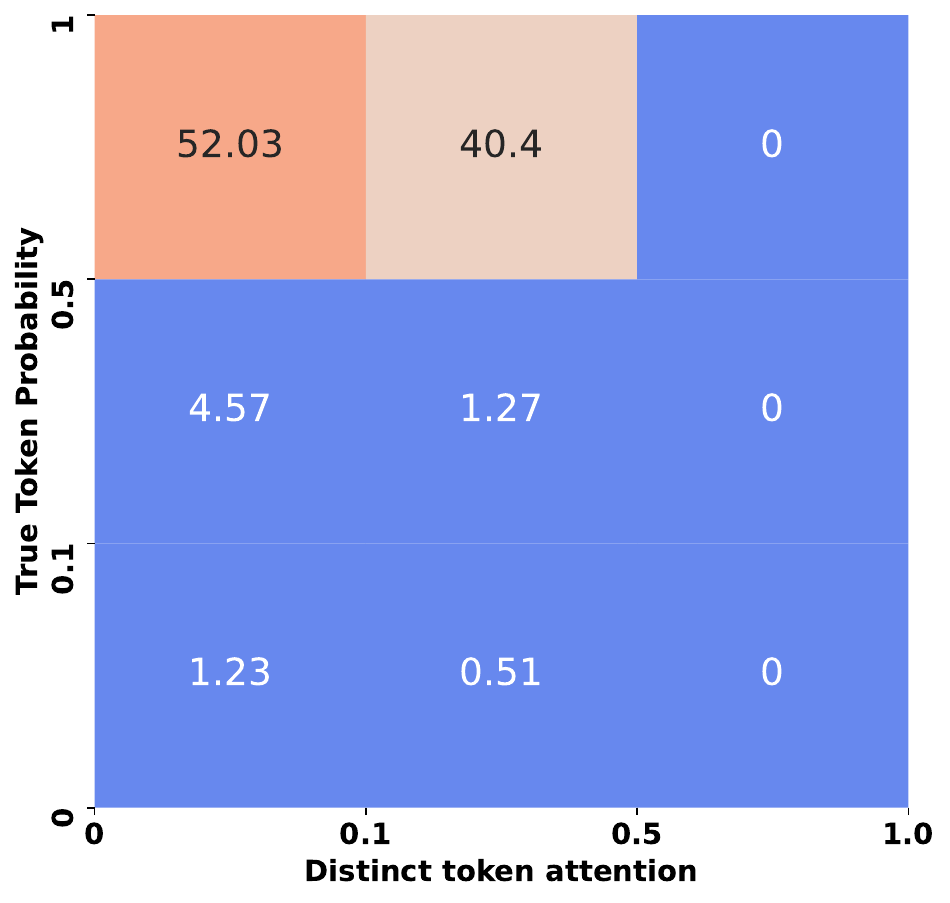}
       \caption{ Baseline Average}
        \label{app1:my_label1}
    \end{subfigure}
    \hspace{0.15cm}
   \begin{subfigure}[b]{0.22\textwidth}
    \includegraphics[width=1.1\textwidth]{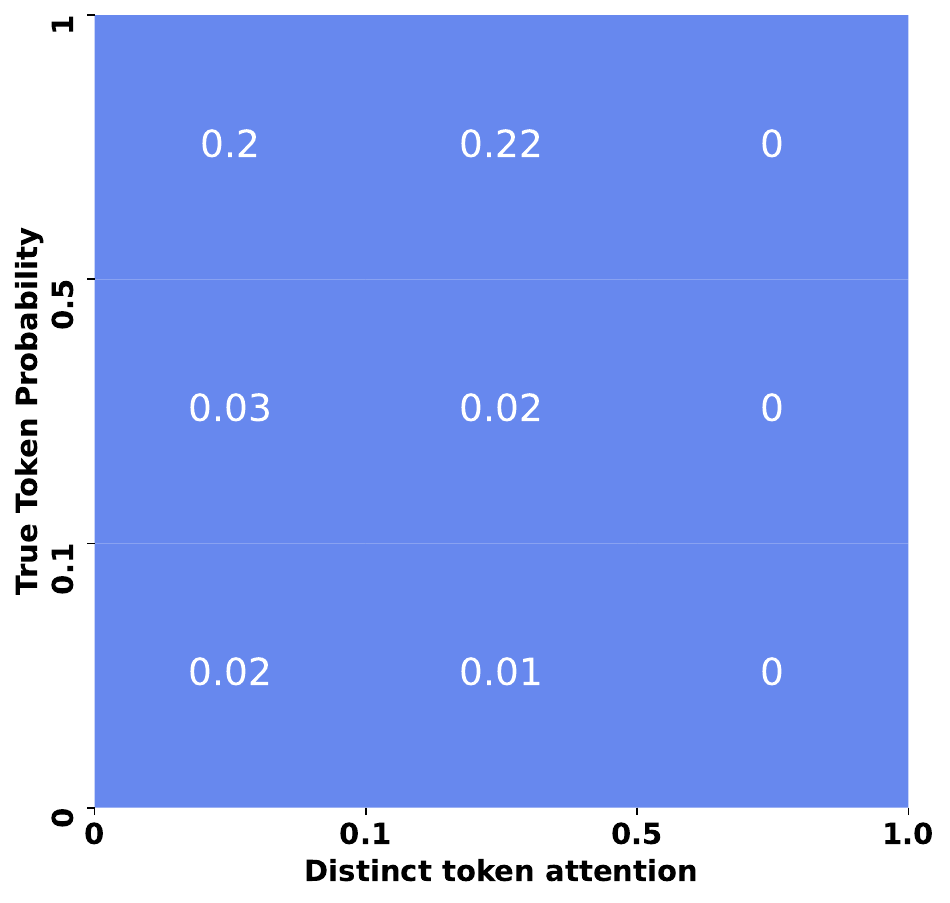}
       \caption{ Baseline STD Dev}
        \label{app1:my_label2}
    \end{subfigure}
    \hspace{0.15cm}
   \begin{subfigure}[b]{0.22\textwidth}
    \includegraphics[width=1.1\textwidth]{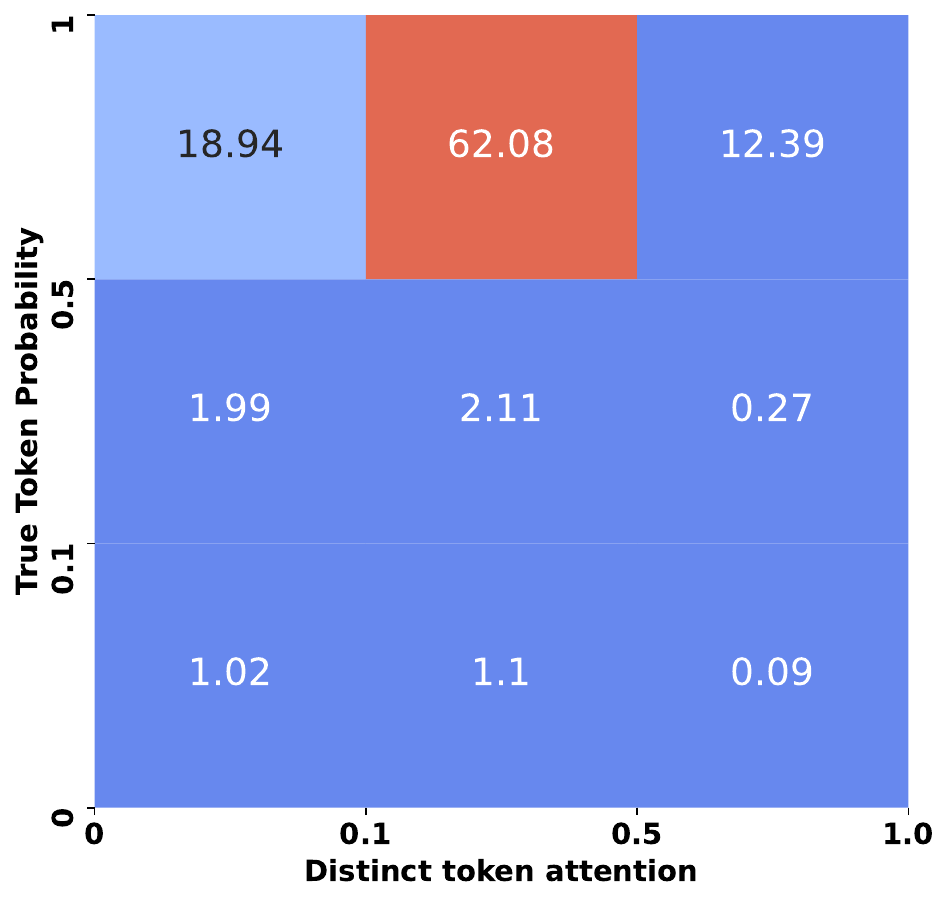}
       \caption{
 Faster QK Average}
        \label{app1:my_label3}
    \end{subfigure}
    \hspace{0.15cm}
   \begin{subfigure}[b]{0.22\textwidth}
    \includegraphics[width=1.1\textwidth]{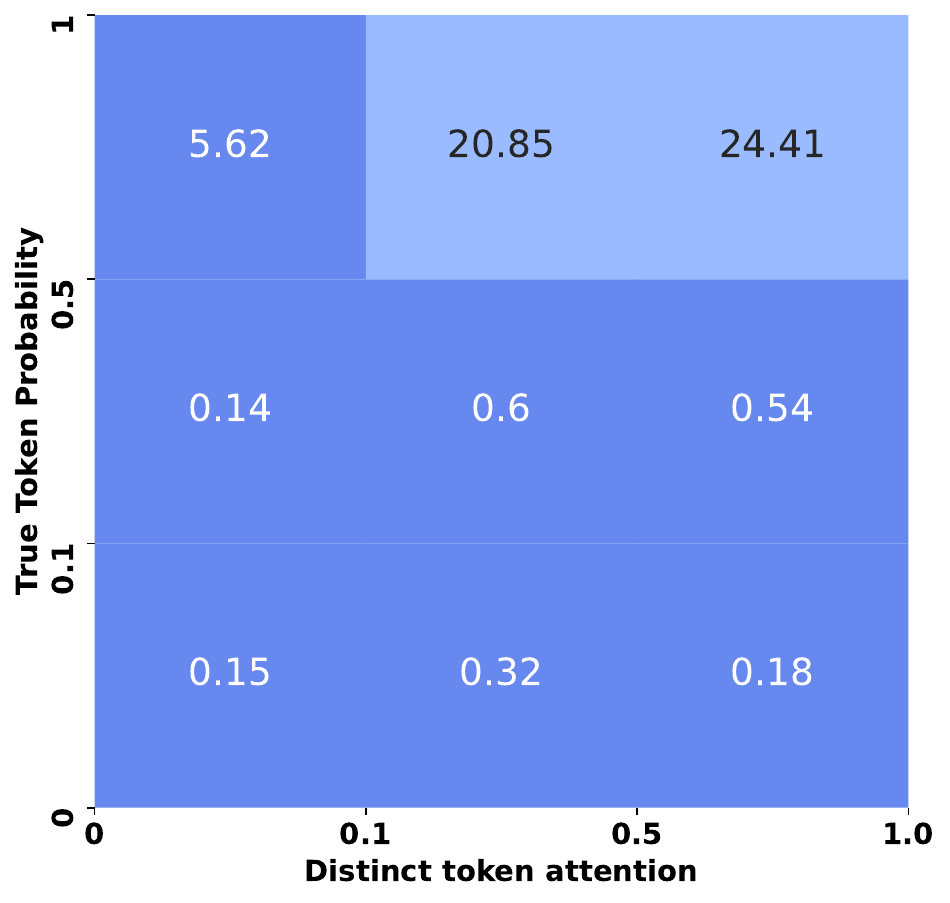}
       \caption{
Faster QK STD Dev}
        \label{app1:my_label4}
    \end{subfigure}

    \caption{ $1$ layer model Distinct Token Attention-Prediction heat map of SVA data (a)\-(b) Baseline Model Average and Standard Deviation (c)\-(d) Faster QK Model Average and Standard Deviation (5 runs)}
    \label{app1:my_label}

\end{figure*}

\begin{figure*}[!ht]
  \centering
     \begin{subfigure}[b]{0.22\textwidth}
    \includegraphics[width=1.1\textwidth]{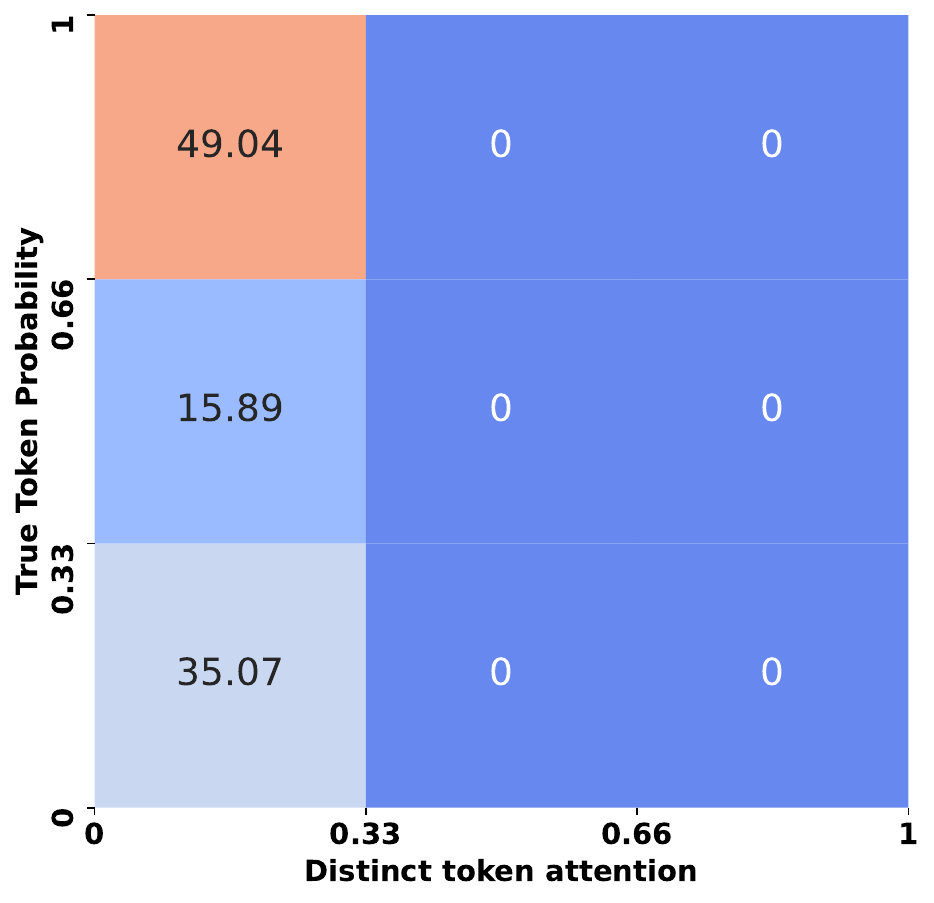}
       \caption{ Baseline Average}
        \label{app2:my_label1}
    \end{subfigure}
    \hspace{0.15cm}
   \begin{subfigure}[b]{0.22\textwidth}
    \includegraphics[width=1.1\textwidth]{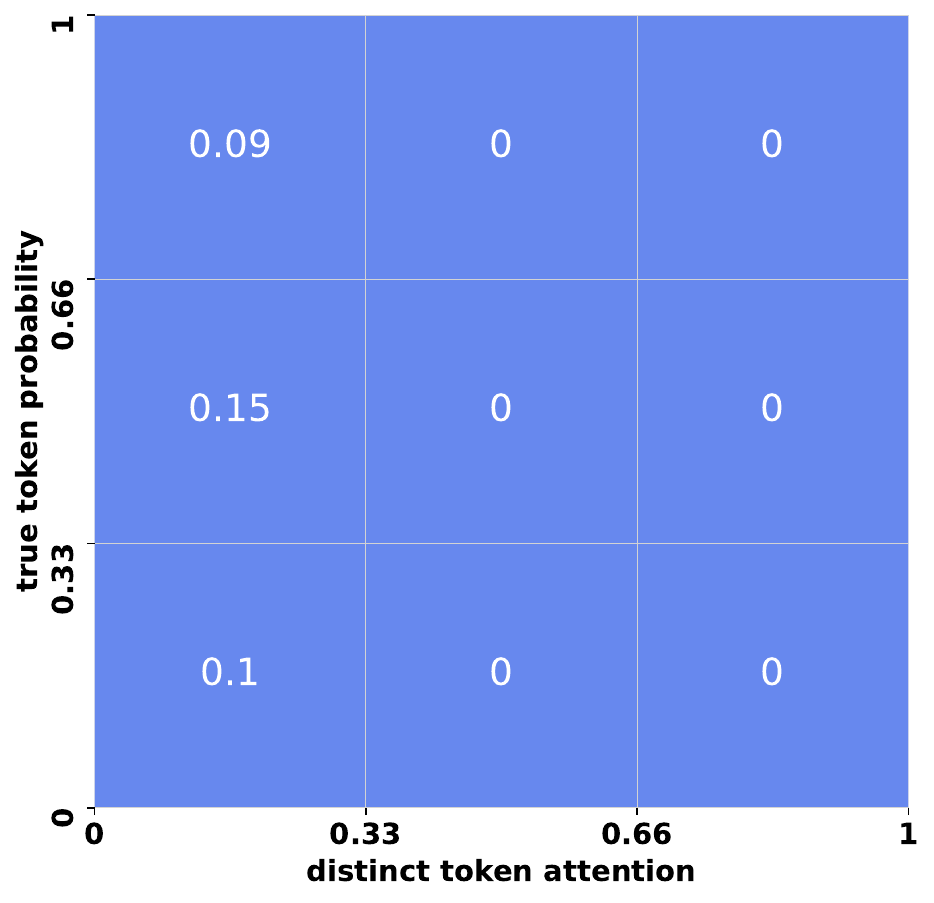}
       \caption{ Baseline STD Dev}
        \label{app2:my_label2}
    \end{subfigure}
    \hspace{0.15cm}
   \begin{subfigure}[b]{0.22\textwidth}
    \includegraphics[width=1.1\textwidth]{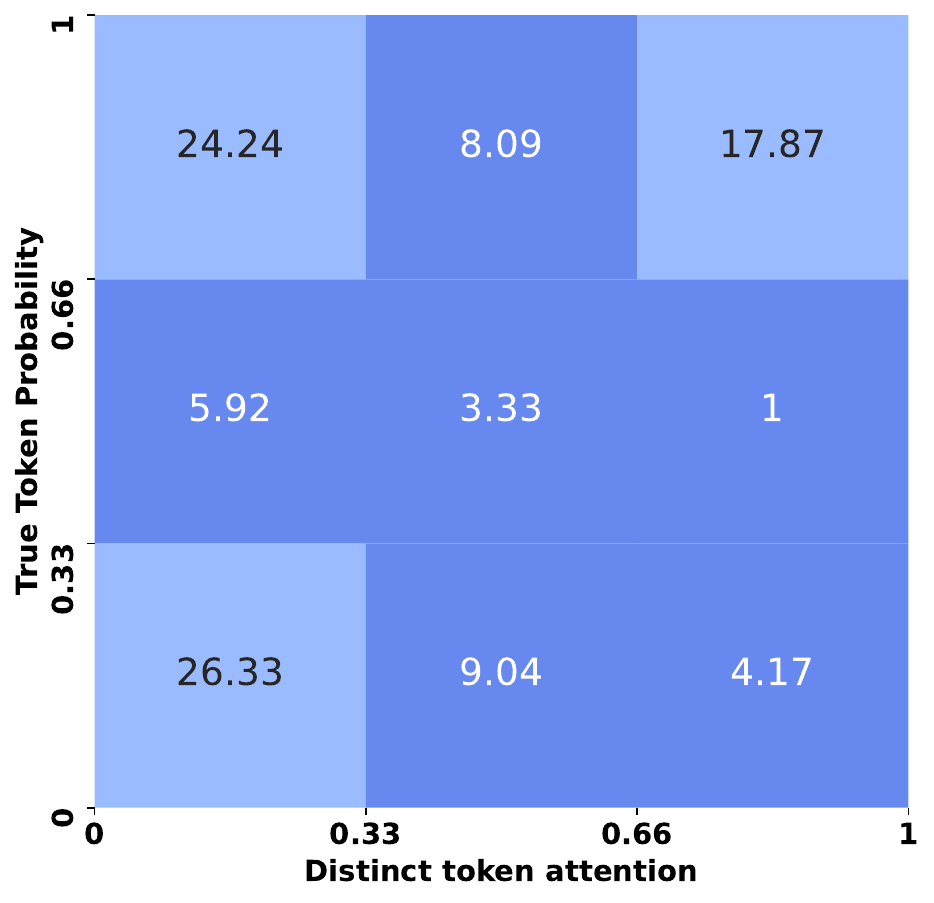}
       \caption{
 Faster QK Average}
        \label{app2:my_label3}
    \end{subfigure}
    \hspace{0.15cm}
   \begin{subfigure}[b]{0.22\textwidth}
    \includegraphics[width=1.1\textwidth]{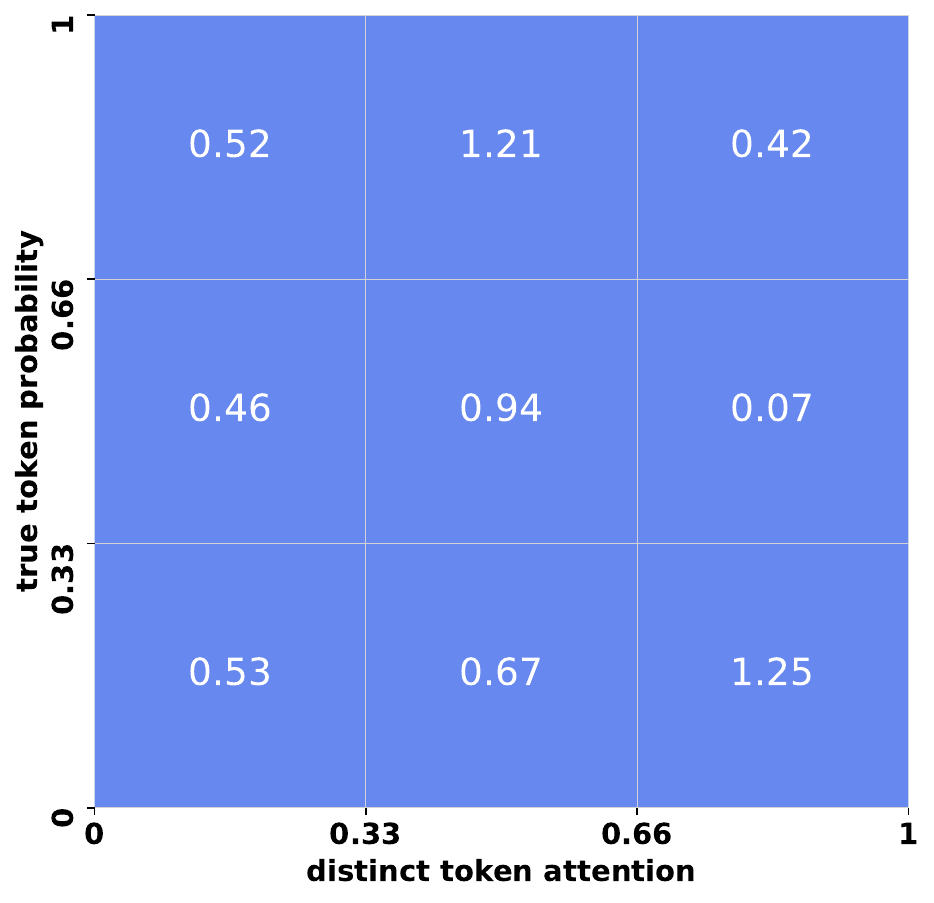}
       \caption{
Faster QK STD Dev}
        \label{app2:my_label4}
    \end{subfigure}

    \caption{ $1$ layer model Distinct Token Attention-Prediction heat map of HateXplain data (a)\-(b) Baseline Model Average and Standard Deviation (c)\-(d) Faster QK Model Average and Standard Deviation (5 runs)}
    \label{app2:my_label}

\end{figure*}

\subsection{Detailed Metric Discussion }
\label{sec:metrics}

We adopt the definitions of sufficiency and comprehensiveness from \citep{deyoung-etal-2020-eraser}.

\textbf{Sufficiency}: Sufficiency captures whether the important tokens are sufficient to retain the original prediction.

\[ \textrm{Sufficiency} = f(x) - f(r_{:k\%}) \],

where $r:k\%$ denotes the top-$k\%$ most important tokens ranked by attention score.

\textbf{Comprehensiveness}: Comprehensiveness measures the change in the predicted-class probability after removing important tokens.

\[ \textrm{Comprehensiveness} = f(x) - f(x/r_{:k\%})\]

where $r_{:k\%}$ refers to top-k\% most important tokens chosen based on attention scores.

\textbf{Mean Relevant Token Attention (MRTA)}: Using the learned attention vector, we compute the average attention mass assigned to relevant tokens for an instance. This metric can be computed exactly using DTAP heatmaps, if the full attention distribution is known. 

\textbf{Attention Confidence (AC)}: We compute fraction of instances where attention confidence is high. This refers to right-half column sum in DTAP heatmap. For example: Fig~\ref{app:my_label}a for baseline model we will have AC value $11.13 + 1.47 + 0.01 + 0.04 = 12.65$ (Table \ref{tab:realdata_results} column $5$ row $1$). 

\textbf{Attention Confidence Model Confidence (ACMC)}: We compute fraction of instances where attention confidence is high as well as prediction probability is high. This refers to right-half columns and top-half rows sum in DTAP heatmap. For example: Fig~\ref{app:my_label}a for baseline model we will have ACMC value $11.13 + 1.47  = 12.6$ (Table \ref{tab:realdata_results} column $5$ row $1$).

\end{document}